\def \loo {linear optimization oracle\xspace }
\def \hf {\hat{f}}
\def \tse {\sum_{t=s}^e}
\def \k {\kappa}
\def \ko {\kappa_1}
\def \kt {\kappa_2}
\def \y {\mathbf{y}}
\def \E {\mathbb{E}}
\def \x {\mathbf{x}}
\def \g {\mathbf{g}}
\def \gn {\|\g\|}
\def \gm {\gamma}
\def \z {\mathbf{z}}
\def \u {\mathbf{u}}
\def \H {\mathbb{H}}
\def \w {\mathbf{w}}
\def \R {\mathbb{R}}
\def \expmap {{\textnormal{Exp}}}
\def \invexp {\textnormal{Exp}^{-1}}
\def \expy {{\textnormal{Exp}_{\mathbf{y}}}}
\def \iexpy {{\textnormal{Exp}^{-1}_{\mathbf{y}}}}
\def \d {\delta}
\def \dr {\delta r}
\def \tr {\tilde{r}}
\def \br {\bar{r}}
\def \dbr {\delta\bar{r}}
\def \tJ {\tilde{J}}
\def \G {\Gamma}
\def \N {\mathbb{N}}
\def \q {\mathbf{q}}
\def \v {\mathbf{v}}
\def \M {\mathcal{M}}
\def \p {\mathbf{p}}
\def \q {\mathbf{q}}
\def \r {\mathbf{r}}
\def \bp {\bar{\p}}
\def \br {\bar{r}}
\def \bz {\bar{\z}}
\def \xt {\widetilde{\x}}
\def \af {\alpha}
\def \Bpr {\mathbb{B}_{\p}(r)}
\def \BpR {\mathbb{B}_{\p}(R)}
\def \B {\mathbb{B}}
\def \S {\mathbb{S}}
\def \zt {\widetilde{\z}}
\def \M {\mathcal{M}}
\def \N {\mathcal{N}}
\def \rg {\mathsf{grad}\kern 0.1em}
\def \rh {\mathsf{Hess}\kern 0.1em}
\def \O {\mathcal{o}}
\def \K {\mathcal{K}}
\def \tK {\tilde{\K}}
\def \T {\mathcal{T}}
\def \xt {\widetilde{x}}
\def \la {\left\langle}
\def \ra {\right\rangle}
\def \bx {\bar{\x}}
\def \by {\bar{\y}}
\def \lc {\lceil}
\def \rc {\rceil}
\def \lt {\left}
\def \rt {\right}
\def \sumt {\sum_{t=1}^T}
\def \sumse {\sum_{t=s}^e}
\def \sumst {\sum_{t=2}^T}
\def \eps {\epsilon}
\def \tx {\tilde{\x}}
\def \ty {\tilde{\y}}
\def \tz {\tilde{\z}}
\def \xt {\tilde{\x}}
\def \yt {\tilde{\y}}
\def \O {\mathcal{O}_{IP}}
\newtheorem{ass}{Assumption}
\newtheorem{thm}{Theorem}
\newtheorem{lemma}{Lemma}
\newtheorem{defn}{Definition}
\newtheorem{remark}{Remark}
\DeclareMathOperator\arctanh{arctanh}
\DeclareMathOperator\arcosh{arcosh}
\DeclareMathOperator*{\argmin}{argmin}
\newcommand{\RomanNumeralCaps}[1]
    {\MakeUppercase{\romannumeral #1}}
\title{Riemannian Projection-free Online Learning}
\author{%
Zihao Hu$^{\dagger}$,  Guanghui Wang$^{\dagger}$, Jacob Abernethy$^{\dagger,\star}$\\
College of Computing, Georgia Institute of Technology$^{\dagger}$\\
Google Research$^{\star}$\\
   \texttt{\{zihaohu,gwang369\}@gatech.edu}, \texttt{abernethyj@google.com} 
}
\begin{document}

\DontPrintSemicolon
\maketitle
\begin{abstract}
The projection operation is a critical component in a wide range of optimization algorithms, such as online gradient descent (OGD), for enforcing constraints and achieving optimal regret bounds. However, it suffers from computational complexity limitations in high-dimensional settings or when dealing with ill-conditioned constraint sets. \emph{Projection-free} algorithms address this issue by replacing the projection oracle with more efficient optimization subroutines. But to date, these methods have been developed primarily in the \textit{Euclidean setting}, and while there has been growing interest in optimization on \textit{Riemannian manifolds}, there has been essentially no work in trying to utilize projection-free tools here. An apparent issue is that non-trivial affine functions are generally non-convex in such domains. In this paper, we present methods for obtaining sub-linear regret guarantees in \emph{online geodesically convex optimization} on curved spaces for two scenarios: when we have access to (a) a \emph{separation oracle} or (b) a \emph{linear optimization oracle}. For geodesically convex losses, and when a separation oracle is available, our algorithms achieve $O(T^{\sfrac{1}{2}})$, $O(T^{\sfrac{3}{4}})$ and $O(T^{\sfrac{1}{2}})$ adaptive regret guarantees in the full information setting, the bandit setting with one-point feedback and the bandit setting with two-point feedback, respectively. When a linear optimization oracle is available, we obtain regret rates of $O(T^{\sfrac{3}{4}})$ for geodesically convex losses and $O(T^{\sfrac{2}{3}}\log T)$ for strongly geodesically convex losses.

\end{abstract}

\section{Introduction}

Online convex optimization (OCO) offers a framework for modeling sequential decision-making problems  \citep{hazan2016introduction}. The standard setting depicts the learning process as a zero-sum game between a learner and an adversary. At round $t$, the learner selects a decision $\x_t$ from a convex set $\K$ and observes the encountered convex loss function $f_t$. The learner's goal is to minimize \emph{regret}, defined as
\begin{equation*}
    \text{Regret}_T\coloneqq\sumt f_t(\x_t)-\min_{\x\in\K}\sumt f_t(\x).
\end{equation*}
In Euclidean space, OCO boasts a robust theoretical foundation and numerous real-world applications, such as online load balancing \citep{molinaro2017online}, optimal control \citep{li2019online}, revenue maximization \citep{lin2019competitive}, and portfolio management \citep{jezequel2022efficient}. The standard approach for OCO is online gradient descent (OGD), which performs
\begin{equation*}
    \x_{t+1}=\Pi_{\K}(\x_t-\eta_t\nabla f_t(\x_t)),
\end{equation*}
where $\Pi_\K$ represents the orthogonal projection onto $\K$, ensuring the sequence $\{\x_t\}_{t=1}^T$ remains feasible. However, the projection operation can be computationally expensive in high-dimensional or complex feasible sets. Projection-free online learning provides a reasonable way to handle this situation. At the heart of this approach is the understanding that, in many cases, the complexity of implementing an optimization oracle can be significantly lower than that of the orthogonal projection. As a result, both practical and theoretical interests lie in replacing the projection operation with these optimization oracles. The two most well-known projection-free optimization oracles are:
\[
\text{(Linear optimization oracle) } \argmin_{\x\in\K}\la\g,\x\ra
\]
and
\[
\text{(Separation oracle) } \g: \; \la \y-\x,\g\ra>0,\quad\forall\x\in\K.
\]
    
Whereas much attention has been given to the development of sub-linear regret guarantees using optimization oracles in Euclidean space \citep{hazan2012projection,levy2019projection,hazan2020faster,wan2022online, garber2022new,mhammedi2022efficient,pmlr-v195-wan23a}, there is considerably less research on projection-free optimization on \textit{Riemannian manifolds}. There are numerous scenarios where the space of feasible parameters is not a convex subset of Euclidean space, but instead has a manifold structure with a Riemannian metric; examples include non-negative PCA \citep{montanari2015non}, $K$-means clustering \citep{carson2017manifold} and computing Wasserstein-Barycenters \citep{weber2022riemannian}. However, there has been significantly less research on efficiently computing projections or projection-free optimization
in the Riemannian setting, even for highly-structured geodesically convex (gsc-convex) feasible sets like:
\[
\K=\{\x|\max_{1\leq i\leq m}{h_i(\x)}\}
\]
where each $h_i(\x)$ is gsc-convex. In this paper we focus on projection-free optimization on Riemannian manifolds in the online setting. A Riemannian version of Online Gradient Descent (R-OGD) was given by \citet{wang2021no}:
\begin{equation*}
    \x_{t+1}=\Pi_{\K}\expmap_{\x_t}\lt(-\eta_t\nabla f_t(\x_t)\rt),
\end{equation*}
where $\K$ is now a gsc-convex set, and $\Pi_{\K}$ is the metric projection onto $\K$. This method achieves sub-linear regret guarantees for Riemannian OCO. The metric projection is often the most expensive operation, and it is not always clear how to  implement it on a manifold. In the present work, we propose the use of two alternative operations: a separation oracle (SO) or a linear optimization oracle (LOO) on the Riemannian manifold, with definitions deferred to Section \ref{sec:prelim}. It is important to note that, in Euclidean space, both oracles rely on the definition of a hyperplane. In the realm of Riemannian manifolds, there are two natural extensions of a hyperplane: one is the sub-level set of a Busemann function, known as a horosphere \citep{bridson2013metric}; the other relies on the inverse exponential map, as outlined in \eqref{eq:so}. While a horosphere is gsc-convex, the corresponding separation oracle exists if  every
 boundary point of the feasible gsc-convex set has a locally supporting horosphere \citep{borisenko2002convex}. The existence of a separation oracle is  assured for any gsc-convex sets on Hadamard manifolds \citep{silva2022fenchel}, if defined via the inverse exponential map. Hence, we adopt this latter definition. It is worth noting that this object is typically non-convex \citep{kristaly2016convexities}, leading to geometric complexities that necessitate careful management. Also, our work builds upon the findings of \citet{garber2022new} and inherits the adaptive regret guarantee for gsc-convex losses, as defined by \citet{hazan2009efficient}:
\begin{equation*}
    \textstyle\text{Adaptive Regret}_T\coloneqq\sup_{[s,e]\subseteq[T]}\lt\{\tse f_t(\x_t)-\min_{\x\in\K}\tse f_t(\x)\rt\}.
\end{equation*}



\begin{table*}[t]\label{tb:table1}
    \centering
        \caption{Summary of main results. Our approach allows us to invoke either a Separation Oracle (SO) or a Linear Optimization Oracle (LOO) for $O(T)$ times throughout the $T$ rounds. Notably, our results match those presented by \citet{garber2022new}.}
\scalebox{0.93}{\begin{tabular}{ccccc}
\toprule[1pt]
Oracle &Losses& Feedback & Measure  & Regret \\
\hline 
\multirow{3}{*}{~} & gsc-convex  & full information & adaptive regret & $O(T^{\sfrac{1}{2}})$, Thm. \ref{thm:ocoso} \\
		SO & gsc-convex & bandit, one-point & expected adaptive regret
   & $O(T^{\sfrac{3}{4}})$, Thm.  \ref{thm:bcoso}\\
   ~ & gsc-convex & bandit, two-point & expected adaptive regret
   & $O(T^{\sfrac{1}{2}})$, Thm.  \ref{thm:bco2}\\
  \hline 
  \multirow{2}*{LOO} & gsc-convex  & full information & adaptive regret & $O(T^{\sfrac{3}{4}})$, Thm.  \ref{thm:ocoloo1} \\
		~ & strongly gsc-convex & full information & regret & $O(T^{\sfrac{2}{3}}\log T)$, Thm.  \ref{thm:ocoloo2}\\
\bottomrule[1pt]
\end{tabular}}
\end{table*}

Our main contributions are summarized in Table \ref{tb:table1}. More specifically,
\begin{itemize}[leftmargin=*]
    \item Given a separation oracle, we attain adaptive regret bounds of $O(T^{\sfrac{1}{2}})$, $O(T^{\sfrac{3}{4}})$ and $O(T^{\sfrac{1}{2}})$ for gsc-convex losses in the full information setting, the bandit convex optimization setting with one-point feedback\footnote{For the one-point feedback setting, we allow the point at which we play and the point where we receive feedback to differ, thereby bypassing a fundamental challenge posed by Riemannian geometry. In the case of two-point feedback, however, we eliminate this non-standard setting.}, and the bandit convex optimization setting with two-point feedback, respectively.
    \item Assuming access to a linear optimization oracle, we provide algorithms that enjoy $O(T^{\sfrac{3}{4}})$ adaptive regret for gsc-convex losses and $O(T^{\sfrac{2}{3}}\log T)$ regret for strongly gsc-convex losses.
    \item We highlight some key differences between convex sets on a Hadamard manifold and in Euclidean space. In particular, shrinking a gsc-convex set towards an interior point does not preserve convexity, and the Minkowski functional on a Hadamard manifold is non-convex. These results, which may not be well-known within the machine learning community, could be of independent interest.
\end{itemize}

The technical challenges of this paper can be divided into two parts.

Firstly, for the separation oracle, we need to bound the "thickness" of part of the feasible set cut by the separating hyperplane. While in Euclidean space, this can be achieved through a convex combination argument \citep{garber2022new}, the task becomes challenging on manifolds due to the varying metric at different points and the non-convex nature of the separating hyperplane. Fortunately, this problem can be addressed using the Jacobi field comparison technique. Also, unlike in Euclidean space, one cannot directly construct a separation oracle for $(1-\d)\K$ using a separation oracle for $\K$. This poses significant challenges in the bandit setting. Nonetheless, we have identified a novel solution to this issue in the two-point feedback setting.

Secondly,  in Euclidean space, the linear optimization oracle is typically invoked by online Frank-Wolfe (OFW) \citep{hazan2012projection,kretzu2021revisiting} to achieve no-regret online learning. The analysis of OFW relies on the fact that the Hessian of a linear function is zero everywhere. But on Hadamard manifolds, such functions' existence implies that the manifold has zero sectional curvature everywhere \citep{kristaly2016convexities}. The algorithms in \citet{garber2022new} do not require affinity and serve as a starting point for our results. However, the analysis still needs to be conducted carefully due to the non-convexity of the separating hyperplane on manifolds.
\section{Related Work}
In this section, we briefly review previous work on projection-free online learning in Euclidean space as well as online and projection-free optimization on Riemannian manifolds.
\subsection{Projection-free OCO in Euclidean Space}
\textbf{Linear Optimization Oracle.}
The pioneering work of \citet{hazan2012projection} first introduced an online variant of the Frank-Wolfe algorithm (OFW) and achieved $O(T^{\sfrac{3}{4}})$ regret for convex functions. \citet{hazan2020faster} proposed a randomized algorithm that leverages smoothness to achieve  $O(T^{\sfrac{2}{3}})$ expected regret. The insightful analysis of \citet{wan2021projection} and \citet{kretzu2021revisiting} demonstrated that OFW indeed attains  $O(T^{\sfrac{2}{3}})$ regret for strongly convex functions. \citet{garber2022new} showed that it is possible to achieve $O(T^{\sfrac{3}{4}})$ adaptive regret and $O(T^{\sfrac{2}{3}}\log T)$ regret for convex and strongly convex functions, respectively. \citet{mhammedi2022exploiting} illustrated how to obtain $\tilde{O}(T^{\sfrac{2}{3}})$ regret for  convex functions, where $\tilde{O}(\cdot)$ hides logarithmic terms. Our results are in line with those of \citet{garber2022new} and inherit the adaptive regret guarantee.

\textbf{Separation Oracle and Membership Oracle.} \citet{levy2019projection} demonstrated that it is possible to achieve $O(\sqrt{T})$ and $O(\log T)$ regret bounds for convex and strongly convex functions when the feasible set is a sublevel set of a smooth and convex function. \citet{mhammedi2022efficient} generalized the idea of \citet{levy2019projection} and showed how to obtain $O(\sqrt{T})$ and $O(\log T)$ regret guarantees for general convex sets. \citet{garber2022new} provided algorithms that ensure $O(\sqrt{T})$ and $O(T^{\sfrac{3}{4}})$ adaptive regret guarantees for convex losses in the full information and bandit settings, respectively.

\subsection{Online and Projection-free Optimization on Manifolds}
\textbf{Online Optimization on Manifolds.} 
\citet{becigneul2018riemannian} demonstrated that a series of adaptive optimization algorithms can be implemented on a product of Riemannian manifolds, with each factor manifold being assigned a learning rate. \citet{antonakopoulos2020online} proposed using Follow the Regularized Leader with a strongly gsc-convex regularizer to achieve $O(\sqrt{T})$ regret when the loss satisfies Riemannian Lipschitzness. \citet{wang2021no} introduced Riemannian OGD (R-OGD) and showed regret guarantees in full information and bandit convex optimization settings. \citet{hu2023minimizing} considered achieving optimistic and dynamic regret on Riemannian manifolds.

\textbf{Projection-free Optimization on Manifolds.}
\citet{rusciano2018riemannian} provided a non-constructive cutting hyperplane method on Hadamard manifolds. By comparison, our algorithms are constructive and deterministic. \citet{weber2022riemannian} proposed Riemannian Frank-Wolfe (RFW) for gsc-convex optimization and showed some practical applications on the manifold of SPD matrices. In a subsequent work, \citet{weber2022projection} generalized RFW to the stochastic and non-convex setting. We use RFW as a subroutine to invoke the linear optimization oracle and establish sub-linear regret guarantees. \citet{hirai2023interior} implemented the interior point method on Riemannian manifolds and used a self-concordant barrier to enforce the constraint.

\section{Preliminaries and Assumptions}
\label{sec:prelim}

In this section, we lay the groundwork for our study by presenting an overview of Riemannian manifolds, the separation oracle and the linear optimization oracle within these spaces. Additionally, we establish key definitions and assumptions that will be integral to the following sections.

\textbf{Riemannian Manifolds. } We provide key notations in Riemannian geometry that will be employed throughout this paper. Readers looking for a more comprehensive treatment are encouraged to consult \citet{petersen2006riemannian,lee2018introduction}. Our proof also relies on the concept of the \emph{Jacobi field}, and we provide some backgrounds in Appendix \ref{app:jcb}.
We consider an $n$-dimensional smooth manifold $\M$ equipped with a Riemannian metric $g$. This metric confers a point-wise inner product $\la\u,\v\ra_{\x}$ at every point $\x\in\M$, where $\u,\v$ are vectors in the tangent space $T_{\x}\M$ at $\x$. This tangent space, a vector space of dimension $n$, encompasses all vectors tangent to $\x$. The Riemannian metric also determines the norm of a tangent vector $\u$ as: $\|\u\|\coloneqq\sqrt{\la\u,\u\ra}$. A geodesic $\gm(t):[0,c]\rightarrow\M$ is a piecewise smooth curve with a constant velocity that locally minimizes the distance between its endpoints, say, $\x$ and $\y$. The Riemannian distance between these two points is given by $d(\x,\y)\coloneqq\int_0^c \|\dot{\gm}(t)\| dt=\int_0^c \|\dot{\gm}(0)\| dt$. It's important to note that the Riemannian distance remains invariant under reparameterizations of $\gm(t)$. Consider a geodesic $\gm(t):[0,1]\rightarrow \M$ with $\gm(0)=\x$, $\gm(1)=\y$ and $\dot{\gm}(0)=\v$. The exponential map $\expmap_{\x}(\v)$ transforms $\v\in T_{\x}\M$ to $\y\in\M$, and the inverse exponential map $\invexp_{\x}\y$ performs the inverse operation, mapping $\y\in\M$ to $\v\in T_{\x}\M$. The inverse exponential map also offers a handy way to express the Riemannian distance: $d(\x,\y)=\|\invexp_{\x}\y\|$.

The sectional curvature at a point $\x\in\M$ is contingent on two-dimensional subspaces of $T_{\x}\M$, and describes  the curvature near $\x$. Generally, geodesics diverge on manifolds with negative sectional curvature, converge on manifolds with positive sectional curvature, and manifolds with zero sectional curvature are locally isometric to Euclidean space. In line with \citet{zhang2016first,wang2021no}, we primarily explore Hadamard manifolds, which are simply connected manifolds with non-positive curvature that admit a unique global minimizing geodesic between any pair of points. A set $\K\subseteq\M$ is geodesically convex (gsc-convex) if it includes the geodesic connecting $\x$ and $\y$ for any $\x,\y\in\K$. A $\mu$-strongly gsc-convex (or gsc-convex, when $\mu=0$) function $f:\M\rightarrow\R$ fulfills 
\begin{equation}\label{eq:gsc-convex}
    \textstyle f(\y)\geq f(\x)+\la\nabla f(\x),\invexp_{\x}\y\ra+\frac{\mu}{2}d(\x,\y)^2,
\end{equation}
 for any $\x,\y\in\M$, where $\nabla f(\x)\in T_{\x}\M$ is the Riemannian gradient.


\textbf{Optimization Oracles on Riemannian Manifolds.} 
In this part, we introduce the concept of a separation oracle and a linear optimization oracle on Riemannian manifolds.

A separation oracle, given a point $\y$ not in the gsc-convex set $\K$, returns a non-convex separating hyperplane that satisfies the following condition:
\begin{equation}\label{eq:so}
    \la-\invexp_{\y}\x,\g\ra>0,\quad \forall\x\in\K,
\end{equation}
where $\g\in T_{\y}\M$.  Even with the non-convexity of the separating hyperplane, for certain gsc-convex sets like $\K=\{\x|\max_{1\leq i\leq m}{h_i(\x)}\leq 0\}$ where each $h_i(\x)$ is  gsc-convex, a separation oracle can be efficiently implemented by Lemma \ref{lem:so} and Remark \ref{remark:so}.\footnote{This fact is well-known in Euclidean space \citep{grotschel2012geometric}, but it appears we are the first to observe its counterpart on manifolds.}

On the other hand, a linear optimization oracle is responsible for solving the following problem:
\begin{equation*}
    \argmin_{\x\in\K}\la  \g,\invexp_{\x_0}\x\ra
\end{equation*}
on a gsc-convex set $\K$, where $\x_0\in\K$ and $\g\in T_{\x_0}\M$. Although this objective is not gsc-convex, it can still be solved in closed form for certain problems. Examples include computing the geometric mean and the Bures-Wasserstein barycenter on the manifold of SPD matrices \citep{weber2022riemannian}.

In this paper, we rely on a series of definitions and assumptions, which we introduce here for clarity and reference in subsequent sections.

\begin{ass}\label{ass:hada}
The manifold $\M$ is Hadamard with sectional curvature bounded below by $\k$, so the sectional curvature of $\M$ lies in the interval $[\k,0]$.
\end{ass}

\begin{ass}\label{ass:homo}
The manifold $\M$ is a homogeneous Hadamard manifold with sectional curvature bounded below by $\k$. For every $t\in[T]$, the inequality $|f_t(\x)|\leq M$ holds. For the bandit setting with two-point feedback, we additionally require $\M$ to be symmetric.
\end{ass}

\begin{ass}\label{ass:radii}
The set $\K\subseteq\M$ is a gsc-convex decision set and satisfies $\B_{\p}(r)\subseteq\K\subseteq\B_{\p}(R)$. Here, $\B_{\p}(r)$ represents the geodesic ball centered at $\p\in\K$ with radius $r$.
\end{ass}

\begin{ass}\label{ass:bounded}
For every $t\in[T]$ and $\x\in \B_{\p}(R)$, the function $f_t(\x)$ is gsc-convex (or strongly gsc-convex), and the norm of its gradient is bounded by $G$, i.e., $\|\nabla f_t(\x)\|\leq G$.
\end{ass}

Let us make two important comments. First, the homogeneity and the symmetry of $\M$ allows us to employ the unbiased estimator presented in \citet{wang2023online} for the bandit setting. It should be noted that homogeneous and symmetric Hadamard manifolds include two of the most commonly used ones: the hyperbolic space and the manifold of SPD matrices. Second, the projection onto a geodesic ball, denoted as $\Pi_{\BpR}(\cdot)$, is considered projection-free as it can be computed to an $\eps$ precision using $\log(\sfrac{1}{\eps})$ bisections \footnote{A comprehensive explanation of this observation can be found in Remark \ref{remark:binary}.}.

\begin{defn}\label{def:zeta}
We define a geometric constant $\zeta$ as $\zeta\coloneqq 2R\sqrt{-\kappa} \operatorname{coth}(2R\sqrt{-\kappa})$.
\end{defn}

\begin{defn}\label{def:squeeze}
Fixing $\p\in\K$, for any $c\in(0,\infty)$, we define $c\K=\{\expmap_{\p}(c\invexp_{\p}\x)|\x\in\K\}$.
\end{defn}

\begin{defn}\label{def:ip}
We call $\yt\in\M$ an infeasible projection of $\y\in\M$ onto a simply connected closed set $\tK\subseteq\M$ if for every point $\z\in\tK$, the inequality $d(\yt,\z)\leq d(\y,\z)$ holds. We define $\O(\tK,\y)$ as an infeasible oracle for $\tK$ which, given any $\y$, returns an infeasible projection of $\y$ onto $\tK$.
\end{defn}

We note that, in Euclidean space where the sectional curvature is zero everywhere, we have $\zeta=\lim_{\k\rightarrow 0}2R\sqrt{-\kappa} \operatorname{coth}(2R\sqrt{-\kappa})=1$. We also observe that the definition of the infeasible projection in \citet{garber2022new} requires $\tK$ to be convex, which is indeed unnecessary. This distinction is essential because, in the case of the separation-oracle-based OCO, we construct an infeasible projection oracle onto $\tK=(1-\d)\K$, which may  be non-convex (Theorem \ref{thm:shrink}).


\section{Warm-up: the High-level Idea}\label{sec:warmup}
 We briefly illustrate the overarching strategy of achieving regret guarantees. Our basic algorithm is Algorithm \ref{alg:iogd}, which generates a sequence $\{\y_t\}_{t=1}^T$ by R-OGD that does not necessarily fall within the feasible set. A key insight is that we can build an infeasible projection oracle using either a separation oracle or a linear optimization oracle, resulting in a sequence $\{\yt_t\}_{t=1}^T$ that exhibits a desirable regret guarantee (as shown in Lemma \ref{lem:reg_iogd}). The design of the infeasible projection oracle rests on a straightforward fact: whenever $\y_t$ deviates significantly from $\K$, we can call upon either oracle to produce a descent direction and then apply Lemma \ref{lem:pull_ip} to gauge the progress. Additional error terms, arising from the fact that $\yt_t$ may not necessarily lie in $\K$, can be quantified by leveraging the boundedness of the gradient in Assumption \ref{ass:bounded}.

\begin{algorithm2e}[H]
\caption{Infeasible Riemannian OGD}\label{alg:iogd}
\KwData{horizon $T$, feasible set $\tK$, step-sizes $\{\eta_t\}_{t=1}^T$, infeasible projection oracle $\O(\tK,\cdot)$.}
\For{$t=1,\dots,T$}{
    Play $\yt_t$, and observe $f_t(\yt_t)$\\
    Update $\y_{t+1}=\expmap_{\yt_t}(-\eta_t\nabla f_t(\yt_t))$, and set $\yt_{t+1}=\O(\tK,\y_{t+1})$
}
\end{algorithm2e}
We have the following guarantee for Algorithm \ref{alg:iogd}.
\begin{lemma}\label{lem:reg_iogd}
   (Proof in Appendix \ref{app:reg_iogd}) Assume $\ty_t\in\BpR$ and let $\nabla_t=\nabla f_t(\yt_t)$ and $\K\subseteq \BpR$ be a gsc-convex subset of $\M$. Consider $\tK\subseteq\K$ as a simply connected and compact set, and $\O(\tK,\cdot)$ be an infeasible projection oracle as in Definition \ref{def:ip}.
    \begin{enumerate}[label=\arabic*),leftmargin=*]
        \item Suppose all losses are gsc-convex on $\BpR$. Fix some $\eta>0$ and let $\eta_t=\eta$ for all $t\geq 1$. Algorithm \ref{alg:iogd} guarantees that the adaptive regret is upper-bounded by:
        \begin{equation*}
            \forall I=[s,e]\subseteq[T]:\sumse f_t(\yt_t)-\min_{\x_I\in\tK}\sumse f_t(\x_I)\leq\frac{d(\yt_s,\x_I)^2}{2\eta}+\frac{\eta\zeta\tse\|\nabla_t\|^2}{2}.
        \end{equation*}
        \item Suppose all losses are $\alpha$-strongly gsc-convex  on $\BpR$ for some $\alpha>0$. Let $\eta_t=\frac{1}{\alpha t}$ for all $t\geq 1$. Algorithm \ref{alg:iogd} guarantees that the static regret is upper bounded by:
        \begin{equation*}
            \sumt f_t(\yt_t)-\min_{\x\in\tK}\sumt f_t(\x)\leq\sumt\frac{\zeta\|\nabla_t\|^2}{2\alpha t}.
        \end{equation*}
    \end{enumerate}
\end{lemma}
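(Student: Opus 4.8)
The plan is to run the textbook online–gradient–descent potential argument, but with two manifold-specific modifications: the Euclidean law of cosines is replaced by a trigonometric distance-comparison inequality that produces the constant $\zeta$, and the non-expansiveness of a metric projection is replaced by the distance-contraction property built into $\O(\tK,\cdot)$. Fix an interval $I=[s,e]\subseteq[T]$ and a comparator $\x\in\tK$, abbreviate $\nabla_t=\nabla f_t(\yt_t)$ and $D_t\coloneqq d(\yt_t,\x)^2$. First I would invoke $\mu$-strong gsc-convexity \eqref{eq:gsc-convex} at the played point $\yt_t$, giving
\[
f_t(\yt_t)-f_t(\x)\leq\la\nabla_t,-\invexp_{\yt_t}\x\ra-\frac{\mu}{2}D_t,
\]
so the whole problem reduces to bounding the linear term $\la\nabla_t,-\invexp_{\yt_t}\x\ra$ in a telescoping form.

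The geometric heart of the argument is to apply, to the geodesic triangle with vertices $\yt_t$, $\y_{t+1}=\expmap_{\yt_t}(-\eta_t\nabla_t)$ and $\x$ and the angle at $\yt_t$, the comparison inequality valid on manifolds whose curvature is bounded below by $\kappa$ (as in \citet{zhang2016first}). With side lengths $d(\yt_t,\y_{t+1})=\eta_t\|\nabla_t\|$ and $c=d(\yt_t,\x)$ this reads
\[
d(\y_{t+1},\x)^2\leq\sqrt{-\kappa}\,c\coth(\sqrt{-\kappa}\,c)\,\eta_t^2\|\nabla_t\|^2+D_t-2\la\invexp_{\yt_t}\y_{t+1},\invexp_{\yt_t}\x\ra.
\]
Since $\yt_t,\x\in\BpR$ we have $c\leq 2R$, and $x\coth x$ is increasing, so the curvature prefactor is at most $\zeta$ (Definition \ref{def:zeta}); substituting $\invexp_{\yt_t}\y_{t+1}=-\eta_t\nabla_t$ and rearranging gives the per-step bound
\[
\la\nabla_t,-\invexp_{\yt_t}\x\ra\leq\frac{D_t-d(\y_{t+1},\x)^2}{2\eta_t}+\frac{\zeta\eta_t\|\nabla_t\|^2}{2}.
\]
At this point the infeasible oracle enters: since $\x\in\tK$ and $\yt_{t+1}=\O(\tK,\y_{t+1})$, Definition \ref{def:ip} yields $d(\yt_{t+1},\x)\leq d(\y_{t+1},\x)$, which lets me replace $d(\y_{t+1},\x)^2$ by $D_{t+1}$ on the right-hand side. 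I want to stress that this replacement uses only the contraction in Definition \ref{def:ip} and never the convexity of $\tK$.

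It then remains to sum. For part (1) I set $\mu=0$ and $\eta_t=\eta$; combining the previous displays and summing over $t\in[s,e]$ telescopes the distance terms to $\frac{D_s-D_{e+1}}{2\eta}\leq\frac{D_s}{2\eta}$, and taking $\x=\x_I$ the interval minimizer gives the stated adaptive bound. For part (2) I set $\mu=\alpha$ and $\eta_t=1/(\alpha t)$, so that $\frac{1}{2\eta_t}=\frac{\alpha t}{2}$ and the per-step bound becomes
\[
f_t(\yt_t)-f_t(\x)\leq\frac{\alpha}{2}\big((t-1)D_t-tD_{t+1}\big)+\frac{\zeta\|\nabla_t\|^2}{2\alpha t}.
\]
Summing from $t=1$ to $T$, the first sum telescopes to $-\frac{\alpha}{2}TD_{T+1}\leq 0$, leaving exactly $\sumt\frac{\zeta\|\nabla_t\|^2}{2\alpha t}$.

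I expect the comparison step to be the main obstacle: one must track the sign conventions for $\invexp$, identify $\la\invexp_{\yt_t}\y_{t+1},\invexp_{\yt_t}\x\ra$ with $d(\yt_t,\y_{t+1})\,d(\yt_t,\x)\cos A$, and justify pinning the curvature prefactor to $\zeta$ via $c\leq 2R$ and monotonicity of $x\coth x$. The conceptually important but technically easy point is that the whole analysis tolerates a non-convex $\tK$, because the only property of the oracle used is the per-comparator distance contraction of Definition \ref{def:ip}.
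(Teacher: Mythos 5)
Your proposal is correct and follows essentially the same route as the paper's proof: the Zhang--Sra comparison inequality (Lemma~\ref{lem:cos1}) applied to the triangle $(\yt_t,\y_{t+1},\x)$ with the prefactor pinned to $\zeta$ via $d(\yt_t,\x)\leq 2R$, the distance contraction of Definition~\ref{def:ip} to pass from $\y_{t+1}$ to $\yt_{t+1}$ without any convexity of $\tK$, and (strong) gsc-convexity plus telescoping. The only cosmetic difference is that you unify both parts through a single parameter $\mu$ and telescope the strongly convex case via the $(t-1)D_t - tD_{t+1}$ rearrangement, whereas the paper first derives a general variable-step-size bound and then observes the distance coefficients vanish at $\eta_t = \frac{1}{\alpha t}$; these are equivalent computations.
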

\begin{remark}
To apply Lemma \ref{lem:reg_iogd}, we need to ensure that $\yt_t\in\BpR$ for any $t\in [T]$. In the case of a separation oracle, we have $\tK=(1-\d)\K$ for some $\d\in (0,1)$, and $\yt_t\in\K\subseteq\BpR$ by Lemma \ref{lem:ipso}. With a linear optimization oracle, we have $\tK=\K$, and $\yt_t\in\BpR$ is guaranteed by Lemma \ref{lem:iploo}.
\end{remark}


\begin{lemma}\label{lem:pull_ip}
 (Proof in Appendix \ref{app:pull_ip}) Consider $\tK\subseteq\K\subseteq\BpR$ as a simply connected and compact subset of $\M$. If $\y\notin\tK$ and $\g\in T_{\y}\M$ satisfies $-\la\invexp_{\y}\z,\g\ra\geq Q$, where $Q>0$, then consider $\yt=\expmap_{\y}(-\gamma \g)$. For $\gamma=\frac{Q}{\zeta C^2}$ and $\|\g\|\leq C$, assume $d(\y,\z)\leq 2R$, then we have
\begin{equation*}
\textstyle d(\yt,\z)^2\leq d(\y,\z)^2-\frac{Q^2}{\zeta C^2}.
\end{equation*}
\end{lemma}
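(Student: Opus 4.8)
The plan is to treat this as a one-step ``descent'' estimate for the squared distance, exactly analogous to the Euclidean identity $\|\yt-\z\|^2 = \|\y-\z\|^2 - 2\gamma\la\g,\y-\z\ra + \gamma^2\|\g\|^2$, but with the Euclidean law of cosines replaced by its curvature-comparison counterpart. First I would form the geodesic triangle with vertices $\y$, $\yt=\expmap_{\y}(-\gamma\g)$ and $\z$; this is legitimate since $\M$ is Hadamard and hence every pair of points is joined by a unique minimizing geodesic. Label its side lengths $a=d(\yt,\z)$, $b=d(\y,\yt)=\gamma\|\g\|$ and $c=d(\y,\z)$, and let $A$ be the angle at the vertex $\y$, which is opposite the side $a$.

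The central tool is the trigonometric distance-comparison inequality for manifolds with sectional curvature bounded below by $\k$ (as in \citet{zhang2016first}): with the above labelling,
\begin{equation*}
  d(\yt,\z)^2 \;\le\; \frac{\sqrt{-\k}\,c}{\tanh(\sqrt{-\k}\,c)}\,b^2 + c^2 - 2bc\cos A .
\end{equation*}
I would then bound the comparison constant. Since $d(\y,\z)\le 2R$ by hypothesis and $s\mapsto s\coth s$ is increasing on $(0,\infty)$, we get $\frac{\sqrt{-\k}\,c}{\tanh(\sqrt{-\k}\,c)}=\sqrt{-\k}\,c\coth(\sqrt{-\k}\,c)\le 2R\sqrt{-\k}\coth(2R\sqrt{-\k})=\zeta$, which is where Definition \ref{def:zeta} enters and where the global constant $\zeta$ takes over from the pointwise one. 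Because $b^2\ge 0$, enlarging its coefficient to $\zeta$ only weakens the inequality, giving $d(\yt,\z)^2\le d(\y,\z)^2 + \zeta\gamma^2\|\g\|^2 - 2bc\cos A$.

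Next I would convert the cross term into the inner product supplied by the hypothesis. The angle $A$ is measured between $\invexp_{\y}\yt=-\gamma\g$ and $\invexp_{\y}\z$, so $\cos A = \frac{\la-\g,\invexp_{\y}\z\ra}{\|\g\|\,d(\y,\z)}$, whence $bc\cos A = \gamma\|\g\|\,d(\y,\z)\cos A = -\gamma\la\invexp_{\y}\z,\g\ra \ge \gamma Q$ by the assumption $-\la\invexp_{\y}\z,\g\ra\ge Q$. Substituting yields $d(\yt,\z)^2 \le d(\y,\z)^2 + \zeta\gamma^2\|\g\|^2 - 2\gamma Q$. Finally I plug in $\gamma=\frac{Q}{\zeta C^2}$ and $\|\g\|\le C$: the middle term is at most $\zeta\gamma^2 C^2 = \frac{Q^2}{\zeta C^2}$ while the last term equals $\frac{2Q^2}{\zeta C^2}$, and the two combine to $-\frac{Q^2}{\zeta C^2}$, which is exactly the claimed bound.

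I expect the only real obstacle to be the geometric bookkeeping around the comparison inequality: getting the assignment of $b$ and $c$ right so that the factor $\zeta$ lands on the short step-side $d(\y,\yt)$ (whose length we control through $\gamma$) rather than on $d(\y,\z)$, and justifying that the pointwise comparison constant is dominated by the global $\zeta$ precisely because $d(\y,\z)\le 2R$. Everything after substituting $\cos A$ is routine algebra; the one place to be careful is the sign of the cross term, where one must check that $-2bc\cos A\le -2\gamma Q$ (i.e.\ that $-\g$ really is a descent direction toward $\z$).
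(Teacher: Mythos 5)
Your proposal is correct and follows essentially the same route as the paper's proof: both apply the curvature comparison inequality (Lemma \ref{lem:cos1}) to the geodesic triangle $(\yt,\y,\z)$, use $d(\y,\z)\leq 2R$ to replace the pointwise constant $\zeta(\kappa,c)$ by the global $\zeta$, identify the cross term with $-\gamma\la\g,\invexp_{\y}\z\ra\geq\gamma Q$, and plug in $\gamma=\frac{Q}{\zeta C^2}$. The only difference is that you spell out the angle computation and the monotonicity of $s\coth s$ explicitly, which the paper leaves implicit.
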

   Unlike \citet{garber2022new}, in Lemma \ref{lem:pull_ip}, we also do not require $\tK$ to be gsc-convex. 

\section{Riemannian OCO with a Separation Oracle}\label{sec:so}
In this section, we show how to use a separation oracle to construct an infeasible projection oracle and achieve sublinear regret guarantees. We note that we rely on an infeasible projection oracle onto $(1-\d)\K$ rather than directly onto $\K$. While we have a separation oracle that results in $\la-\invexp_{\y}\x,\g\ra>0$, using Lemma \ref{lem:pull_ip} on this separating hyperplane may lead to minuscule progress, given that $Q$ can be arbitrarily small. Consequently, achieving sublinear regret with only $O(T)$ oracle calls becomes unfeasible. In contrast,  constructing an infeasible projection onto $(1-\d)\K$ always ensures meaningful progress, as quantified by Lemmas \ref{lem:ballso} and \ref{lem:ipso}.
\begin{algorithm2e}[H]
\caption{Infeasible Projection onto $(1-\d)\K$ with a Riemannian Separation Oracle}\label{alg:ipso}
\KwData{feasible gsc-convex set $\K$, radius $r$, squeeze parameter $\delta$, initial point $\y_0$.}
$\y_1=\Pi_{\BpR}\y_0$\\
\For{$i=1,\dots$}{
    \eIf{$\y_i\notin\K$}{
    SO$_{\K}$ returns $\g_i$ satisfying $\la-\invexp_{\y_i}\x,\g_i\ra>0$\\
    $\y_{i+1}=\expmap_{\y_i}(-\gamma_i\g_i)$ where $\gm_i=\frac{\dbr}{\gn}$ and $\br$ is defined in Equation \eqref{eq:br}\\}{\Return $\y=\y_i$\\}
}
\end{algorithm2e}

\begin{lemma}\label{lem:ballso}
     (Proof in Appendix \ref{app:ballso}) Let $\y\in \BpR\setminus\K$ and let $\g\in T_{\y}\M$ be the output of the separation oracle for $\y$.
     Then, under Assumptions \ref{ass:hada} and \ref{ass:radii}, we have that $\la-\invexp_{\y}\z,\g\ra>\delta\bar{r}\|\g\|$ for any $\z\in(1-\delta)\K$, where 
     \begin{equation}\label{eq:br}
         \br :=\frac{\sqrt{-\k}(2R+r)}{\sinh({\sqrt{-\k}(2R+r)})}\cdot \frac{\sqrt{-\k}(R+r)}{\sinh({\sqrt{-\k}(R+r)})}\cdot r.
     \end{equation}
\end{lemma}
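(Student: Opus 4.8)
The plan is to emulate the Euclidean convex-combination argument of \citet{garber2022new}: there one writes a point of $(1-\d)\K$ as $(1-\d)\x+\d\p$ and lower-bounds the separation value by $\d$ times its value at the deep center $\p$, using that $\p$ is $r$-deep inside $\K$. On $\M$ the two curvature factors multiplying $r$ in \eqref{eq:br} should appear as Jacobi-field (Rauch) comparison corrections over the two relevant length scales. Throughout write $\Phi(\x):=\la-\invexp_{\y}\x,\g\ra$, so the oracle guarantee \eqref{eq:so} reads $\Phi(\x)>0$ for all $\x\in\K$, and by Definition~\ref{def:squeeze} every $\z\in(1-\d)\K$ has the form $\z=\gamma(1-\d)$, where $\gamma(s)=\expmap_{\p}(s\invexp_{\p}\x)$ is the constant-speed geodesic joining $\p=\gamma(0)$ to some $\x=\gamma(1)\in\K$. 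I would reduce the target $\Phi(\z)>\d\br\gn$ to two estimates whose product reconstructs $\br=A\cdot B\cdot r$, with $A:=\frac{\sqrt{-\k}(2R+r)}{\sinh(\sqrt{-\k}(2R+r))}$ and $B:=\frac{\sqrt{-\k}(R+r)}{\sinh(\sqrt{-\k}(R+r))}$.

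The first estimate, \emph{deepness of the center}, is $\Phi(\p)\ge B\,r\gn$. I would use the inscribed ball $\Bpr\subseteq\K$ of Assumption~\ref{ass:radii}: along a radial geodesic $\beta(\tau)=\expmap_{\p}(\tau\u)$, $\tau\in[0,r]$, inside the ball we have $\frac{d}{d\tau}\Phi(\beta(\tau))=-\la (D\invexp_{\y})_{\beta(\tau)}[\dot\beta(\tau)],\g\ra$, and the differential of the inverse exponential map is the initial covariant derivative $J'(0)$ of the Jacobi field along the geodesic from $\y$ to $\beta(\tau)$ whose far-end value is $\dot\beta(\tau)$. Choosing the radial direction $\u$ to align with the $\g$-steepest direction and bounding this pairing by the Rauch factor over the connecting length $d(\y,\beta(\tau))\le d(\y,\p)+r\le R+r$ gives a decrease of $\Phi$ of at least $B\gn$ per unit length; integrating to the extremal ball point $\x^{\ast}\in\Bpr$ and invoking the strict separation $\Phi(\x^{\ast})>0$ yields $\Phi(\p)>B\,r\gn$.

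The second estimate, the \emph{effect of shrinking}, is $\Phi(\z)\ge A\,\d\,\Phi(\p)$, the manifold analogue of the Euclidean identity $\Phi(\z)=(1-\d)\Phi(\x)+\d\Phi(\p)$. I would represent $\Phi(\z)=\Phi(\x)+\int_{1-\d}^{1}\la J_s'(0),\g\ra\,ds$, with $J_s$ the Jacobi field along $c_s(t)=\expmap_{\y}(t\invexp_{\y}\gamma(s))$ satisfying $J_s(0)=\o$ and $J_s(1)=\dot\gamma(s)$, discard the nonnegative boundary term $\Phi(\x)>0$, and lower-bound the integrand in terms of $\Phi(\p)$. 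Since every connecting geodesic from $\y$ to a point of $\gamma\subseteq\K\subseteq\BpR$ has length $d(\y,\gamma(s))\le 2R\le 2R+r$, the binding Rauch factor over this scale is $A$, which accounts for the manifold losing a factor $A\le1$ relative to the flat interpolation. Combining the two estimates then gives
\[
\Phi(\z)\ \ge\ A\,\d\,\Phi(\p)\ >\ A\,\d\,B\,r\gn\ =\ \d\br\gn ,
\]
as claimed, with strictness inherited from the strict separation inequality.

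The main obstacle is exactly the step shared by both estimates: controlling the \emph{signed} pairing $\la J_s'(0),\g\ra$ of a Jacobi-field initial derivative (living in $T_{\y}\M$) against the fixed separating vector $\g$. Rauch comparison (Appendix~\ref{app:jcb}) controls only \emph{norms} -- equivalently the singular values of $D\expmap_{\y}$ and its inverse -- not a directional inner product, and because the separating set is non-convex $\Phi$ is \emph{not} geodesically concave, so the Euclidean linear-interpolation identity genuinely fails. I would resolve this by decomposing both $\dot\gamma(s)$ and $\g$ into components parallel and perpendicular to the connecting geodesic $c_s$: the tangential part reproduces a multiple of $\Phi(\gamma(s))$ itself, giving a Grönwall/ODE-type propagation of positivity of $\Phi$ along $\gamma$, while the perpendicular part is precisely where the $\tfrac{\theta}{\sinh\theta}$ comparison factors enter and where the length scales $R+r$ and $2R+r$ become binding. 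Verifying that the tangential contribution retains the correct sign is the crux; once this directional estimate is established, the integrations in the two estimates are routine.
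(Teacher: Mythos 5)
There is a genuine gap, and it is exactly the one you flag yourself: both of your key estimates (``deepness of the center'' $\Phi(\p)\ge B\,r\gn$ and ``effect of shrinking'' $\Phi(\z)\ge A\,\d\,\Phi(\p)$) require controlling the \emph{signed} quantity $\la J_s'(0),\g\ra$, i.e.\ a directional statement about the differential of $\invexp_{\y}$, whereas Rauch/Jacobi comparison (Lemma \ref{lem:jct}) only bounds norms. Your proposed repair --- decomposing into tangential and perpendicular components and running a Gr\"onwall-type propagation of positivity of $\Phi$ along $\gamma$ --- is not carried out, and it is not a routine verification: since the sublevel sets of $\Phi$ are non-convex, $\Phi\circ\gamma$ need not satisfy any one-sided second-order bound with the right sign, the tangential contribution you would need to keep nonnegative is precisely the term whose sign is uncontrolled, and a Gr\"onwall argument would at best produce exponential-in-path-length factors rather than the clean product $A\cdot B\cdot r$. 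So as written, the proposal is a plan whose crux step is missing, and it is doubtful the intermediate inequality $\Phi(\z)\ge A\,\d\,\Phi(\p)$ can be established by these means at all.

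The paper's proof avoids signed estimates entirely by perturbing in the tangent space at $\y$, where $\Phi$ \emph{is} exactly linear: it sets $\x=\expmap_{\y}\lt(\dbr\frac{\g}{\gn}+\invexp_{\y}\z\rt)$, so that $\la-\invexp_{\y}\x,\g\ra=\la-\invexp_{\y}\z,\g\ra-\dbr\gn$ holds as an identity, and the oracle inequality at $\x$ immediately gives the claim \emph{provided} $\x\in\K$. Membership is then a pure \emph{distance} question: by Lemma \ref{lem:ball}, every $\z\in(1-\d)\K$ has $\B_{\z}(\d\tr)\subseteq\K$ with $\tr=B\,r$ (this is where your factor $B$ actually enters --- as an inscribed-ball radius, not as a lower bound on $\Phi(\p)$), and the Jacobi-field comparison over the geodesics emanating from $\y$ (length scale $2R+r$, giving the factor $A$) bounds $d(\x,\z)\le\frac{\sinh(\sqrt{-\k}(2R+r))}{\sqrt{-\k}(2R+r)}\,\d\br\le\d\tr$. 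In other words, the curvature cost is paid only in converting a tangent-space displacement of norm $\dbr$ into a manifold distance, which is exactly what Rauch comparison can do; this is the structural move your sketch is missing, and with it your two length scales $R+r$ and $2R+r$ reappear, but attached to different objects than in your decomposition.
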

In  Euclidean space,  we can establish that $\la\y-\z,\g\ra>\d r\|\g\|$\citep[Lemma 11]{garber2022new}. However, as indicated in Lemma \ref{lem:ballso}, the result on manifolds is significantly worse with respect to $R$, given the exponential nature of $\sinh$. It is an interesting line of inquiry to explore whether this dependence is unavoidable.\footnote{An exponential dependence on the diameter is  common on manifolds with strictly negative curvature. For instance, the volume of a geodesic ball grows exponentially with its radius in hyperbolic space, whereas this dependence is only polynomial in Euclidean space. This property has been leveraged to construct lower bounds for convex optimization on manifolds \citep{criscitiello2022negative}.}

Based on Lemma \ref{lem:ballso}, to implement an infeasible projection oracle onto $(1-\d)\K$,  the number of calls to the separation oracle is bounded in Lemma \ref{lem:ipso}.

\begin{lemma}\label{lem:ipso}
    (Proof in Appendix \ref{app:ipso}) Under Assumptions \ref{ass:hada} and \ref{ass:radii}. Let $0<\delta <1$ and set $\gm_i=\frac{\dbr}{\gn}$. Algorithm \ref{alg:ipso} executes at most $\frac{\zeta(d(\y_0,(1-\d)\K)^2-d(\y,(1-\d)\K)^2)}{\d^2\br^2}+1$ iterations and returns $\y\in\K$ such that $d(\y,\z)^2\leq d(\y_0,\z)^2$ holds for any $\z\in(1-\d)\K$.
\end{lemma}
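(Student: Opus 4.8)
The plan is to treat Algorithm \ref{alg:ipso} as a potential-decrease scheme whose potential is the squared distance to the set $(1-\delta)\K$, and to read off both the iteration count and the contraction guarantee from a single per-step inequality obtained by chaining Lemmas \ref{lem:ballso} and \ref{lem:pull_ip}. First I would record two structural facts. Since $\K$ is gsc-convex and contains $\p$, Definition \ref{def:squeeze} places every point of $(1-\delta)\K$ on a geodesic from $\p$ into $\K$, so $\p\in(1-\delta)\K\subseteq\K\subseteq\BpR$; in particular $(1-\delta)\K$ is a nonempty, compact, simply connected subset of $\K$, exactly the situation in which Lemmas \ref{lem:ballso} and \ref{lem:pull_ip} apply.

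Next I would establish the invariant $\y_i\in\BpR$ for every iterate, since this is what licenses Lemma \ref{lem:ballso} (which needs $\y_i\in\BpR$) and supplies the hypothesis $d(\y_i,\z)\le 2R$ of Lemma \ref{lem:pull_ip}. The argument is by induction: $\y_1=\Pi_{\BpR}\y_0\in\BpR$, and if $\y_i\in\BpR$ then, applying the per-step inequality below with the particular choice $\z=\p\in(1-\delta)\K$, one gets $d(\y_{i+1},\p)\le d(\y_i,\p)\le R$. For any $\z\in(1-\delta)\K$ the invariant also yields $d(\y_i,\z)\le d(\y_i,\p)+d(\p,\z)\le R+R=2R$ by the triangle inequality.

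The core computation is the per-step decrease. On any iteration with $\y_i\notin\K$ (hence $\y_i\notin(1-\delta)\K$), the oracle returns $\g_i$, and Lemma \ref{lem:ballso} gives $-\la\invexp_{\y_i}\z,\g_i\ra=\la-\invexp_{\y_i}\z,\g_i\ra>\delta\br\|\g_i\|$ for every $\z\in(1-\delta)\K$. Feeding this into Lemma \ref{lem:pull_ip} with $Q=\delta\br\|\g_i\|$ and $C=\|\g_i\|$ yields, uniformly in $\z\in(1-\delta)\K$,
\begin{equation*}
    d(\y_{i+1},\z)^2\le d(\y_i,\z)^2-\frac{\delta^2\br^2}{\zeta}.
\end{equation*}
Minimizing over $\z$ (on the left taking $\z$ to be the minimizer of $d(\y_i,\cdot)$) converts this into a decrease of the potential $\phi_i\coloneqq d(\y_i,(1-\delta)\K)^2$, namely $\phi_{i+1}\le\phi_i-\delta^2\br^2/\zeta$; the key point is that the decrement is independent of $\z$, so the minimum passes through cleanly.

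Finally I would assemble the two conclusions. Telescoping the potential inequality over the $m-1$ update steps preceding the returned point $\y=\y_m$ gives $(m-1)\,\delta^2\br^2/\zeta\le\phi_1-\phi_m$; since the metric projection onto the gsc-convex ball $\BpR$ is non-expansive toward any $\z\in(1-\delta)\K\subseteq\BpR$, we have $\phi_1\le d(\y_0,(1-\delta)\K)^2$, while $\phi_m=d(\y,(1-\delta)\K)^2$, and rearranging produces exactly the claimed bound $\frac{\zeta\,(d(\y_0,(1-\delta)\K)^2-d(\y,(1-\delta)\K)^2)}{\delta^2\br^2}+1$. Non-negativity of $\phi_i$ together with the fixed decrement forces termination (some $\y_m\in\K$ is reached), and $\y=\y_m\in\K$ holds by the return condition; chaining the uniform per-step inequality with non-expansiveness of $\y_1=\Pi_{\BpR}\y_0$ gives $d(\y,\z)^2\le d(\y_1,\z)^2\le d(\y_0,\z)^2$ for every $\z\in(1-\delta)\K$. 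The main obstacle is the invariant $\y_i\in\BpR$: it is circularly entangled with the per-step inequality (the inequality with $\z=\p$ proves the invariant, and the invariant justifies the inequality), so it must be handled by a simultaneous induction; the remaining delicacy is the passage from the pointwise contraction to the set-distance potential, which works only because the per-step decrement does not depend on $\z$.
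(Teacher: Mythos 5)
Your proof is correct and follows essentially the same route as the paper's: chaining Lemma \ref{lem:ballso} with Lemma \ref{lem:pull_ip} to get the uniform per-step decrease $d(\y_{i+1},\z)^2\le d(\y_i,\z)^2-\delta^2\br^2/\zeta$, maintaining a distance invariant by induction, passing to the set-distance potential via the minimizer $\x_i^*\in\argmin_{\x\in(1-\delta)\K}d(\y_i,\x)$, and telescoping together with non-expansiveness of $\Pi_{\BpR}$ at the first step. The only difference is cosmetic but slightly to your credit: you maintain the invariant $\y_i\in\BpR$ by specializing the per-step inequality at $\z=\p$, which directly supplies the hypothesis $\y_i\in\BpR\setminus\K$ required by Lemma \ref{lem:ballso}, whereas the paper's induction tracks only $d(\y_i,\z)\le 2R$ and leaves that hypothesis implicit.
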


In the full information setting, with a separation oracle, infeasible R-OGD is shown in Algorithm \ref{alg:ogdso}.

\begin{algorithm2e}[H]
\caption{Infeasible R-OGD with a separation oracle}\label{alg:ogdso}
\KwData{feasible gsc-convex set $\K$, radius $r$, step-size $\eta$ and squeeze parameter $\delta$.}
$\yt_1=\p\in(1-\d)\K$\\
\For{$t=1,\dots,T$}{
    Play $\yt_t$ and receive $f_t({\yt}_t)$\\
    $\y_{t+1}=\expmap_{\yt_t}\lt(-\eta\nabla f_t({\yt}_t)\rt)$\\
    Update $\yt_{t+1}$ as the output of Algorithm \ref{alg:ipso} with set $\K$, radius $r$, squeeze parameter $\d$ and initial point $\y_{t+1}$.\\
}
\end{algorithm2e}
We can show the following regret guarantee for Algorithm \ref{alg:ogdso}.
\begin{thm}\label{thm:ocoso}
    (Proof in Appendix \ref{app:ocoso}) Under Assumptions \ref{ass:hada}, \ref{ass:radii} and \ref{ass:bounded}. Set $\eta=\frac{2R}{G\sqrt{\zeta T}}$ and $\d=\frac{1}{2\sqrt{T}}$, then the regret of Algorithm \ref{alg:ogdso} is upper bounded by
    \begin{equation*}
      \textstyle  \sup_{[s,e]\subseteq [T]}\lt\{\sum_{t=s}^ef_t(\yt_t)-\min_{\x_I\in \K}\sum_{t=s}^ef_t(\x_I)\rt\}\leq  \frac{5}{2}GR\sqrt{\zeta T},
    \end{equation*}
    and the number of calls to the separation oracle is $O(T)$.
\end{thm}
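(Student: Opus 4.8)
The plan is to combine the adaptive regret bound from Lemma~\ref{lem:reg_iogd} (part 1) with the iteration count from Lemma~\ref{lem:ipso}, choosing $\eta$ and $\delta$ to balance the resulting terms. The core observation is that Algorithm~\ref{alg:ogdso} is exactly Algorithm~\ref{alg:iogd} instantiated with $\tK=(1-\d)\K$ and with $\O(\tK,\cdot)$ given by Algorithm~\ref{alg:ipso}; the infeasible-projection property $d(\yt_{t+1},\z)^2\le d(\y_{t+1},\z)^2$ for all $\z\in(1-\d)\K$ is guaranteed by Lemma~\ref{lem:ipso}, so Lemma~\ref{lem:reg_iogd} applies verbatim. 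Since $\yt_t\in\K\subseteq\BpR$ (the remark after Lemma~\ref{lem:reg_iogd}) and $\|\nabla_t\|=\|\nabla f_t(\yt_t)\|\le G$ by Assumption~\ref{ass:bounded}, Lemma~\ref{lem:reg_iogd} yields, for every $I=[s,e]$,
\begin{equation*}
\sumse f_t(\yt_t)-\min_{\x_I\in(1-\d)\K}\sumse f_t(\x_I)\le\frac{d(\yt_s,\x_I)^2}{2\eta}+\frac{\eta\zeta (e-s+1) G^2}{2}\le\frac{(2R)^2}{2\eta}+\frac{\eta\zeta T G^2}{2},
\end{equation*}
using $d(\yt_s,\x_I)\le 2R$ since both points lie in $\BpR$.

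The remaining gap is that the comparator is over $(1-\d)\K$ rather than $\K$. First I would bound the discrepancy $\min_{\x\in(1-\d)\K}\sumse f_t(\x)-\min_{\x\in\K}\sumse f_t(\x)$. Let $\x_I^\star$ be the minimizer over $\K$; then $\z_I:=\expmap_{\p}((1-\d)\invexp_{\p}\x_I^\star)\in(1-\d)\K$ by Definition~\ref{def:squeeze}, and $d(\z_I,\x_I^\star)=\d\,d(\p,\x_I^\star)\le \d R$. Using $G$-Lipschitzness of each $f_t$ (from the gradient bound in Assumption~\ref{ass:bounded}) gives $f_t(\z_I)-f_t(\x_I^\star)\le G\,d(\z_I,\x_I^\star)\le G\d R$, so summing over the at-most-$T$ rounds in $I$ contributes an additive error of at most $G\d R T$. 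Combining, the adaptive regret against $\K$ is bounded by $\frac{2R^2}{\eta}+\frac{\eta\zeta T G^2}{2}+G\d R T$.

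Finally I would substitute $\eta=\frac{2R}{G\sqrt{\zeta T}}$ and $\d=\frac{1}{2\sqrt{T}}$: the first two terms each become $GR\sqrt{\zeta T}$ (by the standard AM--GM balance $\frac{2R^2}{\eta}=R G\sqrt{\zeta T}$ and $\frac{\eta\zeta T G^2}{2}=RG\sqrt{\zeta T}$), and the third becomes $G R T\cdot\frac{1}{2\sqrt{T}}=\frac12 GR\sqrt{T}\le\frac12 GR\sqrt{\zeta T}$ since $\zeta\ge 1$. Summing gives $\frac{5}{2}GR\sqrt{\zeta T}$, matching the claim. For the oracle count, I would invoke Lemma~\ref{lem:ipso}: each call to Algorithm~\ref{alg:ipso} runs at most $\frac{\zeta(d(\y_{t+1},(1-\d)\K)^2-d(\yt_{t+1},(1-\d)\K)^2)}{\d^2\br^2}+1$ iterations; summing telescopes the distance differences, and one single-step bound $d(\y_{t+1},(1-\d)\K)\le d(\yt_t,(1-\d)\K)+\eta G$ controls each increment, leaving a total of $O(T)$ plus a term scaling like $\frac{\zeta\eta^2 G^2 T}{\d^2\br^2}$, which with the chosen $\eta,\d$ is again $O(T)$.

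The main obstacle I anticipate is controlling the total separation-oracle count rather than the regret itself: the per-round iteration bound in Lemma~\ref{lem:ipso} depends on how far $\y_{t+1}$ drifts from $(1-\d)\K$, and one must verify this drift is at most $O(\eta G)$ per round so that, after telescoping and inserting $\d=\Theta(T^{-1/2})$ and $\eta=\Theta(T^{-1/2})$, the accumulated $\frac{\eta^2}{\d^2}$-type factor stays $O(1)$ per round. The $\br^{-2}$ factor from Lemma~\ref{lem:ballso} is a fixed geometric constant (independent of $T$), so it does not affect the $O(T)$ scaling, but its exponential dependence on $R$ is absorbed silently into the constant.
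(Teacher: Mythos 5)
Your proposal is correct and follows essentially the same route as the paper's own proof: you apply Lemma \ref{lem:reg_iogd} with the infeasible projection onto $(1-\d)\K$ furnished by Lemma \ref{lem:ipso}, shift the comparator from $\K$ to $(1-\d)\K$ via $\expmap_{\p}((1-\d)\invexp_{\p}\x_I)$ at cost $G\d R T$, balance with the same $\eta,\d$ to get $\frac{5}{2}GR\sqrt{\zeta T}$, and bound the oracle count by telescoping Lemma \ref{lem:ipso}'s per-round iteration bound using the drift estimate $d(\y_{t+1},(1-\d)\K)\le d(\yt_t,(1-\d)\K)+\eta G$. The one detail you leave implicit --- after squaring, the cross term $2d(\yt_t,(1-\d)\K)\eta G$ must be controlled via $d(\yt_t,(1-\d)\K)\le\d R$ (which holds because $\yt_t\in\K$) --- is exactly the $\sfrac{\eta}{\d}$-type factor you flag as the main obstacle, and it is indeed $O(1)$ under the stated parameter choices, just as in the paper.
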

Moving on, we demonstrate how to achieve a sublinear regret guarantee in the bandit convex optimization setting. A major challenge is that, while in Euclidean space, we can construct a separation oracle on $(1-\d)\K$ using the separation oracle on $\K$ \citep[Lemma 11.]{garber2022new}. On Hadamard manifolds, $(1-\d)\K$ can even be non-convex (Theorem \ref{thm:shrink}), thus a separation oracle for $(1-\d)\K$ may not exist. For Riemannian BCO with one-point feedback, in Algorithm \ref{alg:bcoso}, we address this by resorting to a non-standard setting: we play $\zt_t\in\K$ but we receive feedback at $\z_t$ where $\z_t$, $\zt_t$ are nearby points. We present the algorithm and the corresponding regret guarantee in Algorithm \ref{alg:bcoso} and Theorem \ref{thm:bcoso}.



\begin{algorithm2e}[H]
\caption{One-point bandit convex optimization on manifolds with a separation oracle}\label{alg:bcoso}
\KwData{feasible gsc-convex set $\K$, radii $(R,r)$, step-size $\eta$, squeeze parameters $(\d,\d',\tau)$, $\yt_1=\p$.}
\For{$t=1,\dots,T$}{
    Sample $\z_t\sim\S_{\yt_t}(\d')$; play $\tz_t\coloneqq\expmap_{\p}\lt(\frac{\invexp_{\p}\z_t}{1+\tau}\rt)$\tcp*[l]{$\d'=\frac{{\sqrt{-\k}(R+r)}}{\sinh\lt({\sqrt{-\k}(R+r)}\rt)}\tau r$}
    Observe $f_t(\z_t)$; $\g_t=f_t(\z_t)\cdot\frac{\invexp_{\ty_t}\z_t}{\|\invexp_{\ty_t}\z_t\|}$; $\y_{t+1}=\expmap_{\ty_t}(-\eta\g_t)$\\
    $\ty_{t+1}\leftarrow$  Output of Algorithm \ref{alg:ipso} with  $\K$, radius $r$, squeeze parameter $\d$ and initial point $\y_{t+1}$.\\
}
\end{algorithm2e}




\begin{thm}\label{thm:bcoso}

    (Proof in Appendix \ref{app:bcoso}) Under Assumptions \ref{ass:homo}, \ref{ass:radii} and \ref{ass:bounded}. Set $\eta=T^{-\frac{1}{2}}$, $\d=T^{-\frac{1}{4}}$, $\tau=T^{-\frac{1}{4}}$.
 Then regret of Algorithm \ref{alg:bcoso} is upper bounded by
    \begin{equation*}
      \textstyle   \sup_{[s,e]\subseteq [T]}\lt\{\sum_{t=s}^ef_t(\zt_t)-\min_{\x_I\in \K}\sum_{t=s}^ef_t(\x_I)\rt\}=O\lt(T^\frac{3}{4}\rt),
    \end{equation*}
    and the number of calls to the separation oracle is $O(T)$.
\end{thm}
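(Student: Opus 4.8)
The plan is to run the classical bandit-to-full-information reduction, but on the manifold and wrapped around the infeasible-projection engine of Lemma~\ref{lem:reg_iogd}. I would first introduce the sphere-smoothed surrogate $\hf_t(\x)\coloneqq\E_{\z\sim\S_{\x}(\d')}[f_t(\z)]$. Since $\M$ is a homogeneous Hadamard manifold (Assumption~\ref{ass:homo}), the estimator construction of \citet{wang2023online} gives, conditioned on $\yt_t$, an unbiasedness relation of the form $\E[\g_t\mid\yt_t]=c\,\rg\hf_t(\yt_t)$, where the normalizing constant $c=\Theta(\d')$ depends only on the dimension and curvature; crucially $\|\g_t\|=|f_t(\z_t)|\le M$ because $\invexp_{\yt_t}\z_t/\|\invexp_{\yt_t}\z_t\|$ is a unit vector. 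From Assumption~\ref{ass:bounded} I also record the smoothing bias $|\hf_t(\x)-f_t(\x)|\le G\d'$, and I would check that $\hf_t$ inherits gsc-convexity from $f_t$. The point of keeping $\|\g_t\|\le M$ bounded (rather than the usual $\Theta(1/\d')$ of Euclidean FKM) is exactly what permits the choice $\eta=T^{-1/2}$: the $\Theta(1/\d')$ factor is instead paid once, when dividing the OGD inequality by $c$.

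Next I would run the interval decomposition. Applying the inequality behind Lemma~\ref{lem:reg_iogd} to the update directions $\g_t$ gives, for every $I=[s,e]$, the bound $\sum_{t=s}^{e}\la\g_t,-\invexp_{\yt_t}\x_I\ra\le \frac{d(\yt_s,\x_I)^2}{2\eta}+\frac{\eta\zeta}{2}\sum_{t=s}^{e}\|\g_t\|^2$. Taking expectations, substituting $\E[\g_t\mid\yt_t]=c\,\rg\hf_t(\yt_t)$, invoking gsc-convexity of $\hf_t$, and dividing by $c=\Theta(\d')$ converts this into an $O(T^{3/4})$ expected regret for the surrogate evaluated at the centers $\yt_t$ (the two terms balance because $\tfrac{1}{c}\cdot\tfrac{R^2}{\eta}=\Theta(T^{1/4}\cdot T^{1/2})$ and $\tfrac{1}{c}\cdot\eta\zeta M^2 T=\Theta(T^{1/4}\cdot T^{1/2})$). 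I would then telescope from $f_t(\zt_t)$ to this surrogate regret through a chain of $O(T^{3/4})$ error terms: the play-versus-feedback gap $|f_t(\zt_t)-f_t(\z_t)|\le G\,d(\zt_t,\z_t)=O(G\tau)$; the sampling gap $|f_t(\z_t)-f_t(\yt_t)|\le G\d'$; the two smoothing biases $|f_t-\hf_t|\le G\d'$ at the centers and at the comparator; and the comparator gap between $\min_{\x\in(1-\d)\K}$ and $\min_{\x\in\K}$, which a squeezing argument built on Definition~\ref{def:squeeze} bounds by $O(G\d)$ per round. With $\d=\tau=T^{-1/4}$ and $\d'=\Theta(\tau)$ every one of these contributes $O(T^{3/4})$, and since each bound is uniform over the interval $I$ the estimate survives the supremum over $[s,e]\subseteq[T]$, yielding expected adaptive regret $O(T^{3/4})$.

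The main obstacle I expect is geometric feasibility of the played point, which is precisely why the algorithm plays $\zt_t$ while collecting feedback at $\z_t$: because $(1-\d)\K$ can be non-convex (Theorem~\ref{thm:shrink}) no separation oracle for it exists, so I cannot sample directly inside $(1-\d)\K$ in the Euclidean manner. I would instead show that pulling $\z_t$ toward $\p$ by the factor $1/(1+\tau)$ always lands in $\K$; this is where the $\sinh$-correction in $\d'=\tfrac{\sqrt{-\k}(R+r)}{\sinh(\sqrt{-\k}(R+r))}\tau r$ enters, since it is the Jacobi-field comparison factor needed so that a $\tau$-shrink of a ball of radius $R+r$ covers a $\d'$-neighborhood on a manifold of curvature in $[\k,0]$; the same factor controls $d(\zt_t,\z_t)=O(\tau)$ in the first error term. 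A secondary obstacle is verifying the unbiasedness/bias pair for the surrogate on the homogeneous manifold and that smoothing preserves gsc-convexity, both of which I would either import from \citet{wang2023online} or establish via exponential-map averaging. Finally, the $O(T)$ oracle-call count follows by summing Lemma~\ref{lem:ipso} across rounds: each round feeds the single OGD step $d(\y_{t+1},\yt_t)=\eta\|\g_t\|\le\eta M$ into Algorithm~\ref{alg:ipso}, and the per-round iteration counts telescope through the squared-distance potential to $O(T)$.
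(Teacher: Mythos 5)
Your proposal follows the paper's proof essentially step for step: the sphere-sampling gradient estimator of \citet{wang2023online} with $\|\g_t\|\leq M$ (which is exactly why $\eta=T^{-\sfrac{1}{2}}$ works, with the $\Theta(1/\d')$ factor paid once upon dividing by the normalizing constant), the infeasible projection onto $(1-\d)\K$ via the separation oracle on $\K$, the $(1+\tau)$-dilation/contraction trick with the $\sinh$ Jacobi-comparison factor in $\d'$ (the paper's Lemma \ref{lem:ball2}, proved precisely because the gsc-convexity of $(1+\tau)\K$ is unknown) to keep the played point $\tz_t$ feasible, the same error decomposition ($G\tau R$ play-vs-feedback gap, $G\d'$ smoothing/sampling biases, $G\d R$ comparator-shrinkage cost) under $\eta=T^{-\sfrac{1}{2}}$, $\d=\tau=T^{-\sfrac{1}{4}}$, $\d'=\Theta(T^{-\sfrac{1}{4}})$, and the same telescoping of Lemma \ref{lem:ipso} through the squared-distance potential for the $O(T)$ oracle-call count.

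Two steps need correction, though your hedges already point at the fixes. First, ``check that $\hf_t$ inherits gsc-convexity from $f_t$'' would fail as stated: on a Hadamard manifold, smoothing does \emph{not} preserve geodesic convexity, and the paper only has the approximate inequality $\hf_t(\y)-\hf_t(\x)\geq\la\nabla\hf_t(\x),\invexp_{\x}\y\ra-2\rho\d'G$ (item 3 of Lemma \ref{lem:wang}), whose additive error $2\rho G\d'T=O(T^{\sfrac{3}{4}})$ must be carried explicitly into the regret bound (it is harmless for the final rate, but it is a genuine extra term, and $\rho$ can be exponentially large in the diameter). You should commit to importing this approximate version from \citet{wang2023online} rather than attempting exact convexity. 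Second, your surrogate should be the ball-average $\hf_t(\x)=\frac{1}{V_{\d'}}\int_{\B_{\x}(\d')}f_t\,\omega$, not the sphere-average you wrote: the divergence-theorem identity underlying item 1 of Lemma \ref{lem:wang} makes $\frac{S_{\d'}}{V_{\d'}}\g_t$ unbiased for the gradient of the \emph{ball}-average, so your identity $\E[\g_t\mid\yt_t]=c\,\rg\hf_t(\yt_t)$ with $c=V_{\d'}/S_{\d'}=\Theta(\d')$ holds only for that choice. With these two substitutions, your decomposition, parameter balancing, and oracle-count telescoping coincide with the paper's argument and yield the stated $O(T^{\sfrac{3}{4}})$ expected adaptive regret with $O(T)$ separation-oracle calls.
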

\begin{remark}
An acute reader may notice a discrepancy between the step-size $\eta=O\lt(T^{-\frac{1}{2}}\rt)$ in our work and the step-size $\eta=O\lt(T^{-\frac{3}{4}}\rt)$ in \citet[Theorem 15]{garber2022new}. It is important to highlight that our $\eta$ is equivalent to $\frac{n\eta}{\d'}$ as per \citet{garber2022new}, ensuring that the parameters are in fact consistent. This reasoning is also applicable to Theorem \ref{thm:bco2}.
\end{remark}
Interestingly, in the context of the two-point feedback setting, we can get rid of the non-standard setting by adapting the algorithm. In Algorithm \ref{alg:bco2}, we adhere to the loop invariants: $\x_t\in\beta\K$ and $\y_t\in\K$. Each round involves constructing an unbiased gradient estimator $\g_t$ at $\x_t$ using the two-point feedback. Subsequently, we map $\g_t$ to the tangent space of $\y_t$ and execute online gradient descent in that space. A key advantage of this parallel transport is the ability to employ the separation oracle on $\K$ to construct an infeasible projection onto $(1-\d)\K$. Upon a meticulous analysis, we observe that, at each round, an additional distortion arises from this parallel transport:
\[
\frac{S_{\d'}}{V_{\d'}}\E[\la\g_t,\G_{\y_t}^{\x_t}\invexp_{\y_t}\x-\invexp_{\x_t}\x\ra]\leq \frac{S_{\d'}}{V_{\d'}}\E[\|\g_t\|]\cdot O(\d')=O(\d'),
\]
This distortion accumulates to $O(\sqrt{T})$
 when choosing $\d'=O\lt(\frac{1}{\sqrt{T}}\rt)$, ensuring the regret bound remains unaffected. The regret assurance of Algorithm \ref{alg:bco2} is detailed in Theorem \ref{thm:bco2}.

\begin{algorithm2e}[H]
\caption{Two-point bandit convex optimization on manifolds with a separation oracle}
\label{alg:bco2}
\KwData{feasible gsc-convex set $\K$, radii $(R,r)$, step-size $\eta$,  parameters $(\d,\d',\beta)$: $\d\in(0,1),\beta\in(0,1),\d'=(1-\beta)\frac{{\sqrt{-\k}(R+r)}}{\sinh\lt({\sqrt{-\k}(R+r)}\rt)} r $.}
Initialize $\x_1\in\beta\K$, $\y_1=\expmap_{\p}\lt(\frac{\invexp_{\p}\x_1}{\beta}\rt)$ \tcp*[l]{$\y_1\in\K$}
\For{$t=1,\dots,T$} {
    Sample $\z_t\sim \S_{\x_t}(\d')$\\
    Play $\z_t$ and its antipodal point $\tz_t$\\
    Observe $f_t$ at $\z_t$ and $\tz_t$\\
    Construct $\g_t$ by $\g_t=\frac{1}{2}(f_t(\z_t)-f_t(\tz_t))\frac{\invexp_{\x_t}\z_t}{\|\invexp_{\x_t}\z_t\|}$\\
    $\y_{t+1}'=\expmap_{\y_t}(-\eta\G_{\x_t}^{\y_t}\g_t)$\\
    $\y_{t+1}\leftarrow$  Output of Algorithm \ref{alg:ipso} with  $\K$, radius $r$, squeeze parameter $\d$ and initial point $\y_{t+1}'$\\
    $\x_{t+1}=\expmap_{\p}\lt(\beta\invexp_{\p}\y_{t+1}\rt)$ \tcp*[l]{$\x_{t+1}\in\beta\K$}
}
\end{algorithm2e}
\begin{thm}\label{thm:bco2}
        (Proof in Appendix \ref{app:bco2}) Under Assumptions \ref{ass:homo}, \ref{ass:radii} and \ref{ass:bounded}. Set $\eta=1$, $\d=1-\beta=T^{-\frac{1}{2}}$, then the regret of Algorithm \ref{alg:bco2} is upper bounded by
        \[
       \sup_{[s,e]\subseteq [T]}\lt\{ \E\lt[\sum_{t=s}^e f_t(\z_t)-\min_{\x^*\in\K}\sum_{t=s}^e f_t(\x^*)\rt]\rt\}=O\lt(\sqrt{T}\rt)
        \]
    and the number of calls to the separation oracle is $O(T)$.
\end{thm}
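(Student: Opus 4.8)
The plan is to reduce the two‑point bandit regret to the linearized infeasible R‑OGD analysis run on the auxiliary sequence $\{\y_t\}\subseteq\K$, absorbing all four discrepancies — sampling, smoothing, squeezing, and the parallel‑transport mismatch between $\x_t$ and $\y_t$ — into error terms that each accumulate to $O(\sqrt T)$ under $\eta=1$, $\d=1-\beta=T^{-1/2}$ and $\d'=O(T^{-1/2})$. I would first record the properties of $\g_t$ supplied by the homogeneous‑and‑symmetric structure (Assumption \ref{ass:homo}) via \citet{wang2023online}: writing $\lambda=S_{\d'}/V_{\d'}$ and letting $\hat{f}_t$ be the $\d'$‑smoothing of $f_t$ over the geodesic sphere $\S_{\x_t}(\d')$, the antipodal construction gives the conditional unbiasedness $\E[\lambda\g_t\mid\mathcal{F}_{t-1}]=\nabla\hat{f}_t(\x_t)$, the smoothing bound $|\hat{f}_t(\x)-f_t(\x)|\le G\d'$, and gsc‑convexity of $\hat{f}_t$ on $\BpR$; moreover $\|\g_t\|\le G\d'$ (antipodal points are $2\d'$ apart and $f_t$ is $G$‑Lipschitz) and $\lambda=O(1/\d')$, so $\lambda\|\g_t\|=O(1)$.

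Next I would run the squared‑distance recursion underlying Lemma \ref{lem:reg_iogd} on $\{\y_t\}$, using that parallel transport is an isometry ($\|\G_{\x_t}^{\y_t}\g_t\|=\|\g_t\|$) and that the infeasible projection onto $(1-\d)\K$ is non‑expansive toward any fixed $\x_I\in(1-\d)\K$ (Lemmas \ref{lem:ipso}, \ref{lem:pull_ip}, both applicable since $d(\cdot,\cdot)\le 2R$ on $\BpR$ and neither needs $(1-\d)\K$ gsc‑convex). This yields, for every interval $I=[s,e]$ and fixed $\x_I\in(1-\d)\K$,
\[
\sumse\la\G_{\x_t}^{\y_t}\g_t,-\invexp_{\y_t}\x_I\ra\le\frac{d(\y_s,\x_I)^2}{2\eta}+\frac{\zeta\eta}{2}\sumse\|\g_t\|^2.
\]
I would then transfer the inner product back to $T_{\x_t}\M$ via the isometry identity $\la\G_{\x_t}^{\y_t}\g_t,-\invexp_{\y_t}\x_I\ra=\la\g_t,-\invexp_{\x_t}\x_I\ra+\la\g_t,\invexp_{\x_t}\x_I-\G_{\y_t}^{\x_t}\invexp_{\y_t}\x_I\ra$, multiply by $\lambda$, take expectations, and invoke gsc‑convexity of $\hat{f}_t$ to replace $\E\la\lambda\g_t,-\invexp_{\x_t}\x_I\ra=\la\nabla\hat{f}_t(\x_t),-\invexp_{\x_t}\x_I\ra$ by a lower bound of $\hat{f}_t(\x_t)-\hat{f}_t(\x_I)$. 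Choosing the comparator $\x_I=\expmap_{\p}((1-\d)\invexp_{\p}\x^*)\in(1-\d)\K$ (so $d(\x_I,\x^*)\le\d R$) and chaining $f_t(\z_t)\le f_t(\x_t)+G\d'\le\hat{f}_t(\x_t)+2G\d'$ on the play side with $\hat{f}_t(\x_I)\le f_t(\x^*)+G\d'+G\d R$ on the comparator side converts the smoothed regret into the claimed regret of $\z_t$ against $\x^*$.

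It then remains to verify that every term is $O(\sqrt T)$ over any interval, so the supremum inherits the bound and the adaptivity comes for free from the constant step size. With $\eta=1$ the leading term is $\lambda\,d(\y_s,\x_I)^2/2=O(R^2/\d')=O(\sqrt T)$; the variance term obeys $\tfrac{\lambda\zeta}{2}\sumse\|\g_t\|^2=O(1/\d')\cdot T\cdot\d'^2=O(\sqrt T)$; and the smoothing/sampling/squeezing slacks contribute $O(T\d')+O(T\d)=O(\sqrt T)$. The oracle‑call count follows from Lemma \ref{lem:ipso} by telescoping: since $\y_t\in\K$ gives $d(\y_t,(1-\d)\K)\le\d R$ and the R‑OGD step moves at most $\eta\|\g_t\|\le G\d'$, we get $d(\y_{t+1}',(1-\d)\K)^2=O(\d^2+\d'^2)=O(\d^2)$, so each round costs $O\!\big(\tfrac{\zeta\,d(\y_{t+1}',(1-\d)\K)^2}{\d^2\br^2}\big)+1=O(1)$ separation‑oracle calls, i.e.\ $O(T)$ in total.

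The hard part will be the parallel‑transport distortion $\la\g_t,\invexp_{\x_t}\x_I-\G_{\y_t}^{\x_t}\invexp_{\y_t}\x_I\ra$: unlike Euclidean space, where it equals $\la\g_t,\x_t-\y_t\ra$, here I must bound $\|\invexp_{\x_t}\x_I-\G_{\y_t}^{\x_t}\invexp_{\y_t}\x_I\|$ by $O(d(\x_t,\y_t))=O(\d')$ (using $d(\x_t,\y_t)=(1-\beta)d(\p,\y_t)\le(1-\beta)R$) through a Jacobi‑field comparison controlling the variation of the inverse exponential map in its base point on a manifold of curvature in $[\k,0]$; after scaling by $\lambda$ and using $\lambda\|\g_t\|=O(1)$ this is $O(\d')$ per round and $O(\sqrt T)$ in aggregate, exactly the estimate displayed before the theorem. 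A secondary subtlety is justifying the unbiasedness constant $\lambda$ and the gsc‑convexity of the manifold smoothing, for which I would lean on the symmetry hypothesis of Assumption \ref{ass:homo} and the estimator analysis of \citet{wang2023online}.
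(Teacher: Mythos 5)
Your proposal follows the paper's own proof almost step for step: the same split of $\la\g_t,-\invexp_{\x_t}\x_I\ra$ into the transported term $\la\g_t,-\G_{\y_t}^{\x_t}\invexp_{\y_t}\x_I\ra$ plus a distortion term, the same R-OGD recursion on $\{\y_t\}$ through the infeasible projection onto $(1-\d)\K$, the same shifted comparator $\expmap_{\p}((1-\d)\invexp_{\p}\x^*)$, and the same $O\lt(\frac{1}{\eta\d'}+(\d+\d')T+\eta\d'T\rt)$ accounting. Even your distortion bound $\|\invexp_{\x_t}\x_I-\G_{\y_t}^{\x_t}\invexp_{\y_t}\x_I\|\leq\zeta\, d(\x_t,\y_t)$ is exactly what the paper uses; it quotes the $\zeta$-gsc-smoothness of $\frac{1}{2}d(\cdot,\x)^2$ (Lemma \ref{lem:smooth}) rather than re-deriving it by a Jacobi-field comparison, but that is the same estimate. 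Your oracle count is marginally simpler (a per-round $O(1)$ bound via Lemma \ref{lem:ipso} instead of the paper's telescoping), and it is valid since $d(\y_{t+1}',(1-\d)\K)\leq\d R+\eta\d'G=O(\d)$ when $\d'\leq\d r$.

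The one step that would fail as written is your appeal to ``gsc-convexity of $\hf_t$.'' On a Hadamard manifold the geodesic-ball average of a gsc-convex function is generally \emph{not} gsc-convex: there is no canonical way to identify the integration variable across different base points, and this is precisely why the estimator analysis you cite only provides the approximate first-order inequality $\hf_t(\x_t)-\hf_t(\x)\leq\la\nabla\hf_t(\x_t),-\invexp_{\x_t}\x\ra+2\d'\rho G$ (Lemma \ref{lem:wang}, item 3), with a constant $\rho$ that depends on $\K$ and can be exponential in the diameter. The symmetry hypothesis in Assumption \ref{ass:homo} buys unbiasedness of the two-point estimator, not convexity of the smoothing, so you cannot lean on it for this step. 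The repair is immediate and is what the paper does: replace exact convexity of $\hf_t$ by the approximate inequality, which adds $2\rho\d'GT=O(\d'T)=O(\sqrt{T})$ to the regret — already within your error budget — so the gap is local and does not change the final bound or the structure of your argument.
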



\section{Riemannian OCO with a Linear Optimization Oracle}\label{sec:loo}
In this section, we focus on performing projection-free OCO on Riemannian manifolds utilizing a linear oracle. The linear oracle is invoked inside the Riemannian Frank-Wolfe (RFW) algorithm \citep{weber2022riemannian}. We outline how to obtain a separating hyperplane by RFW, as detailed in Algorithm \ref{alg:sh} and Lemma \ref{lem:dist}.


\begin{algorithm2e}[H]
\caption{Separating Hyperplane via RFW}\label{alg:sh}
\KwData{feasible gsc-convex set $\K$, error tolerance $\epsilon$, initial point $\x_1\in\K$, target vector $\y$.}
\For{$i=1,\dots$}{
    $\v_i=\argmin_{\x\in\K}\{\la -\invexp_{\x_i}\y,\invexp_{\x_i}\x\ra\}$\\
    \If {$\{\la \invexp_{\x_i}\y,\invexp_{\x_i}\v_i\ra\}\leq \epsilon$ or $d(\x_i,\y)^2\leq 3\epsilon$}{
    \Return $\xt\leftarrow\x_i$
    }
    $\sigma_i=\argmin_{\sigma\in[0,1]}\{d(\y, \expmap_{\x_i}(\sigma\invexp_{\x_i}\v_i))^2\}$\\
    $\x_{i+1}=\expmap_{\x_i}(\sigma_i \invexp_{\x_i}\v_i)$\\
}
\end{algorithm2e}
\begin{lemma}\label{lem:dist}
    (Proof in Appendix \ref{app:dist})  Under Assumptions \ref{ass:hada} and \ref{ass:radii}. For any $\y\in \B_{\p}(R)$, Algorithm \ref{alg:sh} terminates after at most $\zeta\lc(27R^2/\epsilon)-2\rc$ iterations and returns $\xt\in\K$ satisfies:
    \begin{enumerate}[label=\arabic*),leftmargin=*]
        \item $d(\xt,\y)^2\leq d(\x_1,\y)^2$.
        \item At least one of the following holds: $d(\xt,\y)^2\leq 3\epsilon$ or $\forall\z\in\K:\la\invexp_{\y}\z,\invexp_{\y}\xt\ra\geq 2\epsilon$.
        \item If $d(\y,\K)\leq \epsilon$ then $d(\xt,\y)^2\leq 3\eps$.
    \end{enumerate}
\end{lemma}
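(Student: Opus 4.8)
The plan is to recognize Algorithm \ref{alg:sh} as Riemannian Frank--Wolfe applied to the squared-distance objective $h(\x)=\tfrac12 d(\x,\y)^2$ over $\K$, whose Riemannian gradient is $\nabla h(\x)=-\invexp_{\x}\y$. Under this identification the vertex $\v_i$ is precisely the linear-minimization step for $h$ at $\x_i$, and the quantity $g_i:=\la\invexp_{\x_i}\y,\invexp_{\x_i}\v_i\ra$ tested in the stopping rule equals the Frank--Wolfe duality gap $\la-\nabla h(\x_i),\invexp_{\x_i}\v_i\ra$. Since $\M$ is Hadamard, $h$ is gsc-convex, so for the minimizer of $h$ over $\K$ and $h^\star:=\tfrac12 d(\y,\K)^2$ the gsc-convexity inequality \eqref{eq:gsc-convex} together with the optimality of $\v_i$ yields $\delta_i:=h(\x_i)-h^\star\le g_i$.

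First I would establish a descent lemma. Fixing the geodesic $\sigma\mapsto\expmap_{\x_i}(\sigma\invexp_{\x_i}\v_i)$ and applying the trigonometric distance comparison for curvature lower-bounded by $\k$ (Assumption \ref{ass:hada}) to the triangle with vertices $\y$, $\x_i$, and $\expmap_{\x_i}(\sigma\invexp_{\x_i}\v_i)$, while using that all relevant points lie in $\B_{\p}(R)$ so pairwise distances are at most $2R$ and the comparison constant is the $\zeta$ of Definition \ref{def:zeta}, I obtain
\[
 d(\x_{i+1},\y)^2\le d(\x_i,\y)^2-2\sigma g_i+4\zeta R^2\sigma^2\quad\text{for all }\sigma\in[0,1],
\]
where the left-hand side uses the line-search optimality of $\sigma_i$. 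Taking $\sigma=0$ shows $d(\x_{i+1},\y)\le d(\x_i,\y)$, hence $d(\xt,\y)^2\le d(\x_1,\y)^2$, proving item~1. Optimizing over $\sigma$ (the minimizer $g_i/(4\zeta R^2)$ lies in $[0,1]$ since $g_i\le d(\x_i,\v_i)d(\x_i,\y)\le 4R^2\le 4\zeta R^2$) gives the clean descent $\delta_{i+1}\le\delta_i-\delta_i^2/(8\zeta R^2)$ via $g_i\ge\delta_i$.

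For the iteration bound I would solve this recursion: with $\delta_1\le h(\x_1)\le 2R^2\le 4\zeta R^2$, an induction gives $\delta_i\le 8\zeta R^2/(i+1)$. This controls $h(\x_i)$ but not the gap, so to trigger the stopping rule I telescope the descent over a window: summing $g_j^2/(8\zeta R^2)\le\delta_j-\delta_{j+1}$ from $j=s$ to $2s$ gives $\min_{s\le j\le 2s} g_j^2\le (8\zeta R^2)\,\delta_s/(s+1)\le (8\zeta R^2/(s+1))^2$, so the gap falls below $\eps$ once $s+1\gtrsim 8\zeta R^2/\eps$; simultaneously $d(\x_i,\y)^2=2\delta_i+2h^\star$ drops below $3\eps$ whenever $h^\star$ is small. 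Either way one of the two stopping conditions fires after $O(\zeta R^2/\eps)$ rounds, and tracking the constants through the window argument and the ceiling yields the stated $\zeta\lc(27R^2/\eps)-2\rc$.

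The crux is item~2, which must transfer information from the tangent space at $\x_i$ to a separation statement anchored at $\y$. Upon termination, if the rule fired through $d(\x_i,\y)^2\le 3\eps$ then the first alternative of item~2 (and, after checking $h^\star\le\eps/2$ under $d(\y,\K)\le\eps\le1$, item~3) holds immediately. Otherwise $g_i\le\eps$ while $d(\x_i,\y)^2>3\eps$; here the optimality of $\v_i$ gives $\la\invexp_{\x_i}\y,\invexp_{\x_i}\z\ra\le g_i\le\eps$ for every $\z\in\K$. I would then invoke the nonpositive-curvature comparison for the triangle $\x_i,\y,\z$, namely $\la\invexp_{\x_i}\y,\invexp_{\x_i}\z\ra+\la\invexp_{\y}\x_i,\invexp_{\y}\z\ra\ge d(\x_i,\y)^2$, which follows because each geodesic angle is dominated by its Euclidean comparison angle and the Euclidean identity expresses the common side $\x_i\y$ as the sum of the two projections. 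Combining the two displays gives $\la\invexp_{\y}\xt,\invexp_{\y}\z\ra\ge d(\x_i,\y)^2-\eps>2\eps$ for all $\z\in\K$, the separating-hyperplane alternative. I expect this angle-comparison transfer to be the main obstacle, since it is exactly where the curvature sign is used and where the direction of every inequality must be controlled; the descent step and the recursion are routine once the comparison constant $\zeta$ is in place.
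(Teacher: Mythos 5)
Your proposal follows essentially the same route as the paper's proof: you identify Algorithm \ref{alg:sh} as Riemannian Frank--Wolfe on $h(\x)=\tfrac{1}{2}d(\x,\y)^2$, obtain item~1 from line-search monotonicity (the $\sigma=0$ comparison), get the iteration bound from a Jaggi-style windowed duality-gap argument (the paper packages exactly this as Lemma \ref{lem:dual}, applied with $L=\zeta$ and $D=2R$ via the $\zeta$-gsc-smoothness of $h$), and prove item~2 through the same key inequality
\[
\la\invexp_{\y}\z,\invexp_{\y}\xt\ra+\la\invexp_{\xt}\z,\invexp_{\xt}\y\ra\geq d(\xt,\y)^2,
\]
which the paper derives by applying Lemma \ref{lem:cos2} at both endpoints of the segment $[\xt,\y]$ and you derive by the equivalent comparison-angle argument. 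All of that is sound, and your constants sit comfortably inside the stated bound.

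The one genuine flaw is your treatment of item~3, where the case analysis is inverted. When the algorithm terminates because $d(\xt,\y)^2\le 3\eps$, item~3 holds trivially --- no ``checking $h^\star\le\eps/2$'' is needed, contrary to what your parenthetical suggests. The case that actually requires an argument is termination through the gap condition $g_i\le\eps$ while $d(\xt,\y)^2>3\eps$; there you prove only the separation alternative of item~2 and never return to item~3. The missing step is exactly the paper's: by gsc-convexity of $h$ and optimality of $\v_i$ one has $\delta_i\le g_i\le\eps$ (a fact you state in your first paragraph), hence $d(\xt,\y)^2=2\delta_i+2h^\star\le 2\eps+d(\y,\K)^2\le 3\eps$ whenever $d(\y,\K)\le\eps$ (with $\eps\le 1$, or reading the hypothesis as $d(\y,\K)^2\le\eps$ as the paper's own proof implicitly does), contradicting $d(\xt,\y)^2>3\eps$. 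So under the hypothesis of item~3 that termination branch cannot occur, and the returned point always satisfies $d(\xt,\y)^2\le 3\eps$. You have every ingredient for this one-line argument --- $\delta_i\le g_i$ and the identity $d(\x_i,\y)^2=2\delta_i+2h^\star$ both appear in your proposal --- but they are never assembled where item~3 needs them.
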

\begin{remark}
Note that the second item of Lemma \ref{lem:dist} provides a separating hyperplane between $\y$ and $\K$. One of the challenges in its proof is to find an analog of the Euclidean identity $\la\z-\y,\tx-\y\ra=\|\tx-\y\|_2^2-\la\z-\tx,\y-\tx\ra$ on manifolds, which initially appears to be a daunting task. However, a clever application of Lemma \ref{lem:cos2} (Appendix \ref{app:tech}) provides a solution.
\end{remark}
\begin{algorithm2e}[H]
\caption{Closer Infeasible Projection via LOO}\label{alg:iploo}
\KwData{feasible gsc-convex set $\K$,  $\x_0\in\K$, initial point $\y_0$, error tolerance $\eps$, step size $\gamma$.}
$\y_1=\Pi_{\BpR}\y_0$\\
\If {$d(\x_0,\y_0)^2\leq 3\eps$} {
    \Return $\x\leftarrow\x_0,\y\leftarrow\y_1$.\\
}
\For{$i=1,\dots,T$}{
    $\x_i\leftarrow$ Output of Algorithm \ref{alg:sh} with set $\K$, feasible point $\x_{i-1}$, initial point $\y_i$ and tolerance $\eps$.\\
    \uIf {$d(\x_i,\y_i)^2>3\eps$} {
    $\y_{i+1}=\expmap_{\x_i}((1-\gamma)\invexp_{\x_i}\y_i)$\\
    }
    \Else {
    \Return $\x\leftarrow\x_i,\y\leftarrow\y_i$.
    }
}
\end{algorithm2e}
Algorithm \ref{alg:iploo} demonstrates how to "pull" an initial point $\y_0$ towards $\K$ using RFW, while Lemma \ref{lem:iploo} verifies that the output of Algorithm \ref{alg:iploo} is indeed an infeasible projection onto $\K$.

\begin{lemma}\label{lem:iploo}
    (Proof in Appendix \ref{app:iploo})  Under Assumptions \ref{ass:hada} and \ref{ass:radii}. Fix $\eps>0$. Setting $\gamma=\frac{2\eps}{d(\x_0,\y_0)^2}$, Algorithm \ref{alg:iploo} stops after at most $\max\lt\{\frac{\zeta d(\x_0,\y_0)^2(d(\x_0,\y_0)^2-\eps)}{4\eps^2}+1,1\rt\}$ iterations, and returns $(\x,\y)\in\K\times \BpR$ such that $$\textstyle\forall\z\in\K:\;d(\y,\z)^2\leq d(\y_0,\z)^2\quad \text{and}\quad d(\x,\y)^2\leq 3\eps.$$

\end{lemma}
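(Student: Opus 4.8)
The plan is to view Algorithm \ref{alg:iploo} as a sequence of "pull" steps, each of which either certifies that the current iterate is already $\eps$-close to $\K$ (and we stop) or produces a separating hyperplane via the Frank--Wolfe subroutine (Lemma \ref{lem:dist}) that is fed into the progress lemma (Lemma \ref{lem:pull_ip}). First I would dispose of the trivial branch: if $d(\x_0,\y_0)^2\le 3\eps$ the algorithm returns $(\x_0,\y_1)$ with $\y_1=\Pi_{\BpR}\y_0$, and since $\x_0\in\K\subseteq\BpR$, the contraction property of the metric projection onto the gsc-convex ball $\BpR$ gives $d(\x_0,\y_1)\le d(\x_0,\y_0)$ and $d(\y_1,\z)\le d(\y_0,\z)$ for every $\z\in\BpR$, so both conclusions hold at once.

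For the main loop I would establish three invariants by induction on $i$: (a) $\y_i\in\BpR$; (b) $d(\x_i,\y_i)\le d(\x_0,\y_0)$; and (c) $d(\y_{i+1},\z)^2\le d(\y_i,\z)^2$ for all $\z\in\K$ whenever a pull is performed. Invariant (a) holds because $\y_1=\Pi_{\BpR}\y_0\in\BpR$ and each update $\y_{i+1}=\expmap_{\x_i}((1-\gamma)\invexp_{\x_i}\y_i)$ places $\y_{i+1}$ on the geodesic joining $\x_i\in\K\subseteq\BpR$ to $\y_i\in\BpR$, which stays in $\BpR$ by geodesic convexity (with $\gamma\in(0,1)$ since we are past the trivial case). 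Invariant (b) is the crucial one: $d(\x_i,\y_{i+1})=(1-\gamma)d(\x_i,\y_i)\le d(\x_i,\y_i)$, and then the Frank--Wolfe call at step $i+1$ returns $\x_{i+1}$ with $d(\x_{i+1},\y_{i+1})^2\le d(\x_i,\y_{i+1})^2$ by item 1 of Lemma \ref{lem:dist}; chaining with the base case $d(\x_1,\y_1)^2\le d(\x_0,\y_1)^2\le d(\x_0,\y_0)^2$ shows $d(\x_i,\y_i)$ is non-increasing, hence uniformly bounded by $C:=d(\x_0,\y_0)$.

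With (b) in hand I would extract the per-iteration progress. Whenever a pull occurs we have $d(\x_i,\y_i)^2>3\eps$, so by item 3 of Lemma \ref{lem:dist} we must have $d(\y_i,\K)>\eps$ (in particular $\y_i\notin\K$), and by item 2 the separating-hyperplane alternative is in force: $\la\invexp_{\y_i}\z,\invexp_{\y_i}\x_i\ra\ge 2\eps$ for all $\z\in\K$. Writing $\g=-\invexp_{\y_i}\x_i$, this is precisely the hypothesis of Lemma \ref{lem:pull_ip} with $Q=2\eps$ and $\|\g\|=d(\x_i,\y_i)\le C$, while the update $\y_{i+1}=\expmap_{\y_i}(\gamma\invexp_{\y_i}\x_i)=\expmap_{\y_i}(-\gamma\g)$ is the prescribed pull (using $d(\y_i,\z)\le 2R$ since both points lie in $\BpR$). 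Lemma \ref{lem:pull_ip} then yields $d(\y_{i+1},\z)^2\le d(\y_i,\z)^2-\tfrac{4\eps^2}{\zeta C^2}$ for every $\z\in\K$. Fixing $\z^*$ to be the point of $\K$ nearest to $\y_1$ and telescoping this bound over the pull steps, together with $d(\y_N,\z^*)^2\ge 0$, bounds the number of pulls, and hence the total iteration count, by the claimed $\tfrac{\zeta d(\x_0,\y_0)^2(d(\x_0,\y_0)^2-\eps)}{4\eps^2}+1$; here one uses $d(\y_1,\K)^2\le d(\y_1,\x_0)^2\le d(\x_0,\y_0)^2$, the $-\eps$ being recovered by accounting for the residual at the terminal step.

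Finally, correctness of the output $(\x,\y)=(\x_N,\y_N)$ follows from the invariants: $\x_N\in\K$ because it is returned by the Frank--Wolfe subroutine; $d(\x_N,\y_N)^2\le 3\eps$ is exactly the stopping condition; and chaining invariant (c) across all iterations with the base inequality $d(\y_1,\z)^2\le d(\y_0,\z)^2$ gives $d(\y_N,\z)^2\le d(\y_0,\z)^2$ for every $\z\in\K$, the infeasible-projection property. I expect the main obstacle to be the bookkeeping in invariant (b) --- showing that the Frank--Wolfe "denoising" never lets $d(\x_i,\y_i)$ grow, so that a single uniform constant $C=d(\x_0,\y_0)$ controls $\|\g\|$ across all rounds and the fixed step is admissible for Lemma \ref{lem:pull_ip}; a secondary subtlety is verifying at each pull that $\y_i\notin\K$ (via item 3 of Lemma \ref{lem:dist}), so that the separating-hyperplane branch, rather than the $3\eps$-closeness branch, is the one that applies.
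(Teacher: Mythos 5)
Your proposal is correct and follows essentially the same route as the paper's proof: the trivial-case dispatch, the monotonicity invariant $d(\x_i,\y_i)\le d(\x_0,\y_0)$ (the paper's auxiliary Lemma~\ref{lem:auxi}, proved exactly as you do via the shrink step and item~1 of Lemma~\ref{lem:dist}), the application of Lemma~\ref{lem:pull_ip} with $\g=-\invexp_{\y_i}\x_i$, $Q=2\eps$, $C=d(\x_0,\y_0)$, and the termination argument via items~2 and~3 of Lemma~\ref{lem:dist}. The only cosmetic difference is your telescoping against a fixed nearest point $\z^*$ of $\y_1$ rather than the paper's moving nearest points $\x_i^*=\argmin_{\x\in\K}d(\x,\y_i)$, which yields the same iteration bound once the terminal residual $d(\y_i,\K)^2>\eps$ is invoked, as you indicate.
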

Given that Lemma \ref{lem:iploo} provides an infeasible projection oracle, we can combine Algorithms \ref{alg:iogd} and \ref{alg:iploo} to achieve sublinear regret by setting the error tolerance as $\eps=o(1)$.  However, RFW requires $\Theta(\sfrac{1}{\eps})=\omega(1)$ iterations in the worst-case scenario (Lemma \ref{lem:dist}), and the resulting algorithm necessitates $\omega(T)$ calls to the linear optimization oracle. \citet{garber2022new} utilize a block trick to address this challenge: the time horizon $T$ is broken into $B$ blocks, and the infeasible projection is computed once in each block. We demonstrate that this trick can be implemented on Riemannian manifolds in Algorithm \ref{alg:loo-boco}.
\begin{algorithm2e}[t]
\caption{Block OGD on manifolds with a linear optimization oracle}\label{alg:loo-boco}
\KwData{horizon $T$, feasible gsc-convex set $\K$, number of blocks $B$, step-sizes $\{\eta_i\}_{i=1}^{\frac{T}{B}}$, error  tolerances $\{\eps_i\}_{i=1}^{\frac{T}{B}}$.}
Choose $\x_0,\x_1\in\K$\\
$\ty_0=\x_0,\y_1=\x_0,\ty_1=\x_1,\nabla_1=\mathbf{0}\in T_{\yt_0}\M$\\
\For{$t=1,\dots,B$}{
    Play $\x_0$; observe $f_t(\x_0)$\\
    $\nabla_1=\nabla_1+\nabla f_t(\yt_0)$\\
}
$\y_{B+1}=\expmap_{\yt_0}(-\eta_1\nabla_1)$.\\
\For{$i=2,\dots,\frac{T}{B}$}{
     $(\x_i,\yt_i)\in\K\times\BpR\leftarrow$output of Algorithm \ref{alg:iploo} with input $\K$, $\x_{i-2}$, $\y_{(i-1)B+1}$ and $\eps_i$.\\
    $\y_{(i-1)B+1}=\yt_{i-1};\nabla_i=\mathbf{0}\in T_{\yt_{i-1}}\M$.\\
    \For{$s=1,\dots,B$}{
    Play $\x_{i-1}$ and observe $f_t(\x_{i-1})$\\
    $\nabla_i=\nabla_i+\nabla f_{(i-1)B+s}(\yt_{i-1})$\\
    }
    $\y_{iB+1}=\expmap_{\yt_{i-1}}(-\eta_i\nabla_i)$.
}
\end{algorithm2e}
We present the regret guarantees for gsc-convex and strongly gsc-convex losses in Theorem \ref{thm:ocoloo1} and Theorem \ref{thm:ocoloo2}, respectively.

\begin{thm}\label{thm:ocoloo1}
    (Proof in Appendix \ref{app:ocoloo1}) Under Assumptions \ref{ass:hada}, \ref{ass:radii} and \ref{ass:bounded}. Fixing $\eta_i$ and $\eps_i$ as $\eta=\frac{R\zeta}{G}T^{-\frac{3}{4}}$ and $\eps=60R^2\zeta^2 T^{-\frac{1}{2}}$, respectively, for any $i\in\lt\{1,\dots,\frac{T}{B}\rt\}$. Setting $B=5T^{\sfrac{1}{2}}$. Then the regret of Algorithm \ref{alg:loo-boco} for gsc-convex losses is bounded by
    \begin{equation*}
        \sup_{I=[s,e]\subseteq[T]}\lt\{\tse f_t(\x_t)-\min_{\x_I\in\K}\tse f_t(\x_I)\rt\}\leq GR\lt(\frac{5}{2}T^{\sfrac{3}{4}}\zeta^2+\zeta\sqrt{180}T^{\sfrac{3}{4}}+4T^{\sfrac{3}{4}}/\zeta+20T^{\sfrac{1}{2}}\rt),
    \end{equation*}
    and the number of calls to the linear optimization oracle is bounded by $T$.
\end{thm}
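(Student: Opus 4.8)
The plan is to recognize Algorithm \ref{alg:loo-boco} as Infeasible Riemannian OGD (Algorithm \ref{alg:iogd}) run on a \emph{meta-problem} whose rounds are the $T/B$ blocks. Within block $i$ we play the fixed feasible point $\x_{i-1}$ and accumulate $\nabla_i=\sum_{s=1}^B\nabla f_{(i-1)B+s}(\yt_{i-1})$, which is precisely the Riemannian gradient at $\yt_{i-1}$ of the block loss $F_i\coloneqq\sum_{s=1}^B f_{(i-1)B+s}$; every summand lives in the single tangent space $T_{\yt_{i-1}}\M$, so the accumulation needs no parallel transport. Each $F_i$ is gsc-convex as a sum of gsc-convex losses, and Lemma \ref{lem:iploo} certifies that the per-block call to Algorithm \ref{alg:iploo} returns a genuine infeasible projection onto $\tK=\K$ with $\yt_i\in\BpR$, so the hypotheses of Lemma \ref{lem:reg_iogd}(1) hold at the meta-level. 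I would invoke it to obtain, for every block-interval contained in $I=[s,e]$,
\begin{equation*}
\sum_{\text{blocks }i\in I}\lt[F_i(\yt_{i-1})-F_i(\x_I)\rt]\leq\frac{d(\yt_s,\x_I)^2}{2\eta}+\frac{\eta\zeta\sum_i\|\nabla_i\|^2}{2}\leq\frac{2R^2}{\eta}+\frac{\eta\zeta}{2}\cdot\frac{|I|}{B}(BG)^2,
\end{equation*}
using $d(\yt_s,\x_I)\leq 2R$ and $\|\nabla_i\|\leq BG$.

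Next I would split the true regret over $I$ into three pieces. First, swapping the played point $\x_{i-1}$ for the gradient base point $\yt_{i-1}$ costs at most $G\,d(\x_{i-1},\yt_{i-1})\leq G\sqrt{3\eps}$ per round, by $G$-Lipschitzness and the guarantee $d(\x_{i-1},\yt_{i-1})^2\leq 3\eps$ of Lemma \ref{lem:iploo}; this totals $|I|G\sqrt{3\eps}$. Second, the rounds in fully-contained blocks are governed by the displayed meta-regret bound. Third, the at most $2B$ rounds in the two partial blocks at the endpoints of $I$ each contribute at most $2RG$ (diameter times gradient bound), i.e.\ $4BRG$ in all. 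Substituting $\eta=\frac{R\zeta}{G}T^{-3/4}$, $\eps=60R^2\zeta^2T^{-1/2}$ and $B=5T^{1/2}$ turns the meta-regret into $O(GR(\zeta^2+\zeta^{-1})T^{3/4})$, the swapping error into $\sqrt{180}\,GR\zeta T^{3/4}$, and the partial-block error into $20GRT^{1/2}$, which add up to the stated bound.

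For the oracle count the parameters must be balanced with care. The number of blocks is $T/B=\tfrac15 T^{1/2}$. The crucial point is that the input to the $i$-th projection, $\y_{(i-1)B+1}=\expmap_{\yt_{i-2}}(-\eta\nabla_{i-1})$, stays close to its feasible anchor $\x_{i-2}$: combining $d(\x_{i-2},\yt_{i-2})\leq\sqrt{3\eps}$ with the short step $d(\yt_{i-2},\y_{(i-1)B+1})=\eta\|\nabla_{i-1}\|\leq\eta BG=5R\zeta T^{-1/4}$ gives $d(\x_{i-2},\y_{(i-1)B+1})^2=O(\eps)$. Feeding $d^2=O(\eps)$ into the iteration bound of Lemma \ref{lem:iploo} collapses it from the naive $O(T)$ to $O(\zeta)$ iterations per block, and each iteration runs Algorithm \ref{alg:sh}, which by Lemma \ref{lem:dist} issues $O(T^{1/2}/\zeta)$ linear-oracle calls. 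The product $O(T^{1/2})\cdot O(\zeta)\cdot O(T^{1/2}/\zeta)=O(T)$, and tracking the explicit constants $60$, $5$, $27$ confirms the total is at most $T$.

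\textbf{Main obstacle.} I expect the oracle count, not the regret bound, to be the real difficulty. Naively the per-block projection of Lemma \ref{lem:iploo} costs $\Theta(\zeta d^2/\eps^2)$ iterations, which for a diameter-scale $d$ is $\Omega(T)$ per block and $\Omega(T^{3/2})$ overall; the scheme survives only because the small step size $\eta=O(T^{-3/4})$ keeps every post-OGD point within $O(\sqrt{\eps})$ of the feasible set, forcing the warm-started projection to finish in $O(\zeta)$ rounds. Making this quantitative—in particular propagating the invariant $d(\x_{i-2},\yt_{i-2})\leq\sqrt{3\eps}$ across blocks and showing it survives one OGD step—is the crux, and it is exactly what couples the otherwise-independent choices of $\eta$, $\eps$ and $B$.
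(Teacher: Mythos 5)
Your overall decomposition --- block-OGD regret of the $\yt$ sequence, plus a $G\sqrt{3\eps}$-per-round cost for playing $\x_{i-1}$ instead of $\yt_{i-1}$, plus a $2RG$-per-round cost on the at most $2B$ rounds in partial blocks, plus the warm-start argument for the oracle count --- is exactly the paper's (Lemma \ref{lem:ipgsc} and Appendix \ref{app:ocoloo1}), and your parameter accounting for the oracle count is sound. However, there is one genuine flaw in your first step: Algorithm \ref{alg:loo-boco} is \emph{not} an instance of Algorithm \ref{alg:iogd} at the block level, so Lemma \ref{lem:reg_iogd} cannot be invoked as a black box. In Algorithm \ref{alg:loo-boco}, the gradients of block $i$ are evaluated at $\yt_{i-1}$ and the OGD step produces $\y_{iB+1}=\expmap_{\yt_{i-1}}(-\eta\nabla_i)$, but the infeasible projection of $\y_{iB+1}$ is assigned to $\yt_{i+1}$, not to $\yt_i$: the point $\yt_i$ at which block $i+1$'s gradients are taken is the projection of the OGD step taken from $\yt_{i-2}$ in block $i-1$. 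In other words, there is a one-block delay, and the $\yt$-sequence splits into two interleaved OGD chains (even and odd block indices). A direct application of Lemma \ref{lem:reg_iogd}(1), which requires the update structure $\yt_{t+1}=\O(\tK,\expmap_{\yt_t}(-\eta\nabla f_t(\yt_t)))$, therefore does not match the algorithm, and your claimed bound $\frac{d(\yt_s,\x_I)^2}{2\eta}\le\frac{2R^2}{\eta}$ is not justified as written.

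The repair is easy and is effectively what the paper does: either apply Lemma \ref{lem:reg_iogd} separately to the even-indexed and odd-indexed chains (each chain \emph{is} a valid instance of Algorithm \ref{alg:iogd} on its own sub-sequence of block losses), or redo the one-step inequality and telescope with stride two, as in Equation \eqref{eq:ipgsc1}. Either way the distance term doubles, giving $\frac{4R^2}{\eta}$ instead of $\frac{2R^2}{\eta}$, which after substituting $\eta=\frac{R\zeta}{G}T^{-\sfrac{3}{4}}$ yields precisely the $4GRT^{\sfrac{3}{4}}/\zeta$ term in the theorem statement; so your final constants still fit under the stated bound once the fix is in place. Aside from this, your identification of the oracle count as the crux, and the invariant $d(\x_{i-2},\y_{(i-1)B+1})^2=O(\eps)$ obtained from $d(\x_{i-2},\yt_{i-2})\le\sqrt{3\eps}$ plus the short step of length at most $\eta BG$, mirror the paper's argument --- note only that the algorithm warm-starts the projection of $\y_{(i-1)B+1}$ at $\x_{i-2}$ (not $\x_{i-1}$) precisely \emph{because} of the delay you glossed over.
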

\begin{thm}\label{thm:ocoloo2}
  (Proof in Appendix \ref{app:ocoloo2}) Under Assumptions \ref{ass:hada}, \ref{ass:radii} and \ref{ass:bounded}. Suppose all losses $\{f_t\}_{t=1}^T$ are $\alpha$-strongly gsc-convex for some known $\alpha>0$ and $T\geq 3B$. Choosing $\eps_i=\lt(\frac{20G}{\alpha(i+3)}\rt)^2\zeta$ and $\eta_i=\frac{2}{\alpha i B}$ for any $i=1,\dots,\frac{T}{B}$. With $B=\lt(\frac{\alpha R}{G}\rt)^{\frac{2}{3}}T^{\sfrac{2}{3}}$ and assume $T\geq 3B$, the regret guarantee of Algorithm \ref{alg:loo-boco} is bounded by
  \begin{equation*}
      \sumt f_t(\x_t)-\min_{\x\in\K}\sumt f_t(\x)\leq (20\sqrt{3}\zeta+1)(G^4R^2/\alpha)^{\frac{1}{3}}T^{\sfrac{2}{3}}\lt(1+\frac{2}{3}\ln\lt(\frac{\sqrt{T}G}{\alpha R}\rt)\rt).
  \end{equation*}
    And the number of total calls to the  \loo is bounded by $\zeta T$.
\end{thm}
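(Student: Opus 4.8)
The plan is to prove Theorem~\ref{thm:ocoloo2} by combining the per-block static regret guarantee of the infeasible R-OGD (part 2 of Lemma~\ref{lem:reg_iogd}) with the infeasible projection oracle built from the linear optimization oracle (Lemma~\ref{lem:iploo}), while carefully tracking the extra error incurred because the played point $\x_{i-1}$ differs from the iterate $\yt_{i-1}$ at which gradients are accumulated. First I would set up the block decomposition: with $B$ rounds per block, there are $T/B$ blocks, and inside block $i$ the algorithm plays the fixed point $\x_{i-1}$ for all $B$ rounds and feeds the aggregated gradient $\nabla_i=\sum_{s}\nabla f_{(i-1)B+s}(\yt_{i-1})$ into a single outer R-OGD step. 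The key observation is that the block-level update is precisely infeasible R-OGD run on the $B$-step averaged losses, so Lemma~\ref{lem:reg_iogd} part 2 applies at the block level with step-size $\eta_i=\frac{2}{\alpha i B}$ to bound $\sum_i\left(\tilde{f}_i(\yt_{i-1})-\tilde{f}_i(\x^\ast)\right)$, where $\tilde{f}_i$ is the summed loss over block $i$; here I would use $\alpha$-strong gsc-convexity of each $f_t$ so that $\tilde{f}_i$ is $\alpha B$-strongly gsc-convex.

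The core decomposition I would carry out writes the total regret against the comparator $\x^\ast$ as three pieces. The first piece is the difference $\sum_t\left(f_t(\x_{i(t)-1})-f_t(\yt_{i(t)-1})\right)$ between playing $\x_{i-1}$ and evaluating gradients at $\yt_{i-1}$; since Lemma~\ref{lem:iploo} guarantees $d(\x_{i-1},\yt_{i-1})^2\leq 3\eps_{i-1}$ and the gradients are bounded by $G$ (Assumption~\ref{ass:bounded}), gsc-convexity (or simply the Lipschitz bound following from bounded gradients) controls each such term by $G\sqrt{3\eps_{i-1}}$ per round, hence $GB\sqrt{3\eps_{i-1}}$ per block. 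The second piece is the block-level static regret from Lemma~\ref{lem:reg_iogd} part 2, namely $\sum_i\frac{\zeta\|\nabla_i\|^2}{2\alpha i B}$; bounding $\|\nabla_i\|\leq GB$ gives $\sum_i\frac{\zeta G^2 B}{2\alpha i}=O\!\left(\frac{\zeta G^2 B}{\alpha}\ln\frac{T}{B}\right)$. The third piece is the gap between comparing against $\yt_{i-1}$ and against the true $\x^\ast$ when $\yt_{i-1}$ is infeasible, but Lemma~\ref{lem:iploo} ensures the infeasible-projection property $d(\yt,\z)^2\leq d(\y,\z)^2$ so no additional comparator error appears beyond the distance-to-play term already accounted for.

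Substituting the prescribed schedules $\eps_i=\left(\frac{20G}{\alpha(i+3)}\right)^2\zeta$ and $B=\left(\frac{\alpha R}{G}\right)^{2/3}T^{2/3}$ is the final assembly step. I would verify that $\sqrt{\eps_{i-1}}=\frac{20G\sqrt{\zeta}}{\alpha(i+2)}$ so that the play-versus-gradient error summed over all $T/B$ blocks is $\sum_i GB\sqrt{3\eps_{i-1}}=\sum_i \frac{20\sqrt{3}\,G^2 B\sqrt{\zeta}}{\alpha(i+2)}\sqrt{\zeta}=O\!\left(\frac{G^2B\zeta}{\alpha}\ln\frac{T}{B}\right)$, matching the order of the block-regret term, and that both collapse to $(G^4R^2/\alpha)^{1/3}T^{2/3}$ after inserting $B$. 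The logarithmic factor $\ln\frac{T}{B}=\ln\frac{T^{1/3}G^{2/3}}{(\alpha R)^{2/3}}=\frac{2}{3}\ln\frac{\sqrt{T}G}{\alpha R}$ produces the stated $\left(1+\frac{2}{3}\ln\frac{\sqrt{T}G}{\alpha R}\right)$ form. Finally, the oracle count follows because each of the $T/B$ blocks invokes Algorithm~\ref{alg:iploo} once, and by Lemma~\ref{lem:iploo} the number of LOO calls per invocation is $O\!\left(\frac{\zeta d^2(d^2-\eps)}{\eps^2}\right)$; with $d(\x_{i-2},\y_{\cdot})=O(R)$ and the chosen $\eps_i=\Theta(i^{-2})$ schedule, I would show the per-block calls are $O(\zeta i^2)$-free in aggregate and sum to $\zeta T$.

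I expect the main obstacle to be the bookkeeping of the play-versus-gradient discrepancy on the manifold. Unlike the Euclidean block analysis of \citet{garber2022new}, translating $d(\x_{i-1},\yt_{i-1})^2\leq 3\eps_{i-1}$ into a clean bound on $f_t(\x_{i-1})-f_t(\yt_{i-1})$ requires invoking the gradient bound together with the fact that $\x_{i-1},\yt_{i-1}\in\BpR$, and ensuring that the curvature-dependent constant $\zeta$ enters the $\eps_i$ schedule in exactly the way needed for the play-error and block-regret terms to balance. Getting the $\eps_i=\Theta(\zeta/(i+3)^2)$ dependence to simultaneously (a) keep the summed play-error at the right order, (b) keep the summed block-regret at the right order, and (c) keep the total oracle count at $\zeta T$ is the delicate part; the decaying-tolerance schedule is precisely engineered so that the harmonic sum $\sum_i 1/i$ governs all three quantities, and I would verify each balance carefully rather than relying on the Euclidean analogy, since the non-convexity of the separating hyperplane and the varying metric preclude a direct transfer of the argument.
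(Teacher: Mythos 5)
Your regret analysis is essentially the paper's own proof: the same block decomposition, the same use of block-level $\alpha B$-strong gsc-convexity, the same per-round bound $G\sqrt{3\eps_{i-1}}$ on the play-versus-gradient discrepancy via Lemma \ref{lem:iploo}, and the same harmonic-sum bookkeeping producing the $\lt(1+\frac{2}{3}\ln\lt(\frac{\sqrt{T}G}{\alpha R}\rt)\rt)$ factor. One small caveat there: Lemma \ref{lem:reg_iogd} (part 2) does not apply verbatim, since for $\alpha B$-strongly gsc-convex block losses it prescribes $\eta_i=\frac{1}{\alpha B i}$, whereas the theorem uses $\eta_i=\frac{2}{\alpha i B}$; the paper instead redoes the two-line telescoping inline (via Lemma \ref{lem:stronggsc}), checking that $\frac{1}{2\eta_i}-\frac{1}{2\eta_{i-1}}-\frac{\alpha B}{2}=-\frac{\alpha B}{4}\leq 0$, which is a trivial repair of your step.

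The genuine gap is in your oracle-count argument, in two respects. First, Lemma \ref{lem:iploo} bounds the number of \emph{iterations} of Algorithm \ref{alg:iploo}, not the number of LOO calls: each such iteration invokes Algorithm \ref{alg:sh}, which by Lemma \ref{lem:dist} costs up to $\zeta\lc 27R^2/\eps_i-2\rc$ further LOO calls, so the per-block cost is the product of these two quantities, not the iteration count you quote. Second, and more importantly, taking the initial distance $d(\x_{i-2},\y_{(i-1)B+1})=O(R)$ is fatal. With $\eps_i=\Theta(\zeta/i^2)$, the iteration bound $\zeta d^4/\eps_i^2$ would then grow like $i^4$, so even your under-counted formula sums to order $(T/B)^5=\Theta(T^{5/3})$ (and order $T^{7/3}$ once the missing RFW factor is restored), far above the claimed $\zeta T$. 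The paper's proof hinges on showing that the initial distance of each block's projection call is itself small: by the triangle inequality,
\begin{equation*}
d(\x_{i-1},\y_{iB+1})\leq d(\x_{i-1},\yt_{i-1})+d(\yt_{i-1},\y_{iB+1})\leq\sqrt{3\eps_{i-1}}+\eta_i BG=O(1/i),
\end{equation*}
so $d(\x_{i-1},\y_{iB+1})^2=O(\eps_{i-1}+B^2G^2\eta_i^2)$ decays at the same $1/i^2$ rate as the tolerances. Consequently the ratios $\eps_{i-1}^2/\eps_{i+1}^2$ and $B^4G^4\eta_i^4/\eps_{i+1}^2$ are bounded constants, each block runs Algorithm \ref{alg:iploo} for only $O(\zeta)$ iterations, and the dominant cost is the RFW factor $27R^2\zeta/\eps_i\propto(\alpha R/G)^2(i+3)^2$, whose sum over $i\leq T/B$ is of order $(\alpha R/G)^2(T/B)^3=T$, yielding $N_{calls}\leq\zeta T$. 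Without this decaying-initial-distance step, the schedule $\eps_i=\Theta(\zeta/(i+3)^2)$ cannot deliver the three-way balance you assert.
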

\section{Conclusion and  Perspective}
This paper pioneers the exploration of projection-free online optimization on Riemannian manifolds. The primary technical challenges originate from the non-convex nature of the Riemannian hyperplane and the variable metric. These challenges are tackled effectively with the aid of the Jacobi field comparison, enabling us to establish a spectrum of sub-linear regret guarantees. Interested readers may question the difficulty of generalizing these techniques from Hadamard manifolds to CAT$(\kappa)$ manifolds. Some hints toward this generalization are provided in Appendix \ref{app:ext}.


There are several promising directions for future research. First, there exists the potential to refine the regret bounds, particularly for strongly gsc-convex losses via a separation oracle. In the context of the separation oracle, reducing the dependence on the number of calls about the diameter of the decision set would be an intriguing objective. Moreover, devising an efficient method to optimize the \loo objective, $\argmin_{\x\in\K}\la\g,\invexp_{\x_0}\x\ra$, remains a notable open problem. This paper does not discuss the membership oracle, primarily because related work \citep{mhammedi2022efficient,lu2023projection} heavily relies on the convexity of the Minkowski functional in Euclidean space, a property not guaranteed to hold on Hadamard manifolds (Theorem \ref{thm:mink}). However, this does not rule out the potential for executing OCO or convex optimization on manifolds using a membership oracle. Thus, uncovering alternative strategies to tackle this issue remains a compelling research question.


\section*{Acknowledgments}
We would like to thank five anonymous referees for constructive comments and suggestions. We gratefully thank the AI4OPT Institute for funding, as part of NSF Award 2112533. We also
acknowledge the NSF for their support through Award IIS-1910077. GW  would like to acknowledge an ARC-ACO fellowship provided by Georgia Tech. We would also like to thank Xi Wang from UCAS and Andre Wibisono from Yale for helpful discussions.
\bibliography{main}

\begin{thebibliography}{46}
\providecommand{\natexlab}[1]{#1}
\providecommand{\url}[1]{\texttt{#1}}
\expandafter\ifx\csname urlstyle\endcsname\relax
  \providecommand{\doi}[1]{doi: #1}\else
  \providecommand{\doi}{doi: \begingroup \urlstyle{rm}\Url}\fi

\bibitem[Ahn \& Sra(2020)Ahn and Sra]{ahn2020nesterov}
Ahn, K. and Sra, S.
\newblock From nesterov’s estimate sequence to riemannian acceleration.
\newblock In \emph{Conference on Learning Theory}, pp.\  84--118. PMLR, 2020.

\bibitem[Alimisis et~al.(2020)Alimisis, Orvieto, B{\'e}cigneul, and
  Lucchi]{alimisis2020continuous}
Alimisis, F., Orvieto, A., B{\'e}cigneul, G., and Lucchi, A.
\newblock A continuous-time perspective for modeling acceleration in riemannian
  optimization.
\newblock In \emph{International Conference on Artificial Intelligence and
  Statistics}, pp.\  1297--1307. PMLR, 2020.

\bibitem[Antonakopoulos et~al.(2020)Antonakopoulos, Belmega, and
  Mertikopoulos]{antonakopoulos2020online}
Antonakopoulos, K., Belmega, E.~V., and Mertikopoulos, P.
\newblock Online and stochastic optimization beyond lipschitz continuity: A
  riemannian approach.
\newblock In \emph{ICLR 2020-International Conference on Learning
  Representations}, pp.\  1--20, 2020.

\bibitem[Bac{\'a}k(2014)]{bacak2014convex}
Bac{\'a}k, M.
\newblock \emph{Convex analysis and optimization in Hadamard spaces}.
\newblock de Gruyter, 2014.

\bibitem[B{\'{e}}cigneul \& Ganea(2019)B{\'{e}}cigneul and
  Ganea]{becigneul2018riemannian}
B{\'{e}}cigneul, G. and Ganea, O.
\newblock Riemannian adaptive optimization methods.
\newblock In \emph{7th International Conference on Learning Representations},
  2019.

\bibitem[Borisenko(2002)]{borisenko2002convex}
Borisenko, A.
\newblock Convex sets in hadamard manifolds.
\newblock \emph{Differential Geometry and its Applications}, 17\penalty0
  (2-3):\penalty0 111--121, 2002.

\bibitem[Bridson \& Haefliger(2013)Bridson and Haefliger]{bridson2013metric}
Bridson, M.~R. and Haefliger, A.
\newblock \emph{Metric spaces of non-positive curvature}, volume 319.
\newblock Springer Science \& Business Media, 2013.

\bibitem[Carson et~al.(2017)Carson, Mixon, and Villar]{carson2017manifold}
Carson, T., Mixon, D.~G., and Villar, S.
\newblock Manifold optimization for k-means clustering.
\newblock In \emph{2017 International Conference on Sampling Theory and
  Applications (SampTA)}, pp.\  73--77. IEEE, 2017.

\bibitem[Criscitiello \& Boumal(2022)Criscitiello and
  Boumal]{criscitiello2022negative}
Criscitiello, C. and Boumal, N.
\newblock Negative curvature obstructs acceleration for strongly geodesically
  convex optimization, even with exact first-order oracles.
\newblock In \emph{Conference on Learning Theory}, pp.\  496--542. PMLR, 2022.

\bibitem[Garber \& Kretzu(2022)Garber and Kretzu]{garber2022new}
Garber, D. and Kretzu, B.
\newblock New projection-free algorithms for online convex optimization with
  adaptive regret guarantees.
\newblock \emph{arXiv preprint arXiv:2202.04721}, 2022.

\bibitem[Gr{\"o}tschel et~al.(2012)Gr{\"o}tschel, Lov{\'a}sz, and
  Schrijver]{grotschel2012geometric}
Gr{\"o}tschel, M., Lov{\'a}sz, L., and Schrijver, A.
\newblock \emph{Geometric algorithms and combinatorial optimization}, volume~2.
\newblock Springer Science \& Business Media, 2012.

\bibitem[Hazan \& Kale(2012)Hazan and Kale]{hazan2012projection}
Hazan, E. and Kale, S.
\newblock Projection-free online learning.
\newblock \emph{arXiv preprint arXiv:1206.4657}, 2012.

\bibitem[Hazan \& Minasyan(2020)Hazan and Minasyan]{hazan2020faster}
Hazan, E. and Minasyan, E.
\newblock Faster projection-free online learning.
\newblock In \emph{Conference on Learning Theory}, pp.\  1877--1893. PMLR,
  2020.

\bibitem[Hazan \& Seshadhri(2009)Hazan and Seshadhri]{hazan2009efficient}
Hazan, E. and Seshadhri, C.
\newblock Efficient learning algorithms for changing environments.
\newblock In \emph{Proceedings of the 26th annual international conference on
  machine learning}, pp.\  393--400, 2009.

\bibitem[Hazan et~al.(2016)]{hazan2016introduction}
Hazan, E. et~al.
\newblock Introduction to online convex optimization.
\newblock \emph{Foundations and Trends{\textregistered} in Optimization},
  2\penalty0 (3-4):\penalty0 157--325, 2016.

\bibitem[Hirai et~al.(2023)Hirai, Nieuwboer, and Walter]{hirai2023interior}
Hirai, H., Nieuwboer, H., and Walter, M.
\newblock Interior-point methods on manifolds: theory and applications.
\newblock \emph{arXiv e-prints}, pp.\  arXiv--2303, 2023.

\bibitem[Hu et~al.(2023)Hu, Wang, and Abernethy]{hu2023minimizing}
Hu, Z., Wang, G., and Abernethy, J.
\newblock Minimizing dynamic regret on geodesic metric spaces.
\newblock \emph{arXiv preprint arXiv:2302.08652}, 2023.

\bibitem[Jaggi(2013)]{jaggi2013revisiting}
Jaggi, M.
\newblock Revisiting frank-wolfe: Projection-free sparse convex optimization.
\newblock In \emph{International conference on machine learning}, pp.\
  427--435. PMLR, 2013.

\bibitem[J{\'e}z{\'e}quel et~al.(2022)J{\'e}z{\'e}quel, Ostrovskii, and
  Gaillard]{jezequel2022efficient}
J{\'e}z{\'e}quel, R., Ostrovskii, D.~M., and Gaillard, P.
\newblock Efficient and near-optimal online portfolio selection.
\newblock \emph{arXiv preprint arXiv:2209.13932}, 2022.

\bibitem[Kowalski \& Vanhecke(1982)Kowalski and Vanhecke]{kowalski1982ball}
Kowalski, O. and Vanhecke, L.
\newblock Ball-homogeneous and disk-homogeneous riemannian manifolds.
\newblock \emph{Mathematische Zeitschrift}, 180:\penalty0 429--444, 1982.

\bibitem[Kretzu \& Garber(2021)Kretzu and Garber]{kretzu2021revisiting}
Kretzu, B. and Garber, D.
\newblock Revisiting projection-free online learning: the strongly convex case.
\newblock In \emph{International Conference on Artificial Intelligence and
  Statistics}, pp.\  3592--3600. PMLR, 2021.

\bibitem[Krist{\'a}ly et~al.(2016)Krist{\'a}ly, Li, L{\'o}pez-Acedo, and
  Nicolae]{kristaly2016convexities}
Krist{\'a}ly, A., Li, C., L{\'o}pez-Acedo, G., and Nicolae, A.
\newblock What do ‘convexities’ imply on hadamard manifolds?
\newblock \emph{Journal of Optimization Theory and Applications}, 170:\penalty0
  1068--1074, 2016.

\bibitem[Lee(2006)]{lee2006riemannian}
Lee, J.~M.
\newblock \emph{Riemannian manifolds: an introduction to curvature}, volume
  176.
\newblock Springer Science \& Business Media, 2006.

\bibitem[Lee(2018)]{lee2018introduction}
Lee, J.~M.
\newblock \emph{Introduction to Riemannian manifolds}, volume~2.
\newblock Springer, 2018.

\bibitem[Levy \& Krause(2019)Levy and Krause]{levy2019projection}
Levy, K. and Krause, A.
\newblock Projection free online learning over smooth sets.
\newblock In \emph{The 22nd International Conference on Artificial Intelligence
  and Statistics}, pp.\  1458--1466. PMLR, 2019.

\bibitem[Li et~al.(2019)Li, Chen, and Li]{li2019online}
Li, Y., Chen, X., and Li, N.
\newblock Online optimal control with linear dynamics and predictions:
  Algorithms and regret analysis.
\newblock \emph{Advances in Neural Information Processing Systems}, 32, 2019.

\bibitem[Lin et~al.(2019)Lin, Yi, Pang, Chen, Wierman, Honig, and
  Xiao]{lin2019competitive}
Lin, Q., Yi, H., Pang, J., Chen, M., Wierman, A., Honig, M., and Xiao, Y.
\newblock Competitive online optimization under inventory constraints.
\newblock \emph{Proceedings of the ACM on Measurement and Analysis of Computing
  Systems}, 3\penalty0 (1):\penalty0 1--28, 2019.

\bibitem[Lu et~al.(2023)Lu, Brukhim, Gradu, and Hazan]{lu2023projection}
Lu, Z., Brukhim, N., Gradu, P., and Hazan, E.
\newblock Projection-free adaptive regret with membership oracles.
\newblock In \emph{International Conference on Algorithmic Learning Theory},
  pp.\  1055--1073. PMLR, 2023.

\bibitem[Mhammedi(2022{\natexlab{a}})]{mhammedi2022efficient}
Mhammedi, Z.
\newblock Efficient projection-free online convex optimization with membership
  oracle.
\newblock In \emph{Conference on Learning Theory}, pp.\  5314--5390. PMLR,
  2022{\natexlab{a}}.

\bibitem[Mhammedi(2022{\natexlab{b}})]{mhammedi2022exploiting}
Mhammedi, Z.
\newblock Exploiting the curvature of feasible sets for faster projection-free
  online learning.
\newblock \emph{arXiv preprint arXiv:2205.11470}, 2022{\natexlab{b}}.

\bibitem[Molinaro(2017)]{molinaro2017online}
Molinaro, M.
\newblock Online and random-order load balancing simultaneously.
\newblock In \emph{Proceedings of the Twenty-Eighth Annual ACM-SIAM Symposium
  on Discrete Algorithms}, pp.\  1638--1650. SIAM, 2017.

\bibitem[Montanari \& Richard(2015)Montanari and Richard]{montanari2015non}
Montanari, A. and Richard, E.
\newblock Non-negative principal component analysis: Message passing algorithms
  and sharp asymptotics.
\newblock \emph{IEEE Transactions on Information Theory}, 62\penalty0
  (3):\penalty0 1458--1484, 2015.

\bibitem[Petersen(2006)]{petersen2006riemannian}
Petersen, P.
\newblock \emph{Riemannian geometry}, volume 171.
\newblock Springer, 2006.

\bibitem[Rusciano(2018)]{rusciano2018riemannian}
Rusciano, A.
\newblock A riemannian corollary of helly's theorem.
\newblock \emph{arXiv preprint arXiv:1804.10738}, 2018.

\bibitem[Sakai(1996)]{sakai1996riemannian}
Sakai, T.
\newblock \emph{Riemannian geometry}, volume 149.
\newblock American Mathematical Soc., 1996.

\bibitem[Silva~Louzeiro et~al.(2022)Silva~Louzeiro, Bergmann, and
  Herzog]{silva2022fenchel}
Silva~Louzeiro, M., Bergmann, R., and Herzog, R.
\newblock Fenchel duality and a separation theorem on hadamard manifolds.
\newblock \emph{SIAM Journal on Optimization}, 32\penalty0 (2):\penalty0
  854--873, 2022.

\bibitem[Udriste(2013)]{udriste2013convex}
Udriste, C.
\newblock \emph{Convex functions and optimization methods on Riemannian
  manifolds}, volume 297.
\newblock Springer Science \& Business Media, 2013.

\bibitem[Wan \& Zhang(2021)Wan and Zhang]{wan2021projection}
Wan, Y. and Zhang, L.
\newblock Projection-free online learning over strongly convex sets.
\newblock In \emph{Proceedings of the AAAI Conference on Artificial
  Intelligence}, volume~35, pp.\  10076--10084, 2021.

\bibitem[Wan et~al.(2022)Wan, Tu, and Zhang]{wan2022online}
Wan, Y., Tu, W.-W., and Zhang, L.
\newblock Online frank-wolfe with arbitrary delays.
\newblock \emph{Advances in Neural Information Processing Systems},
  35:\penalty0 19703--19715, 2022.

\bibitem[Wan et~al.(2023)Wan, Zhang, and Song]{pmlr-v195-wan23a}
Wan, Y., Zhang, L., and Song, M.
\newblock Improved dynamic regret for online frank-wolfe.
\newblock In Neu, G. and Rosasco, L. (eds.), \emph{Proceedings of Thirty Sixth
  Conference on Learning Theory}, volume 195 of \emph{Proceedings of Machine
  Learning Research}, pp.\  3304--3327. PMLR, 12--15 Jul 2023.
\newblock URL \url{https://proceedings.mlr.press/v195/wan23a.html}.

\bibitem[Wang et~al.(2016)Wang, Li, and Yao]{wang2016some}
Wang, X., Li, C., and Yao, J.-C.
\newblock On some basic results related to affine functions on riemannian
  manifolds.
\newblock \emph{Journal of Optimization Theory and Applications}, 170\penalty0
  (3):\penalty0 783--803, 2016.

\bibitem[Wang et~al.(2021)Wang, Tu, Hong, Wu, and Shi]{wang2021no}
Wang, X., Tu, Z., Hong, Y., Wu, Y., and Shi, G.
\newblock No-regret online learning over riemannian manifolds.
\newblock \emph{Advances in Neural Information Processing Systems},
  34:\penalty0 28323--28335, 2021.

\bibitem[Wang et~al.(2023)Wang, Tu, Hong, Wu, and Shi]{wang2023online}
Wang, X., Tu, Z., Hong, Y., Wu, Y., and Shi, G.
\newblock Online optimization over riemannian manifolds.
\newblock \emph{Journal of Machine Learning Research}, 24\penalty0
  (84):\penalty0 1--67, 2023.

\bibitem[Weber \& Sra(2022{\natexlab{a}})Weber and Sra]{weber2022projection}
Weber, M. and Sra, S.
\newblock Projection-free nonconvex stochastic optimization on riemannian
  manifolds.
\newblock \emph{IMA Journal of Numerical Analysis}, 42\penalty0 (4):\penalty0
  3241--3271, 2022{\natexlab{a}}.

\bibitem[Weber \& Sra(2022{\natexlab{b}})Weber and Sra]{weber2022riemannian}
Weber, M. and Sra, S.
\newblock Riemannian optimization via frank-wolfe methods.
\newblock \emph{Mathematical Programming}, pp.\  1--32, 2022{\natexlab{b}}.

\bibitem[Zhang \& Sra(2016)Zhang and Sra]{zhang2016first}
Zhang, H. and Sra, S.
\newblock First-order methods for geodesically convex optimization.
\newblock In \emph{Conference on Learning Theory}, pp.\  1617--1638. PMLR,
  2016.

\end{thebibliography}
\bibliographystyle{colt2023}
\newpage
\begin{appendix}

\section{Omitted Proofs in Section \ref{sec:warmup}}
\subsection{Proof of Lemma \ref{lem:reg_iogd}}\label{app:reg_iogd}
\begin{proof}
Fix $t$, since $\yt_{t+1}$ is an infeasible projection of $\y_{t+1}$ onto $\tK$, and $\y_{t+1}=\expmap_{\yt_t}(-\eta_t\nabla_t)$, by Lemma \ref{lem:cos1}, we have for any $\x\in\tK$,
\begin{equation}\label{eq:lem1}
    \begin{split}
        &d(\yt_{t+1},\x)^2\leq d(\y_{t+1},\x)^2\leq \zeta d(\y_{t+1},\yt_t)^2+d(\yt_t,\x)^2-2\la\invexp_{\yt_t}\y_{t+1},\invexp_{\yt_t}\x\ra\\
        \leq& \zeta\eta_t^2\|\nabla_t\|^2+d(\yt_t,\x)^2+2\eta_t\la\nabla_t,\invexp_{\yt_t}\x\ra.
    \end{split}
\end{equation}
To verify that $\zeta$ represents the correct geometric distortion, we need to show $d(\ty_t,\x)\leq 2R$ holds for any $\ty_t\in\BpR$ and $\x\in\tK$. Since $\tK\subseteq\K\subseteq\BpR$, we can demonstrate this by the triangle inequality:
\[
d(\ty_t,\x)\leq d(\ty_t,\p)+d(\p,\x)\leq 2R.
\]
Rearranging Equation \eqref{eq:lem1}, we have
\begin{equation}
    \la\nabla_t,-\invexp_{\yt_t}\x\ra\leq\frac{d(\yt_t,\x)^2}{2\eta_t}-\frac{d(\yt_{t+1},\x)^2}{2\eta_t}+\frac{\zeta\eta_t\|\nabla_t\|^2}{2}.
\end{equation}
Fix $1\leq s\leq e\leq T$ and sum  over $[s,e]$, we have
\[
\sumse\la\nabla_t,-\invexp_{\yt_t}\x\ra\leq\frac{d(\yt_s,\x)^2}{2\eta_s}+\sum_{t=s+1}^e\lt(\frac{1}{2\eta_t}-\frac{1}{2\eta_{t-1}}\rt)d(\yt_t,\x)^2+\sumse\frac{\zeta\eta_t\|\nabla_t\|^2}{2}.
\]
Using gsc-convexity of $f_t(\cdot)$ and set $\eta_t=\eta$, we have
\[
\sumse f_t(\yt_t)-f_t(\x)\leq\frac{d(\yt_s,\x)^2}{2\eta}+\frac{\eta\zeta\sumse\|\nabla_t\|^2}{2}.
\]
In case all losses are $\alpha$-strongly gsc-convex, using $f_t(\y)-f_t(\x)\leq\la\nabla f_t(\y),-\invexp_{\y}\x\ra-\frac{\alpha d(\x,\y)^2}{2}$ and set $(s,e)=(1,T)$, we have
\[
\sumt f_t(\yt_t)-f_t(\x)\leq\sumt\frac{\eta_t\|\nabla_t\|^2}{2}+\lt(\frac{1}{2\eta_1}-\frac{\alpha}{2}\rt)d(\yt_1,\x)^2+\sumst\zeta\lt(\frac{1}{2\eta_t}-\frac{1}{2\eta_{t-1}}-\frac{\alpha}{2}\rt)d(\yt_t,\x)^2.
\]
Setting $\eta_t=\frac{1}{\alpha t}$, we get the guarantee for the strongly gsc-convex case.
\end{proof}
\subsection{Proof of Lemma \ref{lem:pull_ip}}\label{app:pull_ip}
\begin{proof}
    By  Lemma \ref{lem:cos1},
    \begin{equation*}
    \begin{split}
        d(\yt,\z)^2\leq&\zeta d(\y,\yt)^2+d(\y,\z)^2-2
        \la \iexpy\yt,\iexpy\z\ra\\
        \leq&\zeta\gamma^2C^2+d(\y,\z)^2-2\la-\gm\g,\iexpy\z\ra\\
        \leq&\zeta\gamma^2C^2+d(\y,\z)^2-2\gamma Q\\
        =&d(\y,\z)^2-\frac{Q^2}{\zeta C^2},
    \end{split}
    \end{equation*}
    where the first inequality relies on Lemma \ref{lem:cos1} and $d(\y,\z)\leq 2R$, the third inequality holds because $-\la\iexpy\z,\g\ra\geq Q$, and for the last line, we plug in $\gamma=\frac{Q}{\zeta C^2}$.
\end{proof}

\newpage
\section{Omitted Proofs in Section \ref{sec:so}}
\label{app:so}
The technical difficulty lies in constructing an infeasible projection oracle using a separation oracle. The following lemma is instrumental to the later results.
\subsection{Proof of Lemma \ref{lem:ballso}}
\label{app:ballso}
\begin{proof}
    Let $\x=\expy\lt(\dbr\frac{\g}{\|\g\|}+\iexpy\z\rt)$, if we can show $\x\in\K$, then by the definition of the separation oracle, we have
    \[
    \la-\iexpy\x,\g\ra=-\dbr\gn-\la\iexpy\z,\g\ra>0,
    \]
    which in turn implies $\la -\iexpy\z,\g\ra>\dbr\gn$. Note that when $\br=0$, $\x=\z\in(1-\d)\K\subset\K$, thus there exists positive $\br$ such that $\x\in\K$ for any $\z\in(1-\d)\K$. Let $\tr=\frac{\sqrt{-\k}(R+r)}{\sinh({\sqrt{-\k}(R+r)})}\cdot r$. By Lemma \ref{lem:ball}, we have $\B_{\z}(\d\tr)\subset\K$. Thus we only need to ensure $d(\x,\z)\leq\d\tr$ because this implies $\x\in \B_{\z}(\d\tr)\subset\K$. We define an admissible family of curves:
    \[
    \G:[0,1]\times[0,1]\rightarrow\M\qquad(t,s)\rightarrow\expy(t(\iexpy\z+s\dbr\v))
    \]
    where $\v=\frac{\g}{\gn}$. Then $\G(1,0)=\z$ and $\G(1,1)=\x$. Let $J_s(t)\coloneqq\frac{\partial\G(t,s)}{\partial s}$,  
    which is a variation field of the variation $\G$, and is thus a Jacobi field. We can compute $J_s(0)=\frac{\partial\G(0,s)}{\partial s}=0$ and by Lemma \ref{lem:jcb},
    \[
    D_tJ_s(t)|_{t=0}=\dbr\v.
    \]
    To apply Lemma \ref{lem:jct}, we need to reparametrize $\G$ to make it unit speed. Let $\G(t,s)=\tilde{\G}\lt(\frac{t}{R(s)},s\rt)$ where 
    \[R(s)=d(\G(1,s),\G(0,s))=\|\iexpy\z+s\dbr\v\|\leq\|\iexpy\z\|+\dbr\leq 2R+\dr\leq 2R+r.\]
    Then we have $J_s(t)=\tilde{J}_s(R(s)t)$ and $D_tJ_s(t)=R(s)D_t\tilde{J}_s(R(s)t)$. Now we can apply Lemma \ref{lem:jct}:
    \begin{equation}
        \begin{split}
           &\|J_s(1)\|=\|\tilde{J}_s(R(s))\|\leq\frac{\sinh{(\sqrt{-\kappa}R(s)})\|D_t\tJ_s(0)\|}{\sqrt{-\k}}\\=&\frac{\sinh{(\sqrt{-\kappa}R(s)})\|D_tJ_s(0)\|}{\sqrt{-\k}R(s)}=\frac{\sinh{(\sqrt{-\kappa}R(s)})\dbr}{\sqrt{-\k}R(s)} 
        \end{split}
    \end{equation}
We would like to have $d(\x,\z)\leq \d\tr$, and it suffices to choose $\br$ such that
\[
d(\x,\z)\leq\int_0^1\|J_s(1)\|ds\leq\frac{\sinh({\sqrt{-\k}(2R+r)})}{\sqrt{-\k}(2R+r)}\d\br\leq \d\tr.
\]
We then find a valid $\br$ is
\[
\br=\frac{\sqrt{-\k}(2R+r)}{\sinh({\sqrt{-\k}(2R+r)})}\tr=\frac{\sqrt{-\k}(2R+r)}{\sinh({\sqrt{-\k}(2R+r)})}\cdot \frac{\sqrt{-\k}(R+r)}{\sinh({\sqrt{-\k}(R+r)})}\cdot r.
\]

\end{proof}

\subsection{Proof of Lemma \ref{lem:ipso}}
\label{app:ipso}
\begin{proof}
    We use $k$ to denote the number of iterations in Algorithm \ref{alg:ipso}, then for any $i<k$, we have $\y_i\notin\K$. By Lemma \ref{lem:ballso}, we have $\la -\invexp_{\y_i}\z, \g_i\ra\geq\dbr\|\g_i\|$ for any $\z\in(1-\d)\K$. We then invoke Lemma \ref{lem:pull_ip} with $C=\|\g_i\|$, $Q=\dbr\|\g_i\|$ to get
    \begin{equation}\label{eq:ipso}
        d(\y_{i+1},\z)^2\leq d(\y_i,\z)^2-\frac{\d^2\br^2}{\zeta}
    \end{equation}

    holds for any $\z\in (1-\d)\K$. To ensure the geometric distortion $\zeta$ is valid here, we need to prove $d(\y_i,\z)\leq 2R$ holds for any $i\geq 1$ and $\z\in(1-\d)\K$, which can be guaranteed by induction. The case of $i=1$ is straightforward. As $\y_1\in\BpR$, $\z\in(1-\d)\K\subseteq\BpR$, we have
    \[
    d(\y_1,\z)\leq d(\y_1,\p)+d(\z,\p)\leq 2R.
    \]
    Now assume $d(\y_i,\z)\leq 2R$ holds for some $i\geq 1$ and any $\z\in(1-\d)\K$, then by Lemma \ref{lem:cos1} and Lemma \ref{lem:pull_ip},
    \[
    \begin{split}
        d(\y_{i+1},\z)^2\leq&\zeta(\k,d(\y_i,\z))\cdot d(\y_{i+1},\y_i)^2+d(\y_i,\z)^2-2\la\invexp_{\y_i}\y_{i+1},\invexp_{\y_i}\z\ra\\
        \leq &\zeta(\k,2R)\cdot d(\y_{i+1},\y_i)^2+d(\y_i,\z)^2-2\la\invexp_{\y_i}\y_{i+1},\invexp_{\y_i}\z\ra\\
        =&\zeta d(\y_{i+1},\y_i)^2+d(\y_i,\z)^2-2\la\invexp_{\y_i}\y_{i+1},\invexp_{\y_i}\z\ra\\
        \leq & d(\y_i,\z)^2-\frac{\d^2\br^2}{\zeta}\leq (2R)^2.
    \end{split}
    \]
    Thus, we know $d(\y_i,\z)\leq 2R$ holds for any $i\geq 1$ and $\z\in(1-\d)\K$.
    
    By Equation \eqref{eq:ipso}, $d(\y,\z)^2\leq d(\y_1,\z)^2$ for all $\z\in (1-\d)\K$. Since $\y_1$ is the projection of $\y_0$ onto $\BpR$, we have $d(\y_1,\z)^2\leq d(\y_0,\z)^2$ holds for $\z\in\BpR$. Thus, we indeed have $d(\y,\z)^2\leq d(\y_0,\z)^2$ for any $\z\in(1-\d)\K$.

    It remains to bound the number of iterations in Algorithm \ref{alg:ipso}. We must be careful because, on manifolds, $(1-\d)\K$ is not guaranteed to be gsc-convex (Theorem \ref{thm:shrink}). Note   that $\argmin_{\x\in(1-\d)\K}d(\y_i,\x)^2$ is consistently non-empty because $(1-\d)\K$ is a closed set. Let $\x_i^*\in\argmin_{\x\in(1-\d)\K}d(\y_i,\x)^2$ and invoke Equation \eqref{eq:ipso}, we have
    \begin{equation*}
        \begin{split}
            &d(\y_{i+1},(1-\d)\K)^2=d(\y_{i+1},\x_{i+1}^*)^2\\
            \leq& d(\y_{i+1},\x_i^*)^2\leq d(\y_i,\x_i^*)^2-\frac{\d^2\br^2}{\zeta}=d(\y_i,(1-\d)\K)^2-\frac{\d^2\br^2}{\zeta}.
        \end{split}
    \end{equation*}
    The first inequality is due to the definition of $\x_{i+1}^*$ and the second one follows from Equation \eqref{eq:ipso}. We also need to show $d(\y_1,(1-\d)\K)\leq d(\y_0,(1-\d)\K)$ to finish the proof.  We have
    \begin{equation}
        d(\y_1,\x_1^*)\leq d(\y_1,\x_0^*)\leq d(\y_0,\x_0^*),
    \end{equation}
    where the first inequality is again due to the definition of $\x_1^*$, while the second one comes from $\y_1$ is the metric projection of $\y_0$ onto $\BpR$ and $\x_0^*\in (1-\d)\K\subseteq\BpR$. Reminding $\y=\y_k$ and unrolling the recurrence,  we have
    \[
    d(\y,(1-\d)\K)^2\leq d(\y_0,(1-\d)\K)^2-\frac{(k-1)\d^2\br^2}{\zeta}.
    \]
    Thus, in the worst case, we need 
    \[
    k=\frac{\zeta\lt(d(\y_0,(1-\d)\K)^2-d(\y,(1-\d)\K)^2\rt)}{\d^2\br^2}+1
    \]
    iterations to stop.
\end{proof}

\subsection{Proof of Theorem \ref{thm:ocoso}}
\label{app:ocoso}
\begin{proof}
Combining Algorithm \ref{alg:ogdso} with Lemma \ref{lem:ipso}, we know $\yt_t\in\K$ thus Algorithm \ref{alg:ogdso} plays a feasible point at each round. For a fixed interval $I=[s,e]\subseteq[T]$, let $\x_I=\argmin_{\x\in\K}\sum_{t=s}^ef_t(\x)$ and $\tx_I=\expmap_{\p}((1-\delta)\invexp_{\p}\x_I)$. Then 
\[
d(\tx_I,\x_I)=d(\expmap_{\p}((1-\delta)\invexp_{\p}\x_I),\x_I)=d(\expmap_{\x_I}(\d\invexp_{\x_I}\p),\x_I)=\delta d(\p,\x_I)\leq \d R.
\]
By Lemma \ref{lem:ipso}, $\ty_t\in\K$ is an infeasible projection of $\y_t$ over $(1-\d)\K$. Then by Lemma \ref{lem:reg_iogd}
\[
\tse \lt(f_t(\yt_t)-f_t(\tx_I)\rt)\leq \frac{d(\ty_s,\x_I)^2}{2\eta}+\frac{\eta\zeta\tse\|\nabla f_t(\yt_t)\|^2}{2}\leq\frac{2R^2}{\eta}+\frac{\eta\zeta G^2T}{2}.
\]
On the other hand, by the gradient-Lipschitzness, we have
\[
f_t(\tx_I)-f_t(\x_I)\leq G\cdot d(\tx_I,\x_I)\leq G\d R.
\]
Combining the above two equations, we have
\[
\tse\lt(f_t(\yt_t)-f_t(\x_I^*)\rt)\leq\lt(GR\d+\frac{G^2\eta\zeta}{2}\rt)T+\frac{2R^2}{\eta}.
\]
Setting $\eta=\frac{2R}{G\sqrt{\zeta T}}$ and $\d=\frac{1}{2\sqrt{T}}<1$, then we have
\[
\tse\lt(f_t(\yt_t)-f_t(\x_I^*)\rt)\leq \frac{5}{2}GR\sqrt{\zeta T},
\]
where we use $\zeta\geq 1$.
It remains to bound the number of calls to the separation oracle. Denote $\x_t^*\in\argmin_{\x\in(1-\d)\K}d(\x,\yt_t)$. Due to $\y_{t+1}=\expmap_{\yt_t}\lt(-\eta\nabla f_t({\yt}_t)\rt)$, 
\[
d(\y_{t+1},(1-\d)\K)\leq d(\x_t^*,\y_{t+1})\leq d(\x_t^*,\yt_t)+d(\yt_t,\y_{t+1})\leq d(\yt_t,(1-\d)\K)+\eta\|\nabla f_t(\yt_t)\|,
\]
where the first inequality is because $\x_t^*\in(1-\d)\K$, the second is due to the triangle inequality, while the third  one follows from the definition of $\x_t^*$ and $\y_{t+1}=\expmap_{\yt_t}\lt(-\eta\nabla f_t({\yt}_t)\rt)$.
Squaring both sides, we have
\begin{equation}\label{eq:ocoso1}
\begin{split} 
    d(\y_{t+1},(1-\d)\K)^2&\leq d(\yt_t,(1-\d)\K)^2+2d(\yt_t,(1-\d)\K)\eta G+\eta^2G^2\\
    &\leq d(\yt_t,(1-\d)\K)^2+2\d R\eta G+\eta^2G^2,
\end{split}
\end{equation}
where the second inequality is because 
\[
d(\yt_t,(1-\d)\K)\leq  d(\yt_t,\expmap_{\p}((1-\delta)\invexp_{\p}\yt_t))=d(\yt_t,\expmap_{\yt_t}(\d\invexp_{\yt_t}\p))=\delta d(\yt_t,\p)\leq \d R.
\]
Combining Equation \eqref{eq:ocoso1} with Lemma \ref{lem:ipso}, we can bound the number of separation oracle calls as:
\begin{equation}
    \begin{split}
        N_{calls}\leq& \sum_{t=1}^T\lt(\zeta\frac{d(\y_{t+1},(1-\d)\K)^2-d(\yt_{t+1},(1-\d)\K)^2}{\d^2\br^2}+1\rt)\\
        \leq&\sum_{t=1}^T\lt(\zeta\frac{d(\yt_t,(1-\d)\K)^2+2R\d\eta G+\eta^2G^2-d(\yt_{t+1},(1-\d)\K)^2}{\d^2\br^2}+1\rt)\\
        \leq&\zeta\lt(\frac{2RG\eta T}{\br^2\d}+\frac{G^2\eta^2T}{\br^2\d^2}\rt)+T\\
        =&\lt(\frac{8R^2\sqrt{\zeta}}{\br^2}+\frac{16R^2}{\br^2}+1\rt)T.
    \end{split}
\end{equation}
where we use $\yt_1=\p\in(1-\d)\K$ and thus $d(\yt_1,(1-\d)\K)=0$ to derive the third inequality.
\end{proof}


\subsection{Proof of Theorem \ref{thm:bcoso}}
\label{app:bcoso}
We first prove  Lemma \ref{lem:ball2}, which characterizes the relation between $\d'$ and $\tau$.

\begin{lemma}\label{lem:ball2}
   For a gsc-convex subset $\K\subseteq\M$ where $\M$ is Hadamard, any point $\y\in\K$ and $\Bpr\subseteq\K\subseteq\BpR$, we have 
    \[
    \B_{\y}\lt(\frac{\sqrt{-\k}(R+r)}{\sinh{\lt(\sqrt{-\k}(R+r)\rt)}} \rt)\subset(1+\tau)\K.
    \]
\end{lemma}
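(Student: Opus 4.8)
The plan is to translate the claim into a statement about the geodesic contraction toward $\p$ and then invoke the feasible-ball lemma (Lemma \ref{lem:ball}) already used in the proof of Lemma \ref{lem:ballso}. By Definition \ref{def:squeeze}, a point $\z$ lies in $(1+\tau)\K$ if and only if its contraction $\w:=\expmap_{\p}\lt(\frac{\invexp_{\p}\z}{1+\tau}\rt)$ lies in $\K$: applying $\invexp_{\p}$ to the defining identity $\z=\expmap_{\p}((1+\tau)\invexp_{\p}\x)$ forces $\x=\w$. Hence it suffices to show that for every $\z$ with $d(\y,\z)\le \d':=\frac{\sqrt{-\k}(R+r)}{\sinh(\sqrt{-\k}(R+r))}\tau r$ (the radius $\d'$ appearing in Algorithms \ref{alg:bcoso} and \ref{alg:bco2}) the contracted point $\w$ is feasible.

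First I would fix the contracted center $\by:=\expmap_{\p}\lt(\frac{\invexp_{\p}\y}{1+\tau}\rt)$. Since $\y\in\K$ and $\frac{1}{1+\tau}=1-\frac{\tau}{1+\tau}$, Definition \ref{def:squeeze} gives $\by\in(1-\frac{\tau}{1+\tau})\K$, so Lemma \ref{lem:ball} applied with squeeze parameter $\frac{\tau}{1+\tau}\in(0,1)$ yields the feasible ball $\B_{\by}\lt(\frac{\tau}{1+\tau}\tr\rt)\subset\K$, where $\tr=\frac{\sqrt{-\k}(R+r)}{\sinh(\sqrt{-\k}(R+r))}r$. It therefore remains only to certify that $\w$ falls inside this ball, i.e. to bound $d(\w,\by)$.

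The key estimate is that geodesic contraction toward $\p$ is $\frac{1}{1+\tau}$-Lipschitz. Writing $\gamma_{\z}(c)=\expmap_{\p}(c\invexp_{\p}\z)$ and $\gamma_{\y}(c)=\expmap_{\p}(c\invexp_{\p}\y)$ for the geodesics from $\p$ reaching $\z$ and $\y$ at $c=1$, convexity of $c\mapsto d(\gamma_{\z}(c),\gamma_{\y}(c))$ on the Hadamard (hence CAT$(0)$) manifold $\M$, together with its vanishing at $c=0$, gives $d(\w,\by)=d\lt(\gamma_{\z}(\tfrac{1}{1+\tau}),\gamma_{\y}(\tfrac{1}{1+\tau})\rt)\le\frac{1}{1+\tau}d(\z,\y)\le\frac{\d'}{1+\tau}=\frac{\tau}{1+\tau}\tr$. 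Thus $\w\in\B_{\by}\lt(\frac{\tau}{1+\tau}\tr\rt)\subset\K$, so $\z\in(1+\tau)\K$, and since $\z\in\B_{\y}(\d')$ was arbitrary we conclude $\B_{\y}(\d')\subset(1+\tau)\K$.

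The main obstacle is the contraction inequality, for which I expect the cleanest route is the CAT$(0)$ convexity of the distance between two geodesics issuing from a common point (a convex function vanishing at an endpoint lies below its chord). If one prefers to stay within the paper's Jacobi-field machinery, as in Lemma \ref{lem:ballso}, the same factor $\frac{1}{1+\tau}$ can instead be recovered by setting up a two-parameter variation $\Gamma(c,s)$ interpolating $\gamma_{\y}$ and $\gamma_{\z}$, recognizing $\partial_s\Gamma$ as a Jacobi field vanishing at $\p$, and invoking the comparison bound of Lemma \ref{lem:jct}; the $\sinh$ factors defining $\tr$ are precisely what make the two routes consistent. One point worth flagging for consistency is that the radius in the displayed statement should carry the factor $\tau r$ (i.e. it is $\d'$), since the bare geometric factor cannot depend on the squeeze amount $\tau$; with that reading the argument closes exactly.
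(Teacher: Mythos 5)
Your proof is correct (with the radius read as $\d'=\frac{\sqrt{-\k}(R+r)}{\sinh(\sqrt{-\k}(R+r))}\tau r$, and you are right to flag this: the factor $\tau r$ is missing in the lemma statement but is present both in the paper's own proof, which produces a ball of radius $\theta\tau r$, and in the way the lemma is invoked in Theorem \ref{thm:bcoso}). However, your route is genuinely different from, and shorter than, the paper's. The paper builds a two-parameter variation $\G(t,s)=\expmap_{\y}(t\invexp_{\y}\xi_\tau(s))$ along the geodesic $\xi_\tau$ from $\y'$ to $\z'$, extends it to $t=\frac{1+\tau}{\tau}$ so that the $s=0$ geodesic reaches $\p$, and runs the Jacobi-field comparison of Lemma \ref{lem:jct} in both directions (the CAT$(0)$ lower bound to control the initial covariant derivative, the lower-curvature upper bound to control the endpoint curve), concluding that $\w=\G\lt(\frac{1+\tau}{\tau},1\rt)$ lies in $\B_{\p}(r)\subseteq\K$ and then using gsc-convexity of $\K$ since $\z'$ sits on the geodesic from $\y$ to $\w$. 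You instead contract \emph{both} the center and the test point by $\frac{1}{1+\tau}$ and apply Lemma \ref{lem:ball} directly to $\K$ at the contracted center $\by\in\lt(1-\frac{\tau}{1+\tau}\rt)\K$; the only geometric input you need is that radial contraction toward $\p$ is $\frac{1}{1+\tau}$-Lipschitz, which on a Hadamard manifold follows from CAT$(0)$ convexity of the distance between geodesics issuing from $\p$ — or, staying entirely inside the paper's toolkit, from Lemma \ref{lem:comp} plus similar comparison triangles, which is exactly the step the paper itself performs in Equations \eqref{eq:ball1}--\eqref{eq:ball2}. Notably, the paper motivates its bespoke construction by saying Lemma \ref{lem:ball} cannot be used because the gsc-convexity of $(1+\tau)\K$ is unknown; your contraction trick shows this obstruction is illusory, since after contracting you only ever apply Lemma \ref{lem:ball} to $\K$ itself, and the constants close exactly ($d(\w,\by)\leq\frac{\tau\tr}{1+\tau}$ against the feasible-ball radius $\frac{\tau}{1+\tau}\tr$ from Lemma \ref{lem:ball}). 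The trade-off is that your argument delegates all curvature estimates to the cited Lemma \ref{lem:ball}, whereas the paper's proof re-derives them self-containedly via Jacobi fields; what you buy is a substantially shorter proof and a reusable observation — radial contraction is distance-nonexpanding up to the factor $\frac{1}{1+\tau}$ — that clarifies why the paper's more elaborate machinery is not actually forced here.
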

\begin{proof}
The proof of this proposition takes inspiration from Lemma \ref{lem:ball} \citep{wang2023online}, but with several significant adjustments. Notably, Lemma \ref{lem:ball} hinges on the gsc-convexity of $\K$ to bound the radius of the geodesic ball $\B_{\y}(\tr)$, where $\y\in(1-\tau)\K$, which resides within $\K$. However, the gsc-convexity of $(1+\tau)\K$ is unknown.\footnote{We conjecture that on Hadamard manifolds, $(1+\tau)\K$ is gsc-convex for any $\tau\geq 0$.} Therefore, we must explore alternative strategies that leverage the gsc-convexity of $\K$ to meet our objectives.


    Let $\v\in T_{\y}\M$ be a unit vector such that $\|\v\|=1$ and $\z=\expmap_{\y}(\theta\tau r\v)$. The goal is to find the maximum $\theta$ which ensures $\z\in(1+\tau)\K$ for any $\v$. Let $\y'=\expmap_{\p}\lt(\frac{\invexp_{\p}\y}{1+\tau}\rt)$ and $\z'=\expmap_{\p}\lt(\frac{\invexp_{\p}\z}{1+\tau}\rt)$. It is immediate to see $\z'\in\K$ iff $\z\in(1+\tau)\K$. We denote $\xi_{\tau}(s)$ as the geodesic satisfying $\xi_{\tau}(0)=\y'$ and $\xi_{\tau}(1)=\z'$.

    We define an admissible family of curves:
    \[
    \G:\lt[0,\frac{1+\tau}{\tau}\rt]\times[0,1]\rightarrow\M\qquad(t,s)\rightarrow\expy(t(\iexpy\xi_{\tau}(s))).
    \]
    We can verify that $\G(0,1)=\y, \G(1,0)=\y', \G\lt(\frac{1+\tau}{\tau},0\rt)=\expy\lt(\frac{1+\tau}{\tau}\cdot\iexpy{\y'}\rt)=\p$, and $\G(1,1)=\z'$. We also denote $\w\coloneqq \G\lt(\frac{1+\tau}{\tau},1\rt)$. The idea of the proof is to show $d(\p,\w)\leq r$ by Lemma \ref{lem:jct}, which implies $\w\in\K$ since $\Bpr\subseteq\K$. Combining with the fact that $\y\in\K$ and $\z'$ is on the geodesic connecting $\y$ and $\w$, we have $\z'\in\K$. Thus $\z\in(1+\tau)\K$.

    We notice that $v(t,s)=\frac{\partial\G(s,t)}{\partial s}$ is a Jacobi field since it is a variation field of $\gm_s(t)=\expmap_{\y}(t\iexpy\xi_{\tau}(s))$. Let $R(s)=d(\G(0,s),\G(1,s))$. To apply Lemma \ref{lem:jct}, we need to normalize the geodesic $\tilde{\gm}_s(t)=\gm_s\lt(\frac{t}{R(s)}\rt)$. Since $\M$ is Hadamard, by Lemma \ref{lem:jct}
    \begin{equation}\label{eq:ball0}
         \|v(1,s)\|\geq R(s)\|\nabla_{{\dot{\tilde{\gm}}}_s}v(0,s)\|.
    \end{equation}
    Remind that $\|v(1,s)\|=\lt\|\frac{\partial \xi_{\tau}(s)}{\partial s}\rt\|=\|\dot{\xi}_{\tau}(s)\|=d(\y',\z')$. Now we use Lemma \ref{lem:comp} to bound $d(\y',\z')$. We construct $\Delta(\bp,\by,\bz)$ in Euclidean space, a comparison triangle of $\Delta(\p,\y,\z)$  with comparison points $\by'\in[\bp,\by]$ and $\bz'\in[\bp,\bz]$. We restrict $d(\p,\y')=d_{\mathbb{E}}(\bp,\by')$ and $d(\p,\z')=d_{\mathbb{E}}(\bp,\bz')$, then by Lemma \ref{lem:comp}, we have
    \begin{equation}\label{eq:ball1}
        d(\y',\z')\leq d_{\E}(\by',\bz')
    \end{equation}
    On the other hand, it is immediate to verify $\Delta(\bp,\by,\bz)$ is similar to $\Delta(\bp,\by',\bz')$ by considering
    \[
    \frac{d_{\E}(\bp',\bz')}{d_{\E}(\bp',\bz')}=\frac{d(\p,\z')}{d(\p,\z)}= \frac{1}{1+\tau}=\frac{d(\p,\y')}{d(\p,\y)}=\frac{d_{\E}(\bp',\by')}{d_{\E}(\bp',\by')}.
    \]
    
    Thus we have
    \begin{equation}\label{eq:ball2}
        d_{\E}(\by',\bz')=\frac{1}{1+\tau}d_{\E}(\by,\bz)=\frac{1}{1+\tau}d(\y,\z)=\frac{\theta\tau r}{1+\tau},
    \end{equation}
    where the first equation is due to the property of similar triangles, the second one follows from the definition of the comparison triangle, and the last one is due to the definition of $\z$.

    Combining Equations \eqref{eq:ball0}, \eqref{eq:ball1} and \eqref{eq:ball2}, we have
    \begin{equation}\label{eq:ball3}
        \|\nabla_{{\dot{\tilde{\gm}}}_s}v(0,s)\|\leq\frac{\|v(1,s)\|}{R(s)}\leq\frac{\theta\tau r}{(1+\tau)R(s)}.
    \end{equation}

    We also need to apply Lemma \ref{lem:jct} at $t=\frac{1+\tau}{\tau}$. Since the sectional curvature of $\M$ is lower bounded by $\k$, we have
    \begin{equation}\label{eq:ball4}
        \lt\|v\lt(\frac{1+\tau}{\tau},s\rt)\rt\|\leq\frac{1}{\sqrt{-\kappa}}\sinh{\lt(\sqrt{-\k}{R(s)\lt(\frac{1+\tau}{\tau}\rt)}\rt)}\cdot \|\nabla_{{\dot{\tilde{\gm}}}_s}v(0,s)\|
    \end{equation}
    Putting Equations \eqref{eq:ball3} and \eqref{eq:ball4} together, we have
    \begin{equation}
        \begin{split}
            \lt\|v\lt(\frac{1+\tau}{\tau},s\rt)\rt\|\leq&\frac{\sinh{\lt(\sqrt{-\k}{R(s)\lt(\frac{1+\tau}{\tau}\rt)}\rt)}}{\sqrt{-\k}{R(s)\lt(\frac{1+\tau}{\tau}\rt)}}\cdot R(s)\frac{1+\tau}{\tau}\cdot \frac{\theta\tau r}{(1+\tau)R(s)}\\
            =&\frac{\sinh{\lt(\sqrt{-\k}{R(s)\lt(\frac{1+\tau}{\tau}\rt)}\rt)}}{{\sqrt{-\k}{R(s)\lt(\frac{1+\tau}{\tau}\rt)}}}\cdot\theta r.
        \end{split}
    \end{equation}
    Now we can bound $\theta$. Note that 
    \begin{equation*}
    \begin{split}
        &R(s)=d(\G(0,s),\G(1,s))=d(\y,\xi_\tau(s))\\
        \leq& d(\y,\y')+d(\y',\z')\leq\frac{\tau}{1+\tau} R+\frac{\theta\tau r}{1+\tau}\leq\frac{\tau}{1+\tau}(R+r).
    \end{split}
    \end{equation*}

    By Lemma \ref{lem:sinh}, we have
    \[
    \frac{\sinh{\lt(\sqrt{-\k}{R(s)\lt(\frac{1+\tau}{\tau}\rt)}\rt)}}{{\sqrt{-\k}{R(s)\lt(\frac{1+\tau}{\tau}\rt)}}}\leq \frac{\sinh{\lt(\sqrt{-\k}(R+r)\rt)}}{{\sqrt{-\k}(R+r)}}.
    \]
    Thus we can take 
    \[
    \theta=\frac{{\sqrt{-\k}(R+r)}}{\sinh\lt({\sqrt{-\k}(R+r)}\rt)}
    \]
    to ensure $\lt\|v\lt(\frac{1+\tau}{\tau},s\rt)\rt\|\leq r$. The length of the curve $v\lt(\frac{1+\tau}{\tau},s\rt)$ can be bounded by
    \[
    \int_0^1\lt\|v\lt(\frac{1+\tau}{\tau}\rt)\rt\|ds\leq r.
    \]
    This implies $d(\p,\w)\leq r$ and thus $\w\in\K$. Then we know $\z'\in\K$ because $\z'$ lies on the geodesic connecting $\w$ and $\y$, with both endpoints  in $\K$. This finally leads to the fact that $\z\in(1+\tau)\K$.
\end{proof}

Now we give the proof of Theorem \ref{thm:bcoso}.
\begin{proof}[Proof of Theorem \ref{thm:bcoso}]
Algorithm \ref{alg:bcoso} always plays feasible points because by 
\[
\d'=\frac{{\sqrt{-\k}(R+r)}}{\sinh\lt({\sqrt{-\k}(R+r)}\rt)}\tau r
\]
and Lemma \ref{lem:ball2}, we have $\z_t\in(1+\tau)\K$ and $\tz_t\in\K$.

By the first item of Lemma \ref{lem:wang},
\begin{equation}\label{eq:sobco1}
\E\lt[\la\g_t,\invexp_{\yt_t}\x\ra\rt]=\E\lt[\la\E\lt[\g_t|\yt_t\rt],\invexp_{\yt_t}\x\ra\rt]=\frac{V_{\d'}}{S_{\d'}}\E\lt[\la\nabla \hf_t(\yt_t),\invexp_{\yt_t}\x\ra\rt]
\end{equation}
where $\hf_t(\x)$ is a smoothed version of $f_t(\x)$. More specifically,
\[
\hf_t(\x)\coloneqq\frac{1}{V_{\d'}}\int_{\B_{\x}(\d')}f_t(\u)\omega
\]
where $\omega$ is the volume form.

Applying Lemma \ref{lem:ipso} to Algorithm \ref{alg:bcoso}, we know $\yt_t$ is an infeasible projection of $\y_t$ over $(1-\d)\K$, which means 
\begin{equation}\label{eq:sobco2}
    d(\yt_{t+1},\x)^2\leq d(\y_{t+1},\x)^2\leq d(\yt_t,\x)^2+\zeta\eta^2\|\g_t\|^2-2\eta\la\g_t,-\invexp_{\yt_t}\x\ra 
\end{equation}
holds for any $\x\in(1-\d)\K$, where the second inequality is due to Lemma \ref{lem:cos1}.

Equation \eqref{eq:sobco2} is equivalent to
\begin{equation}\label{eq:sobco3}
    \la\g_t,-\invexp_{\yt_t}\x_t\ra\leq\frac{d(\yt_t,\x)^2-d(\yt_{t+1},\x)^2}{2\eta}+\frac{\eta\zeta}{2}\|\g_t\|^2.
\end{equation}

By the third item of Lemma \ref{lem:wang}, we have
\begin{equation}\label{eq:sobco4}
  \begin{split}
        \hf(\yt_t)-\hf(\x)\leq&\la\nabla \hf(\yt_t),-\invexp_{\yt_t}\x\ra+2\d'\rho G\\
        =&\frac{S_{\d'}}{V_{\d'}}\E\lt[\la \g_t,-\invexp_{\ty_t}\x\ra\rt]+2\d'\rho G\\
        \leq&\frac{S_{\d'}}{V_{\d'}}\cdot\frac{d(\yt_t,\x)^2-d(\yt_{t+1},\x)^2}{2\eta}+\frac{\eta\zeta}{2}\cdot \frac{S_{\d'}}{V_{\d'}}\E\lt[\|\g_t\|^2\rt]+2\d'\rho G.
  \end{split}
\end{equation}
where $\rho$ is a constant solely depends on $\K$.

Combining Equations \eqref{eq:sobco1}, \eqref{eq:sobco3}, and \eqref{eq:sobco4}, we have
\begin{equation}\label{eq:sobco5}
    \begin{split}
        \E\lt[\tse\lt(\hf_t(\yt_t)-\hf_t(\x)\rt)\rt]\leq\frac{2R^2}{\eta}\cdot \frac{S_{\d'}}{V_{\d'}} +\frac{\eta\zeta S_{\d'}}{2V_{\d'}}\tse\E\lt[\|\g_t\|^2\rt]+2\d'\rho GT
    \end{split}
\end{equation}
holds for any $\x\in(1-\d)\K$, where we use the fact that $d(\yt_s,\x)\leq 2R$. We also need to bound $\E\lt[f_t(\zt_t)-\hf_t(\yt_t)\rt]$ and $\E\lt[\hf_t(\tx_I)-f_t(\x_I)\rt]$. For the former term,
\begin{equation}\label{eq:sobco6}
    \begin{split}
        \E\lt[f_t(\zt_t)-\hf_t(\yt_t)\rt]=&\E\lt[f_t(\zt_t)-f_t(\z_t)\rt]+\E\lt[f_t(\z_t)-f_t(\yt_t)\rt]+\E\lt[f_t(\yt_t)-\hf_t(\yt_t)\rt]\\
        \leq &G\cdot d(\zt_t,\z_t)+G\d'+G\d'\leq G\tau R+2G\d',
    \end{split}
\end{equation}
where we use Lemma \ref{lem:lips} and the gradient Lipschitzness. Similarly, for the latter term,
\begin{equation}\label{eq:sobco7}
    \begin{split}
        \E\lt[\hf_t(\tx_t)-f_t(\x_I)\rt]=&\E\lt[\hf_t(\tx_I)-f_t(\tx_I)\rt]+\E\lt[f_t(\tx_I)-f_t(\x_I)\rt]\\
        \leq& G\d'+G\cdot d(\tx_I,\x_I)\leq G\d'+G\d R.
    \end{split}
\end{equation}
By the second term of Lemma \ref{lem:wang}, we have
\begin{equation}\label{eq:sobco8}
    \frac{S_{\d'}^2}{V_{\d'}^2}\|\g_t\|^2\leq\lt(\frac{n}{\d'}+n|\k|\rt)^2M^2.
\end{equation}
Combining Equations \eqref{eq:sobco5}, \eqref{eq:sobco6}, \eqref{eq:sobco7} and \eqref{eq:sobco8}, summing from $t=s$ to $e$, we get
\begin{equation}
    \begin{split}
        &\E\lt[\tse\lt(f_t(\zt_t)-f_t(\x_I)\rt) \rt]\\
        =&\E\lt[\tse\lt(f_t(\zt_t)-\hf_t(\yt_t)\rt)\rt]+\E\lt[\tse\lt(\hf_t(\yt_t)-\hf_t(\x_I)\rt)\rt]+\E\lt[\tse\lt(\hf_t(\x_I)-f_t(\x_I)\rt)\rt]\\
        \leq&\lt(G\tau R+2G\d' \rt)T+\lt(\frac{2R^2}{\eta}\cdot \frac{S_{\d'}}{V_{\d'}}+\frac{\eta\zeta M^2}{2}\frac{S_{\d'}}{V_{\d'}}T+2\rho\d' GT\rt)+\lt(G\d'+G\d R \rt)T\\
        \leq&\lt(G\tau R+2G\d' \rt)T+\lt(\frac{2R^2}{\eta}\cdot \lt(\frac{n}{\d'}+n|\k|\rt)+\frac{\eta\zeta M^2}{2}\lt(\frac{n}{\d'}+n|\k|\rt)T+2\rho\d' GT\rt)+\lt(G\d'+G\d R \rt)T.
    \end{split}
\end{equation}
After plugging in $\eta=T^{-\frac{1}{2}}$, $\d=T^{-\frac{1}{4}}$, $\tau=T^{-\frac{1}{4}}$ and $\d'=\frac{\sqrt{-\k}(R+r)}{\sinh{\lt(\sqrt{-\k}(R+r)\rt)}}T^{-\frac{1}{4}}r$, we see $\d'\leq T^{-\frac{1}{4}}r$ and 
\[
\frac{1}{\d'}\leq\frac{\sinh{\lt(\sqrt{-\k}(R+r) \rt)}}{\sqrt{-\k}(R+r)}\cdot\frac{T^{\frac{1}{4}}}{r}.
\]
Then 
\[
\E\lt[\tse\lt(f_t(\zt_t)-f_t(\x_I)\rt)\rt]=O\lt(T^\frac{3}{4}\rt)
\]
as claimed.

Denote $\x_t^*\in\argmin_{\x\in(1-\d)\K}d(\x,\yt_t)$. We now bound the number of calls to the separation oracle. Due to $\y_{t+1}=\expmap_{\yt_t}\lt(-\eta\g_t\rt)$, 
\[
d(\y_{t+1},(1-\d)\K)\leq d(\x_t^*,\y_{t+1})\leq d(\x_t^*,\yt_t)+d(\yt_t,\y_{t+1})\leq d(\yt_t,(1-\d)\K)+\eta\|\g_t\|.
\]
Squaring both sides, we have
\begin{equation}\label{eq:sobco9}
\begin{split} 
    d(\y_{t+1},(1-\d)\K)^2&\leq d(\yt_t,(1-\d)\K)^2+2d(\yt_t,(1-\d)\K)\eta M+\eta^2M^2\\
    &\leq d(\yt_t,(1-\d)\K)^2+2\d R\eta M+\eta^2M^2,
\end{split}
\end{equation}
where the second inequality is because 
\[
d(\yt_t,(1-\d)\K)\leq  d(\yt_t,\expmap_{\p}((1-\delta)\invexp_{\p}\yt_t))=d(\yt_t,\expmap_{\yt_t}(\d\invexp_{\yt_t}\p))=\delta d(\yt_t,\p)\leq \d R
\]
and $\|\g_t\|=|f_t(\z_t)|\leq M$.
Combining Equation \eqref{eq:sobco9} with Lemma \ref{lem:ipso}, we can bound the number of separation oracle calls as:
\begin{equation}
    \begin{split}
        N_{calls}\leq& \sum_{t=1}^T\lt(\zeta\frac{d(\y_{t+1},(1-\d)\K)^2-d(\yt_{t+1},(1-\d)\K)^2}{\d^2\br^2}+1\rt)\\
        \leq&\sum_{t=1}^T\lt(\zeta\frac{d(\yt_t,(1-\d)\K)^2+2R\d\eta M+\eta^2M^2-d(\yt_{t+1},(1-\d)\K)^2}{\d^2\br^2}+1\rt)\\
        \leq&\frac{2RM\zeta\eta T}{\br^2\d}+\frac{M^2\zeta\eta^2T}{\br^2\d^2}+ T\\
        =&\frac{2RM\zeta T}{\br^2}\cdot\frac{\eta}{\d}+\frac{M^2\zeta T}{\br^2}\cdot\frac{\eta^2}{\d^2}+ T\\
        =&T+\frac{2RM\zeta}{\br^2}\cdot T^{\frac{3}{4}}+\frac{M^2\zeta}{\br^2}\cdot T^{\frac{1}{2}}
    \end{split}
\end{equation}
where the last inequality follows from $\yt_1=\p\in(1-\d)\K$ and thus $d(\yt_1,(1-\d)\K)=0$.
\end{proof}
\subsection{Proof of Theorem \ref{thm:bco2}}
\label{app:bco2}


\begin{proof}
    We maintain the loop invariant $\x_t\in\beta\K$. So by Lemma \ref{lem:ball}, $\B(\x_t,\d')\subseteq\K$, which means $\z_t$ and $\z_t'$ are feasible points.

    For the two-point feedback setting, by Lemma \ref{lem:wang},
    \begin{equation}\label{eq:bco1}
        \begin{split}
            \hf_t(\x_t)-\hf_t(\x)\leq&\la\nabla\hf_t(\x_t),-\invexp_{\x_t}\x\ra+2\d'\rho G\\
            =&\frac{S_{\d'}}{V_{\d'}}\E\lt[\la\g_t,-\invexp_{\x_t}\x\ra\rt]+2\d'\rho G\\
            =&\frac{S_{\d'}}{V_{\d'}}\E\lt[\la \g_t,-\G_{\y_t}^{\x_t}\invexp_{\y_t}\x\ra+\la \g_t,\G_{\y_t}^{\x_t}\invexp_{\y_t}\x-\invexp_{\x_t}\x\ra \rt]+2\d'\rho G.
        \end{split}
    \end{equation}

    By Lemma 17 of \citet{wang2023online},
    \[
    \frac{S_{\d'}}{V_{\d'}}\E\lt[\|\g_t\|\rt]\leq nG(1+|\k|\d'^2).
    \]
    Thus,
    \begin{equation}\label{eq:bco2}
        \begin{split}
            &\frac{S_{\d'}}{V_{\d'}}\E\lt[\la \g_t,\G_{\y_t}^{\x_t}\invexp_{\y_t}\x-\invexp_{\x_t}\x\ra \rt]\leq \frac{S_{\d'}}{V_{\d'}}\E\lt[\|\g_t\|\rt]\cdot \zeta d(\x_t,\y_t)\\
            \leq &nG(1+|\k|\d'^2)\cdot \zeta(1-\beta)d(\y_t,\p)\leq nGR\zeta(1-\beta)(1+|\k|\d'^2)\\
            =&nGR\zeta\cdot \frac{\sinh(\sqrt{-\k}(R+r))}{\sqrt{-\k}(R+r)}\cdot \frac{\d'}{r}\lt(1+\|\k|\d'^2\rt)=O(\d'),
        \end{split}
    \end{equation}
    where we use the $\zeta$-smoothness of $\frac{1}{2}d(\x,\y)^2$ and the definition of $\d'$.

    Since $\y_{t+1}$ is an infeasible projection of $\y_{t+1}'$ onto $(1-\d)\K$,
    \[
    \begin{split}
        &d(\y_{t+1},\x)^2\leq d(\y_{t+1}',\x)^2\leq d(\y_t,\x)^2+\zeta d(\y_t,\y_{t+1}')^2-2\la\invexp_{\y_t}\y_{t+1}',\invexp_{\y_t}\x\ra\\
        =&d(\y_t,\x)^2+\zeta\eta^2\|\g_t\|^2-2\la-\eta\G_{\x_t}^{\y_t}\g_t,\invexp_{\y_t}\x\ra\\
        =&d(\y_t,\x)^2+\zeta\eta^2\|\g_t\|^2-2\eta\la\g_t,-\G_{\y_t}^{\x_t}\invexp_{\y_t}\x\ra\\
    \end{split}
    \]
    holds for any $\x\in(1-\d)\K$, which means
    \begin{equation}\label{eq:bco3}
        \la\g_t,-\G_{\y_t}^{\x_t}\invexp_{\y_t}\x\ra\leq\frac{d(\y_t,\x)^2-d(\y_{t+1},\x)^2}{2\eta}+\frac{\eta\zeta\|\g_t\|^2}{2}
    \end{equation}
    Taking expectation and using $\frac{S_{\d'}}{V_{\d'}}\leq\frac{n}{\d'}+n|\k|$, we have
    \begin{equation}\label{eq:bco4}
        \begin{split}
            &\frac{S_{\d'}}{V_{\d'}}\E\lt[\la\g_t,-\G_{\y_t}^{\x_t}\invexp_{\y_t}\x\ra\rt]\leq\frac{S_{\d'}}{2\eta V_{\d'}}\lt(d(\y_t,\x)^2-d(\y_{t+1},\x)^2\rt)+\frac{\eta\zeta S_{\d'}}{2V_{\d'}}\E\lt[\|\g_t\|^2\rt]\\
            \leq &\frac{1}{2\eta}\lt(\frac{n}{\d'}+n|\k|\rt)\lt(d(\y_t,\x)^2-d(\y_{t+1},\x)^2\rt)+\lt(\frac{n}{2\d'}+\frac{n|\k|}{2}\rt)\eta\zeta\cdot \d'^2G^2.
        \end{split}
    \end{equation}
    Combining Equations \eqref{eq:bco1}, \eqref{eq:bco2}, \eqref{eq:bco3} and \eqref{eq:bco4}, and summing from $t=s$ to $e$, we have
    \[
    \E\lt[\sum_{t=s}^e\lt(\hf_t(\x_t)-\hf_t(\x)\rt)\rt]=O\lt(\frac{1}{\eta\d'}+\d'T+\eta\d'T\rt)
    \]
    holds for any $\x\in(1-\d)\K$.

    Following Equations \eqref{eq:sobco6} and \eqref{eq:sobco7}, and taking $\x=\expmap_{\p}\lt((1-\d)\invexp_{\p}\x^*\rt)$, we have
    \[
    \E\lt[f_t(\z_t)-\hf_t(\x_t)\rt]=O(\d')
    \]
    and
    \[
    \E\lt[\hf_t(\x)-f_t(\x^*)\rt]=O(\d+\d').
    \]
    In sum,
    \[
    \sum_{t=s}^e\E\lt[f_t(\z_t)-f_t(\x^*)\rt]=O\lt(\frac{1}{\eta\d'}+(\d+\d')T+\eta\d'T\rt).
    \]
    By choosing $\eta=1$, $\d=1-\beta=T^{-\frac{1}{2}}$, then $\d'=O\lt(T^{-\frac{1}{2}}\rt)$ and we can get $O(\sqrt{T})$ regret.

    Denote $\x_t^*\in\argmin_{\x\in(1-\d)\K}d(\x,\y_t)$ and we again bound the number of calls to the separation oracle. Due to $\y_{t+1}'=\expmap_{\y_t}\lt(-\eta\G_{\x_t}^{\y_t}\g_t\rt)$, 
\[
\begin{split}
   & d(\y_{t+1}',(1-\d)\K)\leq d(\x_t^*,\y_{t+1}')\leq d(\x_t^*,\y_t)+d(\y_t,\y_{t+1}')\\
    \leq& d(\yt_t,(1-\d)\K)+\eta\|\g_t\|\leq d(\yt_t,(1-\d)\K)+\eta\d' G\\
\end{split}
\]
where we use $\|\g_t\|=\frac{1}{2}|f_t(\z_t)-f_t(\tz_t)|\leq\d' G$ by Assumption \ref{ass:bounded}.
Squaring both sides, we have
\begin{equation}\label{eq:bco5}
\begin{split} 
    d(\y_{t+1}',(1-\d)\K)^2&\leq d(\y_t,(1-\d)\K)^2+2d(\y_t,(1-\d)\K)\eta \d' G+\eta^2\d'^2G^2\\
    &\leq d(\y_t,(1-\d)\K)^2+2\d R\eta \d'G+\eta^2\d'^2G^2,
\end{split}
\end{equation}
where the second inequality is again due to $\y_t\in\K$ and
\[
d(\y_t,(1-\d)\K)\leq  d(\y_t,\expmap_{\p}((1-\delta)\invexp_{\p}\y_t))=d(\y_t,\expmap_{\y_t}(\d\invexp_{\y_t}\p))=\delta d(\y_t,\p)\leq \d R.
\]
Combining Equation \eqref{eq:bco5} with Lemma \ref{lem:ipso}, we can bound the number of separation oracle calls as:
\begin{equation}\label{eq:bco6}
    \begin{split}
        N_{calls}\leq& \sum_{t=1}^T\lt(\zeta\frac{d(\y_{t+1}',(1-\d)\K)^2-d(\y_{t+1},(1-\d)\K)^2}{\d^2\br^2}+1\rt)\\
        \leq&\sum_{t=1}^T\lt(\zeta\frac{d(\y_t,(1-\d)\K)^2+2R\d\eta \d'G+\eta^2\d'^2G^2-d(\y_{t+1},(1-\d)\K)^2}{\d^2\br^2}+1\rt)\\
        =&\zeta\cdot\frac{2\d'\eta RG}{\d\br^2}+\zeta\cdot\frac{\eta^2\d'^2G^2}{\d^2\br^2}T+T\\
    \end{split}
\end{equation}
where the last inequality follows from $\yt_1=\p\in(1-\d)\K$ and thus $d(\yt_1,(1-\d)\K)=0$. Since $\eta=1,\d=1-\beta=T^{-\frac{1}{2}}$ and
\[
\d'=(1-\beta)\frac{{\sqrt{-\k}(R+r)}}{\sinh\lt({\sqrt{-\k}(R+r)}\rt)} r=\d \frac{{\sqrt{-\k}(R+r)}}{\sinh\lt({\sqrt{-\k}(R+r)}\rt)} r\leq\d r.
\]
Plugging these parameters into Equation \eqref{eq:bco6}, we have
\[
N_{calls}\leq\zeta\cdot\frac{2rRGT}{\br^2}+\zeta\cdot\frac{r^2G^2T}{\br^2}+T=O(T).
\]
\end{proof}
\section{Omitted Proofs in Section \ref{sec:loo}}

We have the following convergence guarantee for RFW.

\subsection{Proof of Lemma \ref{lem:dist}}
\label{app:dist}
\begin{proof}
    Since Algorithm \ref{alg:sh} is indeed RFW with line-search (Algorithm \ref{alg:rfw}) when $f(\x)=d(\x,\y)^2/2$, which is $\zeta$ gsc-smooth by Lemma \ref{lem:smooth} and $\nabla f(\x)=-\invexp_{\x}\y$, the upper bound on the number of iterations follows from Lemma \ref{lem:dual} with $L=\zeta$ and $D=2R$. Item $1$  follows from the line-search because $f(\x_i)=d(\x_i,\y)^2/2$ does not increase as $i$ increases.

    For item $2$, Algorithm \ref{alg:sh} stops when either $d(\xt,\y)^2\leq 3\eps$ or $d(\xt,\y)^2>3\eps$ and $\la \invexp_{\xt}\y,\invexp_{\xt}\x\ra\leq\eps$. For the first case, item $2$ obviously holds. Now we consider the second case. We first show $\la\invexp_{\y}\z,\invexp_{\y}\xt\ra+\la\invexp_{\xt}\z,\invexp_{\xt}\y\ra\geq d(\xt,\y)^2$ as follows. By Lemma \ref{lem:cos2}, we have
    \begin{equation*}
    \begin{split}
        &\la\invexp_{\y}\z,\invexp_{\y}\xt\ra\geq\frac{1}{2}(d(\y,\z)^2+d(\y,\xt)^2-d(\xt,\z)^2)\\
        &\la\invexp_{\xt}\z,\invexp_{\xt}\y\ra\geq\frac{1}{2}(d(\xt,\y)^2+d(\xt,\z)^2-d(\y,\z)^2).
    \end{split}
    \end{equation*}
    Adding the above two inequalities we have $\la\invexp_{\y}\z,\invexp_{\y}\xt\ra+\la\invexp_{\xt}\z,\invexp_{\xt}\y\ra\geq d(\xt,\y)^2$. Now item $2$ follows from 
    \[
    \la\invexp_{\y}\z,\invexp_{\y}\xt\ra\geq d(\xt,\y)^2-\la\invexp_{\xt}\z,\invexp_{\xt}\y\ra> 3\eps-\la\invexp_{\xt}\v_i,\invexp_{\xt}\y\ra>2\eps.
    \]

    For the third item, denote $\x^*=\argmin_{\x\in\K}d(\x,\y)^2$, Suppose $d(\y,\K)^2=d(\x^*,\y)^2<\epsilon$ and $d(\xt,\y)^2>3\eps$. Denote $f(\x)=d(\x,\y)^2/2$ and $\nabla f(\x)=-\invexp_{\x}\y$. For the last iteration,
    $$
    \la\invexp_{\xt}\v_i,\invexp_{\xt}\y\ra=\max_{\v\in\K}\la \invexp_{\xt}\v,-\nabla f(\xt)\ra\leq\eps,
    $$
    which implies
    \begin{equation*}
        \begin{split}
            d(\xt,\y)^2-d(\y,\K)^2=&2f(\xt)-2f(\x^*)\leq 2\la -\invexp_{\xt}\x^*,\nabla f(\xt)\ra\\
            \leq & 2\max_{\v\in\K}\la-\invexp_{\xt}\v,\nabla f(\xt)\ra\leq 2\eps.
        \end{split}
    \end{equation*}
    And thus
    \[    
    d(\xt,\y)^2\leq d(\y,\K)^2+2\eps\leq 3\eps
    \]
    as claimed.
\end{proof}

\subsection{Proof of Lemma \ref{lem:iploo}}
\label{app:iploo}
Before proving Lemma \ref{lem:iploo}, we need the following lemma.
\begin{lemma}\label{lem:auxi}
    Consider Algorithm \ref{alg:iploo} and fix some $\eps$ such that $0<3\eps<d(\x_0,\y_0)^2$. Setting $\gamma=\frac{2\eps}{d(\x_0,\y_0)^2}$, we have $d(\x_i,\y_i)\leq d(\x_0,\y_0)$ for any $i$.
\end{lemma}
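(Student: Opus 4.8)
~\textbf{Proof plan for Lemma \ref{lem:auxi}.}

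The plan is to prove the bound $d(\x_i,\y_i)\leq d(\x_0,\y_0)$ by induction on $i$, tracking both quantities as they evolve through the two update rules in Algorithm \ref{alg:iploo}. The key observation is that the pair $(\x_i,\y_i)$ changes in two distinct ways: the feasible point $\x_i$ is produced by Algorithm \ref{alg:sh} (which can only \emph{decrease} the distance to $\y_i$, by item 1 of Lemma \ref{lem:dist}), while the target point $\y_{i+1}$ is pulled toward $\x_i$ by the geodesic step $\y_{i+1}=\expmap_{\x_i}((1-\gamma)\invexp_{\x_i}\y_i)$. First I would record the base case: when $i=0$ (or at the first iteration where $\x_1$ comes out of Algorithm \ref{alg:sh} with feasible point $\x_0$ and target $\y_1=\Pi_{\BpR}\y_0$), item 1 of Lemma \ref{lem:dist} gives $d(\x_1,\y_1)\leq d(\x_0,\y_1)\leq d(\x_0,\y_0)$, where the last step uses that metric projection onto the convex ball $\BpR$ is non-expansive and $\x_0\in\K\subseteq\BpR$.

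For the inductive step I would split the analysis across the two effects. Suppose $d(\x_i,\y_i)\leq d(\x_0,\y_0)$. The geodesic contraction step moves $\y_i$ a fraction $\gamma$ of the way toward $\x_i$ along the minimizing geodesic, so
\begin{equation*}
    d(\x_i,\y_{i+1})=(1-\gamma)\,d(\x_i,\y_i)\leq (1-\gamma)\,d(\x_0,\y_0)\leq d(\x_0,\y_0).
\end{equation*}
Then the next call to Algorithm \ref{alg:sh} uses feasible point $\x_i$ and target $\y_{i+1}$, so by item 1 of Lemma \ref{lem:dist} its output $\x_{i+1}$ satisfies $d(\x_{i+1},\y_{i+1})\leq d(\x_i,\y_{i+1})\leq d(\x_0,\y_0)$, which closes the induction. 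The role of the specific choice $\gamma=\frac{2\eps}{d(\x_0,\y_0)^2}$ is only to guarantee $\gamma\in(0,1)$ under the hypothesis $0<3\eps<d(\x_0,\y_0)^2$, ensuring the contraction factor $(1-\gamma)$ is a genuine contraction (positive and strictly less than one); the monotonicity itself would hold for any $\gamma\in[0,1)$.

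The main obstacle I anticipate is bookkeeping rather than geometry: carefully matching the indices between the two nested updates in Algorithm \ref{alg:iploo}, since $\x_{i+1}$ is computed from $\y_{i+1}$ \emph{after} $\y_{i+1}$ has been contracted from $\y_i$ using $\x_i$. One must make sure that each invocation of Lemma \ref{lem:dist} item 1 is applied with the correct feasible point and target, and that the contraction identity $d(\x_i,\y_{i+1})=(1-\gamma)d(\x_i,\y_i)$ is legitimately the geodesic-length scaling on a Hadamard manifold (where $\expmap_{\x_i}$ acts linearly on radial distance along a single minimizing geodesic). Because each of the two operations is individually non-expansive with respect to $d(\x_0,\y_0)$, no comparison-triangle or Jacobi-field machinery is needed here; the lemma follows from chaining these two monotonicity facts.
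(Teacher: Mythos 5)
Your proof is correct and follows essentially the same route as the paper's: both arguments combine the geodesic contraction identity $d(\x_i,\y_{i+1})=(1-\gamma)d(\x_i,\y_i)$, the monotonicity of Algorithm \ref{alg:sh} from item 1 of Lemma \ref{lem:dist}, and the non-expansiveness of the projection onto $\BpR$ toward $\x_0\in\K\subseteq\BpR$; the paper merely chains these inequalities down to $d(\x_0,\y_0)$ rather than phrasing it as an explicit induction. Your added remark that the specific value $\gamma=\frac{2\eps}{d(\x_0,\y_0)^2}$ only matters for ensuring $\gamma\in(0,1)$ is also consistent with the paper's treatment.
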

\begin{proof}
For all $i > 1$, we define the sequence $\y_i$ by the relation $\y_{i} = \expmap_{\x_{i-1}}((1-\gamma)\invexp_{\x_{i-1}}\y_{i-1})$, and $\gamma \in [0,1)$. From this, we can deduce the following sequence of inequalities:

\begin{align*}
d(\x_{i-1},\y_i) &= \|\invexp_{\x_{i-1}}\y_i\| \\
&= (1-\gamma)\|\invexp_{\x_{i-1}}\y_{i-1}\| \\
&\leq d(\x_{i-1},\y_{i-1}).
\end{align*}
The inequality follows since $\gamma \in (0,1)$. 

By the first item of Lemma \ref{lem:dist}, for any $i \geq 1$, we have $d(\x_i,\y_i) \leq d(\x_{i-1},\y_i)$. Combining this with the previous inequality, we get

\begin{align*}
d(\x_i,\y_i) &\leq d(\x_{i-1},\y_{i-1}) \\
&\leq \dots \\
&\leq d(\x_1,\y_1) \\
&\leq d(\x_0,\y_1) \\
&\leq d(\x_0,\y_0),
\end{align*}

where the last inequality follows from the fact that $\x_0\in\K$ and $\y_1$ is the projection of $\y_0$ onto $\BpR$, while $\K \subseteq \BpR$.
\end{proof}
\begin{proof}[Proof of Lemma \ref{lem:iploo}]
Given that $\y_1$ is the projection of $\y_0$ onto $\BpR$ and $\K\subseteq \BpR$, we have $\forall \z \in \K: d(\y_1,\z)^2 \leq d(\y_0,\z)^2$. The lemma trivially holds when $d(\x_0,\y_0)^2 \leq 3\eps$ or $d(\x_1,\y_1)^2 \leq 3\eps$.

Without loss of generality, we assume that $d(\x_1,\y_1) > 3\eps$. Let $k > 1$ be the number of iterations in Algorithm \ref{alg:iploo}. This means that $d(\x_k,\y_k)^2 \leq 3\eps$ and $d(\x_i,\y_i)^2 > 3\eps$ for any $i < k$. According to the second item of Lemma \ref{lem:dist}, we obtain $\langle \invexp_{\y_i}\z,\invexp_{\y_i}\x_i \rangle \geq 2\eps$ for all $i < k$ and $\z \in \K$. Lemma \ref{lem:auxi} also gives us $d(\x_i,\y_i) \leq d(\x_0,\y_0)$ for all $i < k$. 

Applying Lemma \ref{lem:pull_ip} with $\g = -\invexp_{\y_i}\x_i$, $C = d(\x_0,\y_0)$, and $Q = 2\eps$, we deduce that for every $1\leq i < k$,

\begin{equation}\label{eq:iploo}
    \forall \z \in \K: \quad d(\z,\y_{i+1})^2 \leq d(\z,\y_i)^2 - \frac{4\eps^2}{\zeta d(\x_i,\y_i)^2}\leq d(\z,\y_i)^2 - \frac{4\eps^2}{\zeta d(\x_0,\y_0)^2}.
\end{equation}

Consequently, for any $\z \in \K$, the returned point $\y$ satisfies $d(\y,\z)^2 \leq d(\y_1,\z)^2 \leq d(\y_0,\z)^2$. Because $\p \in \K$, we have $d(\y,\p) \leq d(\y_1,\p) \leq R$, which implies $\y \in \BpR$. Note that $\x \in \K$ as it is the output of Algorithm \ref{alg:sh}. Thus, we know Algorithm \ref{alg:iploo} returns $(\x,\y)\in\K\times\BpR$.

Now we bound the number of iterations in Algorithm \ref{alg:iploo}. Denote $\x_i^*=\argmin_{\x\in\K}d(\x,\y_i)^2$ for any $i<k$. Using Equation \eqref{eq:iploo}, 
    \begin{equation*}
        \begin{split}       d(\y_{i+1},\K)^2=&d(\x_{i+1}^*,\y_{i+1})^2\leq d(\x_i^*,\y_{i+1})^2\\
            \leq& d(\x_i^*,\y_i)^2-\frac{4\eps^2}{\zeta d(\x_0,\y_0)^2}=d(\y_i,\K)^2-\frac{4\eps^2}{\zeta d(\x_0,\y_0)^2}.
        \end{split}
    \end{equation*}
    By applying the inequality $d(\y_1,\K)^2\leq d(\y_0,\K)^2$ and subsequently performing a telescoping sum, we can demonstrate that
    
    \begin{equation*}
        \begin{split}
            d(\y_{i+1},\K)^2\leq d(\y_1,\K)^2-\frac{4i\eps^2}{\zeta d(\x_0,\y_0)^2}\leq d(\y_0,\K)^2-\frac{4i\eps^2}{\zeta d(\x_0,\y_0)^2}\leq d(\y_0,\x_0)^2-\frac{4i\eps^2}{\zeta d(\x_0,\y_0)^2}.
        \end{split}
    \end{equation*}
    After at most $k-1=(\zeta d(\x_0,\y_0)^2(d(\x_0,\y_0)^2-\eps))/4\eps^2$ iterations, we have $d(\y_k,\K)^2\leq\eps$. By the third item of Lemma \ref{lem:dist}, the next iteration will be the last one, and the returned points $\x,\y$ satisfy $d(\x,\y)^2\leq 3\eps$, as claimed.
\end{proof}


\subsection{Proof of Theorem \ref{thm:ocoloo1}}
\label{app:ocoloo1}
We first prove Lemma \ref{lem:ipgsc}, which bounds the regret of the infeasible projection for gsc-convex losses.
\begin{lemma}\label{lem:ipgsc}
    With some fixed block size $B$, $\eta_i=\eta$ and $\eps_i=\eps$ for any $i=1,\dots,\frac{T}{B}$. Assume all losses are gsc-convex. We use $i(t)$ to denote the block index of round $t$. Then the regret of playing $\yt_{i(t)-1}$ as defined in Algorithm \ref{alg:loo-boco} can be bounded by
    \[
    \sup_{I=[s,e]\subseteq[T]}\lt\{\tse f_t(\yt_{i(t)-1})-\min_{\x_I\in\K}\tse f_t(\x_I)\rt\}\leq \frac{\zeta\eta BG^2T}{2}+4RKG+\frac{4R^2}{\eta}.
    \]
\end{lemma}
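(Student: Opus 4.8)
The plan is to view Algorithm \ref{alg:loo-boco} as an infeasible Riemannian OGD run on a sequence of \emph{meta-losses}, one per block, and then invoke Lemma \ref{lem:reg_iogd}. Fix an interval $I=[s,e]$ and a competitor $\x_I\in\K$, and let $i(s),\dots,i(e)$ index the blocks meeting $[s,e]$. For each block $i$ define the meta-loss $F_i\coloneqq\sum_{t\in\text{block }i}f_t$, which is gsc-convex; its gradient at the point $\yt_{i-1}$ used throughout block $i$ is exactly the accumulated vector $\nabla_i$ from the algorithm, with $\|\nabla_i\|\le BG$ by Assumption \ref{ass:bounded}. First I would linearize round by round with gsc-convexity, $f_t(\yt_{i(t)-1})-f_t(\x_I)\le\la\nabla f_t(\yt_{i(t)-1}),-\invexp_{\yt_{i(t)-1}}\x_I\ra$, and group rounds by block, so that a full block $i$ contributes $\la\nabla_i,-\invexp_{\yt_{i-1}}\x_I\ra=F_i(\yt_{i-1})-F_i(\x_I)$ up to this linearization. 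Since we are in the LOO case, the relevant infeasible projection is onto $\tK=\K$ directly.

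The key structural observation I would make explicit is that the update produces \emph{two interleaved OGD chains}: tracing the indices shows that $\yt_{i+1}$ is the infeasible projection (via Algorithm \ref{alg:iploo}, a valid infeasible-projection oracle onto $\K$ by Lemma \ref{lem:iploo}) of $\expmap_{\yt_{i-1}}(-\eta\nabla_i)$. Hence the even-indexed points $\yt_0,\yt_2,\dots$ form one genuine infeasible R-OGD sequence (handling the odd blocks) and the odd-indexed points $\yt_1,\yt_3,\dots$ form another (handling the even blocks). To each chain I would apply part 1 of Lemma \ref{lem:reg_iogd} over its run of full blocks inside $[s,e]$; this is legitimate because every $\yt_{i-1}\in\BpR$ (Lemma \ref{lem:iploo}), so the distortion constant $\zeta$ is valid, and because each chain is exactly Algorithm \ref{alg:iogd} with $\tK=\K$ and losses $F_i$. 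Summing the two chains, the two leading distance terms give $2\cdot\frac{(2R)^2}{2\eta}=\frac{4R^2}{\eta}$, while the gradient terms give $\frac{\eta\zeta}{2}\sum_i\|\nabla_i\|^2\le\frac{\eta\zeta}{2}\cdot\frac{T}{B}\cdot(BG)^2=\frac{\zeta\eta BG^2T}{2}$.

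It then remains to account for the at most two \emph{partial} boundary blocks $i(s)$ and $i(e)$, whose rounds do not form full meta-steps and hence are not covered by the block-aligned telescoping. For each I would bound the contribution directly: the partial linearized regret is $\la\nabla',-\invexp_{\yt_{i-1}}\x_I\ra\le\|\nabla'\|\,d(\yt_{i-1},\x_I)\le (BG)(2R)$ by Cauchy--Schwarz, using the diameter bound $d(\yt_{i-1},\x_I)\le 2R$ (both points lie in $\BpR$) and $\|\nabla'\|\le BG$. The two boundary blocks together contribute at most $4RBG$, which is the $4RKG$ term. Assembling the full-block and boundary estimates yields the claimed bound.

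I expect the main obstacle to be the careful bookkeeping of the interleaved two-chain structure together with the boundary blocks: one must verify that the full blocks inside $[s,e]$ split into two contiguous runs to which Lemma \ref{lem:reg_iogd} applies, that the telescoping of squared distances across the skip-by-two index pattern contributes only the two initial terms (each $\le 4R^2$), and that the leftover partial blocks are isolated and estimated separately rather than folded into the OGD updates (which use the full-block gradient $\nabla_i$). The geometric ingredients—Lemma \ref{lem:cos1} inside Lemma \ref{lem:reg_iogd} and the feasibility $\yt_{i-1}\in\BpR$ from Lemma \ref{lem:iploo}—are routine once this combinatorial structure is in place.
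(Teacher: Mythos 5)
Your proposal is correct and follows essentially the same route as the paper's proof: both rest on the skip-by-two recursion $\yt_{i+1}=\O(\K,\expmap_{\yt_{i-1}}(-\eta\nabla_i))$ (i.e., the two interleaved chains), telescope the squared distances to get $\frac{4R^2}{\eta}$, bound the per-block gradient term by $\frac{\eta\zeta}{2}\cdot\frac{T}{B}\cdot(BG)^2$, and handle the two partial boundary blocks by Cauchy--Schwarz to obtain the $4RBG$ (the paper's $4RKG$) term. The only difference is packaging: you invoke Lemma \ref{lem:reg_iogd} on meta-losses $F_i$ per chain, whereas the paper re-derives the same descent-plus-telescoping inequality inline from Lemma \ref{lem:cos1}, which is the identical computation.
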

\begin{proof}
    For convenience, we assume $B$ divides $T$. Let $\T_i=\{(i-1)B+1,\dots,iB\}$ be the set of rounds in the $i$-th block. By Lemma \ref{lem:iploo}, $\yt_{i+1}$ is the infeasible projection of $\y_{iB+1}$ onto $\K$:
    \[
    d(\yt_{i+1},\x)^2\leq d(\y_{iB+1},\x)^2
    \]
    holds for any $\x\in\K$. By Algorithm \ref{alg:loo-boco}, we have
    \[
    \y_{iB+1}=\expmap_{\yt_{i-1}}\lt(-\eta\sum_{s=1}^B\nabla f_{(i-1)B+s}\lt(\yt_{i-1}\rt)\rt)=\expmap_{\yt_{i-1}}\lt(-\eta\sum_{t\in\T_i}\nabla f_t(\yt_{i(t)-1})\rt).
    \]
    Thus
    \begin{equation*}
        \begin{split}
            &d(\yt_{i+1},\x)^2\leq d(\y_{iB+1},\x)^2\\
            \leq& d(\yt_{i-1},\x)^2+\zeta\eta^2B^2G^2-2\eta\sum_{t\in\T_i}\la\nabla f_t(\yt_{i-1}),-\invexp_{\yt_{i-1}}\x\ra
        \end{split}
    \end{equation*}
    where the last inequality is due to Lemma \ref{lem:cos1} and $\|\nabla f_t(\x)\|\leq G$. This is equivalent to
    \begin{equation}\label{eq:ipgsc1}
        \begin{split}
            \sum_{t\in\T_i}\la\nabla f_t(\yt_{i-1}),-\invexp_{\yt_{i-1}}\x\ra\leq\frac{d(\yt_{i-1},\x)^2-d(\yt_{i+1},\x)^2}{2\eta}+\frac{\zeta\eta B^2G^2}{2}.
        \end{split}
    \end{equation}

    For a certain interval $[s,e]\subseteq[T]$, let $i_s$ and $i_e$ be the smallest and the largest block indices such that $\T_{i_s}$ and $\T_{i_e}$ are fully contained in $[s,e]$. Then we have
    \begin{equation}\label{eq:ipgsc2}
        \begin{split}
            &\tse\la\nabla f_t(\yt_{i(t)-1}),-\invexp_{\yt_{i(t)-1}}\x\ra\leq \sum_{t=s}^{(i_s-1)B}\la\nabla f_t(\yt_{i_s-2}),-\invexp_{\yt_{i_s-2}}\x\ra\\
            +&\sum_{i=i_s}^{i_e}\la\nabla f_t(\yt_{i-1}),-\invexp_{\yt_{i-1}}\x\ra+\sum_{t=i_e B+1}^e\la\nabla f_t(\yt_{i_e}),-\invexp_{\yt_{i_e}}\ra.
        \end{split}
    \end{equation}
    There are some undesirable terms in blocks $i_s-1$ and $i_e+1$, and we can bound them by noticing
    \begin{equation}\label{eq:ipgsc3}
        \la\nabla f_t(\yt_{i(t)-1}),-\invexp_{\yt_{i(t)-1}}\x\ra\leq\|\nabla f_t(\yt_{i(t)-1})\|\cdot\|\invexp_{\yt_{i(t)-1}}\x\|\leq G\cdot 2R=2GR.
    \end{equation}

    Combining Equations \eqref{eq:ipgsc1}, \eqref{eq:ipgsc2} and \eqref{eq:ipgsc3}, we have

    \begin{equation}
        \begin{split}
            &\tse\lt(f_t(\yt_{i(t)-1})-f_t(\x)\rt)\leq\tse\la\nabla f_t(\yt_{i(t)-1}),-\invexp_{\yt_{i(t)-1}}\x\ra\\
            \leq &\sum_{i=i_s}^{i_e}\lt(\frac{d(\yt_{i-1},\x)^2-d(\yt_{i+1},\x)^2}{2\eta}+\frac{\zeta\eta B^2G^2}{2}\rt)+2GR\cdot 2B\\
            \leq&\frac{4R^2}{\eta}+\frac{\zeta\eta BG^2T}{2}+4B G R.
        \end{split}
    \end{equation}
    where the last inequality is due to the number of blocks in $[s,e]$ is upper bounded by $\frac{T}{B}$. 
\end{proof}
Now we give the proof of Theorem \ref{thm:ocoloo1}.
\begin{proof}[Proof of Theorem \ref{thm:ocoloo1}]
    We still use $i(t)$ to denote the block index of the $t$-th round. For an interval $[s,e]\subseteq[T]$, we can do the following decomposition
    \begin{equation}\label{eq:ocoloo1}
        \begin{split}
            &\tse\lt(f_t(\x_{i(t)-1})-f_t(\x)\rt)\\
            =&\tse\lt(f_t(\x_{i(t)-1})-f_t(\yt_{i(t)-1})\rt)+\tse\lt(f_t(\yt_{i(t)-1})-f_t(\x)\rt)\\
            \leq&\tse\la\nabla f_t(\x_{i(t)-1}),-\invexp_{\x_{i(t)-1}}\yt_{i(t)-1}\ra+\tse\lt(f_t(\yt_{i(t)-1})-f_t(\x)\rt)\\
        \end{split}
    \end{equation}
    Algorithm \ref{alg:loo-boco} calls Algorithm \ref{alg:iploo} in each block to compute an infeasible projection. By Lemma \ref{lem:iploo} and  Algorithm \ref{alg:loo-boco}, $(\x_i,\yt_i)\in\K\times\BpR$, which satisfies $d(\x_i,\yt_i)^2\leq 3\eps_i=3\eps$. Thus
    \begin{equation}\label{eq:ocoloo2}
        \la\nabla f_t(\x_{i(t)-1}),-\invexp_{\x_{i(t)-1}}\yt_{i(t)-1}\ra\leq G d(\x_{i(t)-1},\yt_{i(t)})\leq \sqrt{3\eps}G.
    \end{equation}
    Combining Equations \eqref{eq:ocoloo1}, \eqref{eq:ocoloo2} and Lemma \ref{lem:ipgsc}, we have
    \[
    \tse\lt(f_t(\x_{i(t)-1})-f_t(\x)\rt)\leq 4RGB+\frac{4R^2}{\eta}+\frac{\zeta\eta BG^2T}{2}+GT\sqrt{3\eps}
    \]
    holds for any $\x\in\K$. By plugging in $\eta$, $\eps$, and $B$, we get the claimed regret bound.

    We also need to bound the number of calls to the linear optimization oracle. Note that $d(\x_1,\yt_1)=0$ due to the initialization, and for any $i\geq 2$, we have $d(\x_i,\yt_i)^2\leq 3\eps$ due to Lemma \ref{lem:iploo}. By Algorithm \ref{alg:loo-boco}, we  have
    \[
    \y_{iB+1}=\expmap_{\yt_{i-1}}\lt(-\eta\sum_{t=(i-1)B+1}^{iB}\nabla f_t(\yt_{i-1})\rt)\qquad\forall i=1,\dots,\frac{T}{B}.
    \]
    Therefore
    \begin{equation}
        d(\x_{i-1},\y_{iB+1})\leq d(\x_{i-1},\yt_{i-1})+d(\yt_{i-1},\y_{iB+1})\leq\sqrt{3\eps}+\eta BG.
    \end{equation}
    Squaring on both sides, we have
    \begin{equation}
        d(\x_{i-1},\y_{iB+1})^2\leq \lt(\sqrt{3\eps}+\eta BG\rt)^2\leq 6\eps+2B^2G^2\eta^2,
    \end{equation}
    where $(a+b)^2\leq 2(a^2+b^2)$ is used.

    By Lemma \ref{lem:iploo}, on the $i$-th block, Algorithm \ref{alg:iploo}  stops after at most
    \[
    \max\lt\{\frac{\zeta d(\x_{i-1},\y_{iB+1})^2\lt(d(\x_{i-1},\y_{iB+1})^2-\eps\rt)}{4\eps^2}+1,1\rt\}
    \]
    iterations. According to Lemma \ref{lem:dist}, each iteration calls the linear optimization oracle for at most $\zeta\lc\frac{27R^2}{\eps^2}-2\rc$ times. Thus in the $i$-th block, the number of calls to the linear optimization oracle is bounded by
    \[
    N_{calls}\coloneqq\max\lt\{\frac{\zeta d(\x_{i-1},\y_{iB+1})^2\lt(d(\x_{i-1},\y_{iB+1})^2-\eps\rt)}{4\eps^2}+1,1\rt\}\cdot\frac{27\zeta R^2}{\eps}.
    \]
    There are $\frac{T}{B}$ blocks in total, so the total number of calls to the linear optimization oracle is at most
    \[
     N_{calls}\leq\frac{T}{B}\lt(1+\frac{15}{2}\zeta+\lt(\frac{11B^2G^2\eta^2}{2\eps}+\frac{B^4G^4\eta^4}{\eps^2}\rt)\zeta\rt)\cdot \frac{27\zeta R^2}{\eps}.
    \]
    Plugging in the values of $B,\eta$ and $\eps$, we find $ N_{calls}\leq T$ as claimed.
\end{proof}

\subsection{Proof of Theorem \ref{thm:ocoloo2}}
\label{app:ocoloo2}
\begin{proof}
Based on a similar argument as in Equation \eqref{eq:ipgsc1}, we have
\begin{equation}\label{eq:ipsgsc1}
        \begin{split}
            &\sum_{i=1}^{\frac{T}{B}}\sum_{t\in \T_i}\lt(f_t(\yt_{i-1})-f_t(\x)\rt)\\
            \leq&\sum_{i=1}^{\frac{T}{B}}\lt(\sum_{t\in\T_i}\la\nabla f_t(\yt_{i-1}),-\invexp_{\yt_{i-1}}\x\ra-\frac{\af B}{2}d(\yt_{i-1},\x)^2\rt)\\
            \leq&\sum_{i=1}^{\frac{T}{B}}\lt(\frac{d(\yt_{i-1},\x)^2-d(\yt_{i+1},\x)^2}{2\eta_i}+\frac{\zeta\eta B^2G^2}{2}-\frac{\af B}{2}d(\yt_{i-1},\x)^2\rt)\\
            \leq &\sum_{i=1}^{\frac{T}{B}}\frac{\zeta BG^2}{\af i},
        \end{split}
    \end{equation}
    where we apply the strong gsc-convexity on the $i$-th block under the help of Lemma \ref{lem:stronggsc}; the last inequality follows from the definition of $\eta_i$. If we consider the sum of regret over each block as in Equation \eqref{eq:ipsgsc1},
    \begin{equation}\label{eq:ipsgsc2}
        \sumt \lt(f_t(\yt_{i(t)-1})-f_t(\x)\rt)\leq \frac{\zeta BG^2}{\af}\sum_{i=1}^{\frac{T}{B}}\frac{1}{i}.
    \end{equation}
    On the other hand, since $f_t(\cdot)$ is $G$-Lipschitz, by Lemma \ref{lem:iploo},
    \begin{equation}\label{eq:ipsgsc3}
        \begin{split}
            &\sumt\lt(f_t(\x_{i(t)-1})-f_t(\yt_{i(t)-1})\rt)\\
            \leq&\sum_{i=1}^{\frac{T}{B}}\sum_{t\in\T_i}\la\nabla f_t(\x_{i(t)-1}),-\invexp_{\x_{i(t)-1}}\yt_{i(t)-1}\ra\leq\sqrt{3}BG\sum_{i=2}^{\frac{T}{B}}\sqrt{\eps_{i-1}},
        \end{split}
    \end{equation}
    where we use the fact that for the first block, $i(t)=1$ and thus $\x_{i(t)-1}=\yt_{i(t)-1}$. Now as we combine Equations \eqref{eq:ipsgsc2} and \eqref{eq:ipsgsc3}, we have
    \[
    \sumt\lt(f_t(\x_{i(t)-1})-f_t(\x)\rt)\leq\sqrt{3}BG\sum_{i=1}^{\frac{T}{B}}\sqrt{\eps_m}+\frac{\zeta BG^2}{\alpha}\sum_{i=1}^{\frac{T}{B}}\frac{1}{i}.
    \]
    By plugging in $\{\eps_i\}_{i=1}^{\frac{T}{B}}$, we get the stated regret guarantee.

    Now it remains to bound the number of calls to the \loo. Note that for each block $i\in\lt[\frac{T}{B}\rt]$, by  Lemma \ref{lem:iploo},  $d(\x_i,\yt_i)^2\leq 3\eps_i$. In view of Algorithm \ref{alg:loo-boco}, we also have
    \[
    \y_{(i+1)K+1}=\expmap_{\yt_i}\lt(-\eta_{i+1}\sum_{t=iB+1}^{(i+1)B}\nabla f_t(\yt_i)\rt).
    \]
    By the triangle inequality,
    \begin{equation}
        d(\x_{i-1},\y_{iB+1})\leq d(\x_{i-1},\yt_{i-1})+d(\yt_{i-1},\y_{iB+1})\leq\sqrt{3\eps_{i-1}}+\eta_iBG.
    \end{equation}
    Squaring both sides and making use of $(a+b)^2\leq 2(a^2+b^2)$, we have
    \begin{equation}
        d(\x_{i-1},\y_{iB+1})^2\leq 6\eps_{i-1}+2B^2G^2\eta_i^2.
    \end{equation}

    By Lemma \ref{lem:iploo}, the number of iterations in Algorithm \ref{alg:iploo} for the $i+1$-th block is at most
    \begin{equation}
        \begin{split}
            &\max\lt\{\frac{\zeta d(\x_{i-1},\y_{iB+1})^2(d(\x_{i-1},\y_{iB+1})^2-\eps_{i+1})}{4\eps_{i+1}^2}+1,1\rt\}\\
            \leq&\max\lt\{\zeta\cdot\frac{30\eps_{i-1}^2+22B^2G^2\eta_i^2\eps_{i-1}+4B^4G^4\eta_i^4}{4\eps_{i+1}^2}+1,1\rt\}\\
            \leq&\max\lt\{\zeta\frac{61\eps_{i-1}^2+8B^4G^4\eta_i^4}{4\eps_{i+1}^2}+1,1\rt\}\\
            \leq&\max\lt\{\frac{16\eps_{i-1}^2}{\eps_{i+1}^2}\zeta+\frac{2B^4G^4\eta_i^4}{\eps_{i+1}^4}\zeta+1,1\rt\}
        \end{split}
    \end{equation}
Thus, the number of calls to the \loo is bounded by
\begin{equation}\label{eq:ipsgsc4}
    \begin{split}
         N_{calls}\coloneqq&\sum_{i=2}^{\frac{T}{B}}\lt(\frac{16\eps_{i-1}^2}{\eps_{i+1}^2}\zeta+\frac{2B^4G^4\eta_i^4}{\eps_{i+1}^4}\zeta+1\rt)\cdot\frac{27R^2\zeta}{\eps_i}\\
        \leq&\sum_{i=2}^{\frac{T}{B}}\lt(\frac{16(i+3)^4}{(i+1)^4}+\frac{2(i+3)^4}{400^2(i-1)^4\zeta^2}+\frac{1}{\zeta}\rt)\cdot\lt(\frac{R\alpha}{20G}\rt)^2(i+3)^2\zeta\\
        \leq &0.32\lt(\frac{R\af}{G}\rt)^2\sum_{i=2}^{\frac{T}{B}}\lt(i+3\rt)^2\zeta\\
    \end{split}
\end{equation}
where the last line follows from $\zeta\geq 1$, $\frac{i+3}{i+1}\leq\frac{5}{3}$ and $\frac{i+3}{i-1}\leq 5$ holds for any $i\geq 2$.
On the other hand, we have
\begin{equation}\label{eq:ipsgsc5}
    \sum_{i=2}^{\frac{T}{B}}(i+3)^2\leq\sum_{i=5}^{\frac{T}{B}+3}i^2\leq\sum_{i=5}^{\frac{2T}{B}}i^2\leq\frac{8}{3}\lt(\frac{T}{B}\rt)^2
\end{equation}
Combining Equations \eqref{eq:ipsgsc4}, \eqref{eq:ipsgsc5} and plugging in $B=\lt(\frac{\alpha R}{G}\rt)^{\frac{2}{3}}T^{\sfrac{2}{3}}$, we get the claimed $ N_{calls}\leq \zeta T$.
\end{proof}

\section{Some Basic Facts about Convex Sets on Hadamard Manifolds}
In this section, we highlight two significant characteristics of convex sets in Euclidean space which do not apply to Hadamard manifolds, emphasizing the fundamental differences between these two spaces. Our counterexamples are constructed within the Poincaré half-plane, $\mathbb{H}^2$. Therefore, understanding some basic properties of this unique manifold will prove beneficial.
\begin{lemma}\citep{udriste2013convex,wang2016some,kristaly2016convexities}\label{lem:poin}
The Poincar\'{e} half plane $\mathbb{H}^2$ can be described by
\[
\M\coloneqq\{(t_1,t_2)\in\R^2|t_2>0\}
\]
with metric
\[
g_{ij}(u,v)=\frac{\delta_{ij}}{v^2}\qquad i=1,2\quad j=1,2.
\]
We have the following conclusions about geodesics and the distance on the Poincar\'{e} half-plane.
    \begin{enumerate}[label=\arabic*),leftmargin=*]
\item For the geodesic connecting $x=(t_1,t_2)$ and $y=(s_1,s_2)$,  let
\[
b_{xy}\coloneqq\frac{s_1^2+s_2^2-t_1^2-t_2^2}{2(s_1-t_1)}\qquad r_{xy}=\sqrt{(s_1-b_{xy})^2+s_2^2},
\]
then the geodesic $\gamma_{xy}\coloneqq(\gamma_{xy}^1,\gamma_{xy}^2)$  connecting $x$ and $y$ is,
\begin{equation}
    \gamma_{x y}^1(s):=\left\{\begin{array}{ll}
t_1, & \text { if } t_1=s_1 \\
b_{x y}-r_{x y} \tanh \left((1-s) \cdot \operatorname{artanh} \frac{b_{x y}-t_1}{r_{x y}}+s \cdot \operatorname{artanh} \frac{b_{x y}-s_1}{r_{x y}}\right), & \text { if } t_1 \neq s_1
\end{array}\right.
\end{equation}
\begin{equation}
    \gamma_{x y}^2(s):=\left\{\begin{array}{ll}
e^{(1-s) \cdot \ln t_2+s \cdot \ln s_2}, & \text { if } t_1=s_1 \\
\frac{r_{x y}}{\cosh \left((1-s) \cdot \operatorname{artanh} \frac{b_{x y}-t_1}{r_{x y}}+s \cdot \operatorname{artanh} \frac{b_{x y}-s_1}{r_{x y}}\right)}, & \text { if } t_1 \neq s_1.
\end{array}\right.
\end{equation}
Indeed, we have $(\gm_{xy}^1(s)-b_{xy})^2+(\gm_{xy}^2(s))^2=r_{xy}^2$ for any $s\in[0,1]$ when $t_1\neq s_1$.
\item The distance between two points $(u_1,v_1)$ and $(u_2,v_2)$ can be calculated by
\[
d_{\mathbb{H}}((u_1,v_1),(u_2,v_2))=\arcosh\lt(1+\frac{(u_2-u_1)^2+(v_2-v_1)^2}{2v_1v_2}\rt).
\]
    \end{enumerate}
\end{lemma}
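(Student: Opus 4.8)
The plan is to derive both items directly from the defining metric $ds^2=(dt_1^2+dt_2^2)/t_2^2$, treating the displayed formulas as targets to be verified rather than guessed. First I would compute the Christoffel symbols. Since $g_{11}=g_{22}=t_2^{-2}$ depend only on the second coordinate, a short computation gives the nonzero symbols $\Gamma^1_{12}=\Gamma^1_{21}=-t_2^{-1}$, $\Gamma^2_{11}=t_2^{-1}$, and $\Gamma^2_{22}=-t_2^{-1}$, so the geodesic system reads $\ddot t_1=2\dot t_1\dot t_2/t_2$ and $\ddot t_2=(\dot t_2^2-\dot t_1^2)/t_2$. The first equation integrates immediately: away from vertical curves it says $\frac{d}{ds}\ln|\dot t_1|=2\,\dot t_2/t_2$, hence $\dot t_1=c\,t_2^2$ for a constant $c$. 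Combining this first integral with the (reparametrization-invariant) speed relation $(\dot t_1^2+\dot t_2^2)/t_2^2=\text{const}$ and eliminating the parameter yields a first-order ODE for $t_2$ as a function of $t_1$ whose solution curves satisfy $(t_1-b)^2+t_2^2=r^2$, i.e. semicircles centered on the boundary axis $\{t_2=0\}$, while the degenerate case $c=0$ forces the vertical lines $t_1\equiv\text{const}$. This recovers the dichotomy of item~1.

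To pin down the stated closed form I would impose the endpoint data $\gamma(0)=x$, $\gamma(1)=y$. The center $b_{xy}$ is forced by $(t_1-b)^2+t_2^2=(s_1-b)^2+s_2^2$, which solves to the displayed $b_{xy}$, and $r_{xy}$ is then the common radius. Substituting the hyperbolic-trig parametrization and checking that $(\gamma^1-b_{xy})^2+(\gamma^2)^2=r_{xy}^2$ holds identically (using $\cosh^2-\sinh^2=1$, as the final sentence of the lemma anticipates) confirms the curve lies on the semicircle and realizes both endpoints at $s=0,1$; the $\arctanh$ arguments are exactly those making the endpoints correct.

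For item~2 I would avoid integrating along the circular arc and instead combine the vertical case with isometry invariance. Along a vertical geodesic between $(u,v_1)$ and $(u,v_2)$ the length collapses to $\int \frac{dt_2}{t_2}=|\ln(v_2/v_1)|$, and the one-line identity $\cosh(\ln(v_2/v_1))=\tfrac{v_1^2+v_2^2}{2v_1v_2}$ shows $|\ln(v_2/v_1)|=\arcosh\!\big(1+\tfrac{(v_2-v_1)^2}{2v_1v_2}\big)$, which is precisely the claimed formula specialized to $u_1=u_2$. To extend to arbitrary pairs I would invoke the fact that the maps $z\mapsto\frac{az+b}{cz+d}$ with $ad-bc=1$, acting on the upper half-plane via $z=t_1+it_2$, are isometries of this metric and act transitively on geodesic segments, so any segment can be moved to a vertical one; verifying that the right-hand quantity $\tfrac{(u_2-u_1)^2+(v_2-v_1)^2}{v_1v_2}$ is unchanged under these Möbius maps then transports the vertical computation to the general case. (Direct arc-length integration along the semicircle is an equally valid, if more tedious, alternative.)

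I expect the main obstacle to be the bookkeeping in item~1: showing that the \emph{specific} hyperbolic $\tanh$/$\operatorname{sech}$ parametrization, not merely some reparametrization of the semicircle, solves the geodesic ODE with the correct endpoints. The cleanest route is to verify the scalar semicircle relation $(\gamma^1-b_{xy})^2+(\gamma^2)^2=r_{xy}^2$ algebraically, reducing the two-dimensional check to a single arc parametrization, and then confirm it is a geodesic via the first integral $\dot t_1=c\,t_2^2$ rather than plugging into the second-order equations. The distance step is conceptually clean but also demands care, since invariance must be checked under a generating set of $\mathrm{PSL}(2,\mathbb{R})$ and not merely under vertical translations and dilations.
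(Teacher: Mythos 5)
Your proposal is essentially correct, but note that the paper itself contains no proof of this lemma: it is quoted verbatim from the cited references (\citet{udriste2013convex,wang2016some,kristaly2016convexities}) as background material, so the comparison here is between your self-contained derivation and a citation. Your derivation is the standard one and all the key computations check out: the Christoffel symbols and geodesic system are correct; the first integral $\dot t_1 = c\,t_2^2$ does follow; the center $b_{xy}$ is forced by equidistance of the endpoints from a point on the axis $\{t_2=0\}$; the semicircle relation for the displayed parametrization reduces to $\tanh^2\theta+\operatorname{sech}^2\theta=1$; and in item 2 the identity $\cosh(\ln(v_2/v_1))=\frac{v_1^2+v_2^2}{2v_1v_2}$ together with M\"obius invariance of $\frac{|z_1-z_2|^2}{\operatorname{Im}(z_1)\operatorname{Im}(z_2)}$ (via $|Tz_1-Tz_2|=\frac{|z_1-z_2|}{|cz_1+d|\,|cz_2+d|}$ and $\operatorname{Im}(Tz)=\frac{\operatorname{Im}(z)}{|cz+d|^2}$) correctly transports the vertical-line computation to arbitrary pairs.

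One point you should make explicit: in your verification step you propose to "confirm it is a geodesic via the first integral rather than plugging into the second-order equations," but the first integral alone is equivalent only to the \emph{first} geodesic equation; it does not by itself imply the second. What closes the gap is that the displayed parametrization also has constant speed (a one-line check: $(\dot\gamma^1)^2+(\dot\gamma^2)^2 = r_{xy}^2(B-A)^2\operatorname{sech}^2\theta$ while $(\gamma^2)^2=r_{xy}^2\operatorname{sech}^2\theta$, so the hyperbolic speed is the constant $|B-A|$, where $A,B$ are the two $\operatorname{artanh}$ values), and constant speed combined with the first integral recovers the second equation by differentiating the speed relation. With that sentence added, your argument is complete; it has the additional merit over the paper's citation-only treatment of explaining why the specific $\tanh/\operatorname{sech}$ parametrization, with affine argument $\theta(s)=(1-s)A+sB$, is the constant-speed (hence geodesic) parametrization on $[0,1]$ rather than an arbitrary reparametrization of the semicircle.
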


\subsection{Shrinking Gsc-convex Sets on Hadamard Manifolds Does Not Preserve Convexity}
\label{app:shrink}
In the context of projection-free online learning on manifolds, the set $c\K={\expmap_{\p}(c\invexp_{\p}\x)|\x\in\K}$ takes on a pivotal role, particularly when $c<1$. Consider a convex set $\K$ in Euclidean space, then the convexity of $c \K$ is guaranteed for any positive $c$. Nonetheless, this property does not necessarily hold on Hadamard manifolds. We can construct a counterexample within the Poincaré half-plane $\mathbb{H}^2$. To achieve this, we rely on the following lemma.


\begin{lemma}\label{lem:mid}
    On the Poincaré half-plane $\mathbb{H}^2$, the mid-point of the geodesic $\gamma_{xy}$
    which connects $x=(-\lambda,\mu)$ and $y=(\lambda,\mu)$  is $(0,\sqrt{\lambda^2+\mu^2})$.
\end{lemma}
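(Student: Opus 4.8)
The plan is to substitute the two endpoints directly into the explicit geodesic formula recorded in Lemma \ref{lem:poin} and evaluate at the midpoint parameter $s=\tfrac12$. Write $x=(t_1,t_2)=(-\lambda,\mu)$ and $y=(s_1,s_2)=(\lambda,\mu)$, assuming $\lambda\neq 0$ (the case $\lambda=0$ is trivial, as then $x=y=(0,\mu)$). First I would compute the auxiliary quantities: the numerator of $b_{xy}$ is $s_1^2+s_2^2-t_1^2-t_2^2=\lambda^2+\mu^2-\lambda^2-\mu^2=0$, so $b_{xy}=0$, and hence $r_{xy}=\sqrt{(s_1-b_{xy})^2+s_2^2}=\sqrt{\lambda^2+\mu^2}$. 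Geometrically this says the geodesic is the Euclidean semicircle of radius $\sqrt{\lambda^2+\mu^2}$ centred at the origin, consistent with the identity $(\gamma_{xy}^1)^2+(\gamma_{xy}^2)^2=r_{xy}^2$ noted in the lemma.

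Next I would evaluate the formula at $s=\tfrac12$. Setting $A:=\operatorname{artanh}\frac{b_{xy}-t_1}{r_{xy}}=\operatorname{artanh}\frac{\lambda}{\sqrt{\lambda^2+\mu^2}}$, the antisymmetry of the endpoints gives $\operatorname{artanh}\frac{b_{xy}-s_1}{r_{xy}}=-A$, so the common argument inside $\tanh$ and $\cosh$ is $(1-s)A+s(-A)$, which vanishes at $s=\tfrac12$. Therefore $\gamma_{xy}^1(\tfrac12)=b_{xy}-r_{xy}\tanh 0=0$ and $\gamma_{xy}^2(\tfrac12)=r_{xy}/\cosh 0=r_{xy}=\sqrt{\lambda^2+\mu^2}$, which is exactly the claimed point.

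The only delicate point is justifying that the parameter value $s=\tfrac12$ really yields the \emph{metric} midpoint rather than an arbitrary point on the curve, and I would settle this by a symmetry argument that also serves as an independent check. The reflection $\sigma(u,v)=(-u,v)$ is an isometry of $\mathbb{H}^2$, since it fixes the second coordinate $v$ and preserves $(du^2+dv^2)/v^2$; moreover $\sigma$ swaps $x$ and $y$. If $m$ denotes the midpoint of $[x,y]$, then $\sigma(m)$ is again equidistant from $x$ and $y$ at half the total length, so by uniqueness of the midpoint on a Hadamard manifold $\sigma(m)=m$, forcing $m$ onto the axis $\{u=0\}$. Intersecting this axis with the semicircle of radius $r_{xy}$ centred at the origin leaves only $(0,\sqrt{\lambda^2+\mu^2})$, confirming the result without any heavy estimate.
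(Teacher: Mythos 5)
Your proposal is correct and follows essentially the same route as the paper's proof: compute $b_{xy}=0$ and $r_{xy}=\sqrt{\lambda^2+\mu^2}$, evaluate the explicit parametrization of Lemma \ref{lem:poin} at $s=\tfrac12$, and use the oddness of $\operatorname{artanh}$ together with $\cosh(0)=1$. Your additional reflection-isometry argument (pinning the metric midpoint to the axis $\{u=0\}$ via uniqueness of midpoints on a Hadamard manifold) is a welcome extra justification that the parameter value $s=\tfrac12$ is indeed the metric midpoint—a point the paper handles implicitly by invoking symmetry—but it does not constitute a different method.
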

\begin{proof}
    By symmetry we know $\gamma_{xy}^1(\frac{1}{2})=0$. To compute $\gamma_{xy}^2(\frac{1}{2})$, note that $b_{xy}=0$ and $r_{xy}=\sqrt{\lambda^2+\mu^2}$, then by Lemma \ref{lem:poin},
    \[
    \gamma_{xy}^2\lt(\frac{1}{2}\rt)=\frac{\sqrt{\lambda^2+\mu^2}}{\cosh\lt({\frac{1}{2}\arctanh\frac{\lambda}{\sqrt{\lambda^2+\mu^2}}+\frac{1}{2}\arctanh\frac{-\lambda}{\sqrt{\lambda^2+\mu^2}}}\rt)}=\sqrt{\lambda^2+\mu^2},
    \]
    where we use the fact that $\arctanh(\cdot)$ is an odd function and $\cosh(0)=1$.
\end{proof}

Now we prove the following theorem.
\begin{thm}\label{thm:shrink}
    There exists a counter-example on the Poincar\'{e} half plane $\mathbb{H}^2$ such that the radially contracted set $(1-\alpha)\K$ where $\af\in(0,1)$ is non-convex.
\end{thm}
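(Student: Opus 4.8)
The plan is to exhibit an explicit gsc-convex set $\K$ and center $\p\in\K$ in the Poincar\'e half-plane $\mathbb{H}^2$ for which $(1-\alpha)\K$ (Definition \ref{def:squeeze}) fails to be geodesically convex, by producing two points of $(1-\alpha)\K$ whose connecting geodesic leaves the set. Fix $\lambda,\mu>0$ and a small $\mu_0\in(0,\mu)$, and set $A=(-\lambda,\mu)$, $B=(\lambda,\mu)$, $\p=(0,\mu_0)$. Let $\K$ be the geodesic convex hull of $\{A,B,\p\}$, which is gsc-convex by definition and contains $\p$. By Lemma \ref{lem:poin} the geodesic $g$ through $A,B$ is the Euclidean semicircle $u^2+v^2=\lambda^2+\mu^2$, so by Lemma \ref{lem:mid} its midpoint is $M=(0,\sqrt{\lambda^2+\mu^2})$; note $g$ meets the vertical axis only at $M$, and $\K$ is the region bounded above by $g$ and on the sides by the geodesics $\p A$ and $\p B$.

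Writing $F_c(\x)=\expmap_{\p}(c\,\invexp_{\p}\x)$ with $c=1-\alpha\in(0,1)$, the first key step is to pin down where $(1-\alpha)\K=F_c(\K)$ meets the vertical $t_2$-axis. Since $\p,M$ lie on the axis, the axis is the geodesic joining them, and $\K\cap\{\text{axis}\}$ is exactly the segment $[\p,M]$ of heights $\mu_0$ to $\sqrt{\lambda^2+\mu^2}$. A point $\z$ on the axis lies in $F_c(\K)$ iff $F_{1/c}(\z)\in\K$, and $F_{1/c}$ maps the axis to itself; hence $F_c(\K)\cap\{\text{axis}\}$ is exactly the heights $[\mu_0,h_M]$, where $h_M$ is the height of $F_c(M)$. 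Using the axial distance formula $d(\p,M)=\ln(\sqrt{\lambda^2+\mu^2}/\mu_0)$ gives the closed form $h_M=\mu_0^{1-c}(\lambda^2+\mu^2)^{c/2}$.

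Next I would exploit the reflection symmetry $u\mapsto-u$, which fixes $\p$ and $g$ and swaps $A\leftrightarrow B$, hence swaps $A'\coloneqq F_c(A)\leftrightarrow B'\coloneqq F_c(B)$. Thus $A'=(-a,b)$ and $B'=(a,b)$ share a common height $b$, so Lemma \ref{lem:mid} gives that the midpoint $N$ of the geodesic $[A',B']$ is $N=(0,\sqrt{a^2+b^2})$, again on the axis. Since $A',B'\in(1-\alpha)\K$, geodesic convexity of $(1-\alpha)\K$ would force $N\in(1-\alpha)\K$, which by the previous paragraph is equivalent to $\sqrt{a^2+b^2}\le h_M$. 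It therefore suffices to choose parameters violating this, i.e. $\sqrt{a^2+b^2}>h_M$, which I would verify for a concrete instance such as $\lambda=\mu=1$, $\mu_0=\tfrac{1}{10}$, $\alpha=\tfrac12$: here $h_M=\mu_0^{1/2}(\lambda^2+\mu^2)^{1/4}=10^{-1/2}2^{1/4}\approx0.376$, whereas computing $B'=\gamma_{\p B}(\tfrac12)$ from the explicit geodesic of Lemma \ref{lem:poin} gives $B'\approx(0.092,0.429)$ and hence $\sqrt{a^2+b^2}\approx0.439>0.376$. Then $N\notin(1-\alpha)\K$ while $A',B'\in(1-\alpha)\K$, so $(1-\alpha)\K$ is non-convex.

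The main obstacle is this last step: producing $B'=F_c(B)$ in closed form and checking the strict inequality $\sqrt{a^2+b^2}>h_M$. This is a direct but slightly tedious evaluation of the $\gamma_{xy}$ parametrization of Lemma \ref{lem:poin} (computing $b_{\p B}$, $r_{\p B}$, the two $\arctanh$ endpoints, and the $\tanh/\cosh$ values at $s=c$). The conceptual content---that radial contraction toward an off-geodesic point $\p$ bends the image of the boundary geodesic $g$ toward $\p$, so that the chord $[A',B']$ pokes above the contracted boundary---is entirely captured by the comparison $\sqrt{a^2+b^2}>h_M$ on the axis. All remaining steps are soft, relying only on the reflection symmetry, Definition \ref{def:squeeze}, and Lemmas \ref{lem:poin} and \ref{lem:mid}.
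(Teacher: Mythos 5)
Your proposal is correct and follows essentially the same route as the paper's own proof: a reflection-symmetric chord whose endpoints are contracted toward a point $\p$ on the axis of symmetry, Lemma \ref{lem:mid} to place the midpoint of the contracted chord on the axis, and a numerical comparison showing that midpoint sits strictly above the contracted image of the top of $\K$ on the axis (the paper uses $\p=(0,1)$, $u,v=(\mp 1,\sqrt{2})$, $\alpha=0.1$; you use $\p=(0,\tfrac{1}{10})$, $A,B=(\mp 1,1)$, $\alpha=\tfrac{1}{2}$). If anything, your write-up is slightly more careful than the paper's in explicitly specifying $\K$ (the geodesic triangle) and in justifying that $(1-\alpha)\K$ meets the axis exactly in the contracted segment, a point the paper leaves implicit.
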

\begin{proof}
Let $p=(0,1)$, $u=(-1,\sqrt{2})$, $v=(1,\sqrt{2})$, $w=(0,\sqrt{3})$. Observe that $\gamma_{uv}$ is an arc satisfying $\{(\lambda,\mu)|\lambda^2+\mu^2=3\}$ and, by Lemma \ref{lem:mid}, $w$ is its midpoint. Following this, we calculate $\gamma_{pu}$, $\gamma_{pw}$, and $\gamma_{pv}$, and shrink these three geodesics by a factor of $(1-\alpha)$. Let's denote the resulting geodesics as $\gamma_{pu'}$, $\gamma_{pw'}$, and $\gamma_{pv'}$. Ultimately, we demonstrate that for some $\af$, $w'$ is positioned beneath $\gamma_{u'v'}$, implying that the radial contraction of $\K$ does not encompass a geodesic, hence proving its non-convexity. We first compute $\gamma_{pu}$ and $\gm_{pv}$ to facilitate this. Note that $b_{pu}=-1$ and $r_{pu}=\sqrt{2}$, by Lemma \ref{lem:poin}, we can write down


    \[
    \gamma_{pu}^1{(s)}=-1-\sqrt{2}\tanh\lt((1-s)\arctanh\lt(-\frac{1}{\sqrt{2}}\rt)\rt)
    \]
    and
    \[
    \gamma_{pu}^2{(s)}=\frac{\sqrt{2}}{\cosh\lt((1-s)\arctanh\lt(-\frac{1}{\sqrt{2}}\rt)\rt)}.
    \]
    By symmetry, we have 
    \[
    \gamma_{pv}^1{(s)}=1+\sqrt{2}\tanh\lt((1-s)\arctanh\lt(-\frac{1}{\sqrt{2}}\rt)\rt)
    \]
    and
    \[
    \gamma_{pv}^2{(s)}=\frac{\sqrt{2}}{\cosh\lt((1-s)\arctanh\lt(-\frac{1}{\sqrt{2}}\rt)\rt)}.
    \]
    We can also compute $\gamma_{p w}(s)=\left(0, e^{s \ln \sqrt{3}}\right)$. Now we have
    \[
    \begin{array}{c}
u^{\prime}=\gamma_{p u}(1-\alpha)=\left(-1-\sqrt{2} \tanh \left(\alpha \operatorname{arctanh}\left(-\frac{1}{\sqrt{2}}\right)\right), \frac{\sqrt{2}}{\cosh \left(\alpha \operatorname{arctanh}\left(-\frac{1}{\sqrt{2}}\right)\right)}\right) \\
v^{\prime}=\gamma_{p v}(1-\alpha)=\left(1+\sqrt{2} \tanh \left(\alpha \operatorname{arctanh}\left(-\frac{1}{\sqrt{2}}\right)\right), \frac{\sqrt{2}}{\cosh \left(\alpha \operatorname{arctanh}\left(-\frac{1}{\sqrt{2}}\right)\right)}\right)
\end{array}
    \]
and $w^{\prime}=\left(0, e^{(1-\alpha) \ln \sqrt{3}}\right)$.
We just need to demonstrate that for some $\alpha \in(0,1)$, $w^{\prime}$ is positioned below $\gamma_{u^{\prime} v^{\prime}}\left(\frac{1}{2}\right)$, where
\[
\gamma_{u^{\prime} v^{\prime}}\left(\frac{1}{2}\right)=\left(0, \sqrt{3+2 \sqrt{2} \tanh \left(\alpha \operatorname{arctanh}\left(-\frac{1}{\sqrt{2}}\right)\right)}\right) .
\]

by Lemma \ref{lem:mid}. Specifically, we need to show $\exists \alpha \in(0,1)$,
\[
e^{(1-\alpha) \ln \sqrt{3}}<\sqrt{3+2 \sqrt{2} \tanh \left(\alpha \operatorname{arctanh}\left(-\frac{1}{\sqrt{2}}\right)\right)}
\]

We set $\alpha=.1$, then
\[
\sqrt{3+2 \sqrt{2} \tanh \left(\alpha \operatorname{arctanh}\left(-\frac{1}{\sqrt{2}}\right)\right)}-e^{(1-\alpha) \ln \sqrt{3}}=0.036 \cdots>0 .
\]
Therefore, the  set $(1-\alpha) \mathcal{K}$ fails to contain the geodesic $\gamma_{u^{\prime} v^{\prime}}(s)$ and is thus non-convex.

\end{proof}

\subsection{Non-convexity of the Minkowski Functional on Hadamard Manifolds}
\label{app:mink}
In this part, we show the Minkowski functional (Gauge function), which is defined as
\begin{equation}\label{eq:mink}
    \gm_{\K}(\x)=\inf\lt\{\lambda>0:\expmap_{\p}\lt(\frac{\invexp_{\p}\z}{\lambda}\rt)\in\K\rt\}
\end{equation}
can be non-convex on Hadamard manifolds, where $\K$ is a gsc-convex set of a Hadamard manifold $\M$ and $\p$ is an interior point in $\K$. 
\begin{thm}\label{thm:mink}
    The Minkowski functional defined in Equation \eqref{eq:mink} can be non-convex on Hadamard manifolds.
\end{thm}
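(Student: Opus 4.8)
The plan is to avoid any new hands-on computation in the Poincar\'e half-plane and instead reduce the claim to Theorem \ref{thm:shrink}, which already exhibits a gsc-convex set whose radial contraction is non-convex. The bridge is the elementary principle that a geodesically convex function has geodesically convex sublevel sets, combined with the observation that the sublevel sets of the Minkowski functional $\gm_\K$ are exactly the scaled sets $c\K$ of Definition \ref{def:squeeze}.

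Concretely, I would first establish the identity $\{\z : \gm_\K(\z)\le c\}=c\K$ for every $c>0$. Fix $\z$ and write $\v=\invexp_\p\z$, $\hat{\v}=\v/\|\v\|$. Because $\K$ is gsc-convex, closed, contains $\p$, and satisfies $\K\subseteq\BpR$, the intersection of $\K$ with the geodesic ray $s\mapsto\expmap_\p(s\hat{\v})$ is a closed segment $\{\expmap_\p(s\hat{\v}):0\le s\le\rho(\hat{\v})\}$ for some finite $\rho(\hat{\v})\ge 0$ (it is an interval because the geodesic from $\p$ to any point of $\K$ stays in $\K$, and finite because $\K\subseteq\BpR$). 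Substituting $\expmap_\p(\v/\lambda)$ into \eqref{eq:mink} gives $\gm_\K(\z)=d(\p,\z)/\rho(\hat{\v})$, while the same radial description of $c\K$ (its extent in direction $\hat{\v}$ is $c\,\rho(\hat{\v})$, since $\expmap_\p(c\,s\hat{\v})$ sweeps $s'\in[0,c\rho(\hat{\v})]$) yields $\z\in c\K\iff d(\p,\z)\le c\,\rho(\hat{\v})\iff\gm_\K(\z)\le c$. This is the only genuine computation, and it is routine once the radial/interval structure is isolated.

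With the identity in hand the theorem is immediate. By Theorem \ref{thm:shrink} there exist a gsc-convex set $\K$ and $\af\in(0,1)$ for which $(1-\af)\K$ fails to be gsc-convex. Taking $c=1-\af$, the sublevel set $\{\z:\gm_\K(\z)\le 1-\af\}=(1-\af)\K$ is non-convex. Since any gsc-convex function---using the zeroth-order definition, that its restriction to every geodesic is convex---must have gsc-convex sublevel sets, $\gm_\K$ cannot be gsc-convex.

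The main point requiring care is the hypothesis in \eqref{eq:mink} that $\p$ be an \emph{interior} point, whereas the counterexample of Theorem \ref{thm:shrink} uses $\p$ only as the center of contraction. I would arrange interiority by running the same Poincar\'e half-plane construction but enlarging $\K$ with points lying \emph{below} $\p$, so that $\p$ sits in the interior; because contraction toward $\p$ only pulls these new points closer to $\p$, the top of $(1-\af)\K$ near the witness point $w'$---where convexity fails---is unaffected, and $(1-\af)\K$ remains non-convex. A secondary subtlety is that $\gm_\K$ need not be differentiable, so the first-order inequality \eqref{eq:gsc-convex} cannot be invoked directly; this is precisely why the argument is phrased through the convex-sublevel-set principle rather than through gradients.
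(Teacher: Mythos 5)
Your proposal is correct, but it takes a genuinely different route from the paper's. The paper proves Theorem \ref{thm:mink} by a second self-contained computation in the Poincar\'e half-plane: it fixes $\K=\{(u_1,u_2):u_1^2+u_2^2\le 3,\ u_2>0\}$ and $p=(0,1)$, writes down the value of the Minkowski functional $h(s)$ in closed form along the geodesic joining $x=(-1,\sqrt{3})$ and $y=(-2,2)$ (using the explicit geodesics and distance of Lemma \ref{lem:poin}), and checks numerically that $h(1/2)>\frac{1}{2}h(0)+\frac{1}{2}h(1)$, so $\gm_{\K}$ fails to be convex along that geodesic. You instead reduce the statement to Theorem \ref{thm:shrink}, via the identity $\{\z:\gm_{\K}(\z)\le c\}=c\K$ (valid for any closed gsc-convex $\K$ containing $\p$, by exactly the radial-segment argument you sketch; note that finiteness of $\rho(\hat{\v})$ is not actually needed for the identity, which is just as well because the paper's counterexample set is unbounded and not contained in any $\BpR$) together with the fact that a function that is convex along every geodesic has gsc-convex sublevel sets (Lemma \ref{lem:levelset}). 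This buys conceptual clarity: it exhibits the two appendix theorems as a single phenomenon and avoids any new numerics beyond those already performed in Theorem \ref{thm:shrink}, at the cost of inheriting that theorem as a dependency and of some bookkeeping about which set it actually contracts. On that last point, your caution about interiority of $\p$ is understandable, since the proof of Theorem \ref{thm:shrink} never names $\K$ explicitly; but the fix is simpler than your proposed enlargement: take $\K$ to be the geodesic half-space $\{(u_1,u_2):u_1^2+u_2^2\le 3,\ u_2>0\}$, the very set the paper uses in its own proof of Theorem \ref{thm:mink}. This set is closed and gsc-convex (its boundary is a geodesic), has $\p=(0,1)$ in its interior, and meets the vertical geodesic through $\p$ in $\{(0,\mu):0<\mu\le\sqrt{3}\}$, so the witness computation of Theorem \ref{thm:shrink} (the midpoint of $\gm_{u'v'}$ lying strictly above $w'$) shows $(1-\af)\K$ is non-convex verbatim, and your sublevel-set identity then finishes the proof.
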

\begin{proof}
    We  again construct a counterexample on the Poincar\'{e} plane $\mathbb{H}^2$:
\[
\M\coloneqq\{(t_1,t_2)\in\R^2|t_2>0\}.
\]

To verify the geodesic convexity, we can verify an equivalent statement: a function $f$ is gsc-convex iff $f$ is convex when restricted to any geodesic. That is $f(\gamma(t))\leq (1-t)f(\gm(0))+tf(\gm(1))$ holds for any geodesic $\gm(t):[0,1]\rightarrow\M$. The counterexample is as follows.

Assume $\K$ is the gsc-convex set consists of all points $(u_1,u_2)$ such that $u_1^2+u_2^2\leq 3$ and $u_2>0$. $x=(-1,\sqrt{3})$, $y=(-2,2)$ and $p=(0,1)$. We show that $\gm_{\K}(x)$ is non-convex on the geodesic $\gm_{xy}$.

We first compute $b_{xy}=-2$ and $r_{xy}=2$. By Lemma \ref{lem:poin}, we have
\begin{equation}
    \begin{split}
        \gm_{xy}(s)=&\lt(-2-2\tanh{\lt((1-s)\arctanh\lt(-\frac{1}{2}\rt)\rt)},\frac{2}{\cosh{\lt((1-s)\arctanh\lt(-\frac{1}{2}\rt)\rt)}}\rt)\\
        \coloneqq&(-2-2\cos{\theta(s)},2\sin{\theta(s)}),
    \end{split}
\end{equation}
where we let $\theta(s)\coloneqq\tanh\lt((1-s)\arctanh\lt(-\frac{1}{2}\rt)\rt)$ to ease the elaboration. We denote $z\coloneqq\gm_{xy}(s)=(-2-2\cos{\theta(s)},2\sin{\theta(s)})$.
Then we compute the geodesic $\gm_{pz}$ connecting $z$ and $p=(0,1)$, and we have
\[
b_{pz}(s)=\frac{(2+2\cos{\theta(s)})^2+(2\sin\theta(s))^2-1}{2(-2-2\cos\theta(s))}=-\frac{7+8\cos\theta(s)}{4(1+\cos\theta(s))}
\]
and $r_{pz}(s)=\sqrt{b_{pz}(s)^2+1}$. Now we can compute the intersection of $\gm_{pz}(s)$ with $\K$ as the solution $(q_1,q_2)$ of the following system of equations
\[
\left\{
    \begin{array}{ll}
        (q_1-b_{pz}(s))^2+q_2^2=b_{pz}(s)^2+1 \\
        q_1^2+q_2^2=3
    \end{array}
\right.
\]

The intersection point is $\lt(\frac{1}{b_{pz}(s)},\sqrt{3-\frac{1}{b_{pz}(s)^2}}\rt)$.
For each $s\in[0,1]$, we can evaluate the Minkowski functional at $\gm_{xy}(s)$ w.r.t. $\K$ as
\[
h(s)=d_{\H}((-2-2\cos\theta(s),2\sin\theta(s)),(0,1))/d_{\H}\lt(\lt(\frac{1}{b_{pz}(s)},\sqrt{3-\frac{1}{b_{pz}(s)^2}}\rt),(0,1)\rt),
\]
where $d_{\mathbb{H}}$ is the hyperbolic distance defined in the second item of Lemma \ref{lem:poin}.
 If the Minkowski functional is gsc-convex, then we should have $h(s)\leq (1-s)h(0)+s\cdot h(1)$ for any $s\in[0,1]$. However, by numerical computation, we find
\[
h(1/2)-\frac{1}{2}h(0)-\frac{1}{2}h(1)=0.0057\dots>0,
\]
which refutes the assumption that $h(s)$ is convex. Thus our conclusion is established.
\end{proof}

\section{Technical Lemmas}
\label{app:tech}
\subsection{Background on the Jacobi Field}\label{app:jcb}

The role of the Jacobi field as an essential tool for understanding the impact of curvature on the behavior of neighboring geodesics cannot be overstated. To make our discussion more comprehensive and self-contained, we will introduce related definitions and properties sourced from \citet{lee2006riemannian} and \citet{lee2018introduction}.

We define an admissible curve as a continuous map, $\gm(t):[a,b]\rightarrow\M$, representing a piecewise regular curve segment. An admissible family of curves, on the other hand, is a continuous map $\G(t,s):[a,b]\times (-\eps,\eps)\rightarrow\M$ which is smooth on each rectangle $[a_{i-1},a_i]\times (-\eps,\eps)$ for a finite subdivision $a=a_0<\dots <a_k=b$. Here, $\G_s(t)\coloneqq \G(t,s)$ is recognized as an admissible curve for each $s\in(-\eps,\eps)$. When $\gm(t)$ is admissible and $\G(t,0)=\gm(t)$, we refer to $\G$ as a variation of $\gm(t)$. 

Upon an admissible curve $\G$, we can delineate two sets of curves: the principal curves $\G_s(t)$, which hold $s$ as constant, and the transverse curves $\G^{(t)}(s)\coloneqq\G(t,s)$, which consider $t$ as constant. We consider $\G$ as a variation through geodesics if each principal curve $\G_s(t)= \G(t,s)$ is a geodesic segment. For an admissible family of curves $\G$, we define a vector field along $\G$ as a map $V:[a,b]\times(-\eps,\eps)\rightarrow T\M$ such that $V(t,s)\in T_{\G(t,s)}\M$ and for each $s$, $V(s,t)$ is piecewise smooth. The Jacobi field is a vector field along a variation through geodesics and is formally defined as follows.

\begin{lemma}\citep[][Theorem 10.1, The Jacobi Equation]{lee2018introduction}
    Let $(\M,g)$ be a Riemannian manifold, $\gm$ be a geodesic in $\M$, and $J$ be a vector field along $\gm$. If $J$ is the variation field of a variation through geodesics, then $J$ needs to satisfy the following \emph{Jacobi Equation}:
    \[
    D_t^2J+R(J,\dot{\gm})\dot{\gm}=0,
    \]
    where $R$ is the Riemann curvature endomorphism. A smooth vector field $J$ satisfying the Jacobi Equation is called a Jacobi field.
\end{lemma}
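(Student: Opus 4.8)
The plan is to reduce everything to the defining property that $\G$ is a variation through geodesics, and then convert a second covariant $t$-derivative of the variation field into a curvature term using two standard structural identities for the induced connection along the two-parameter map $\G(t,s)$. First I would name the two velocity fields $T \coloneqq \partial_t\G$ and $S \coloneqq \partial_s\G$, so that along $s=0$ we have $S|_{s=0}=J$ and $T|_{s=0}=\dot{\gm}$. The hypothesis that every principal curve $t\mapsto\G(t,s)$ is a geodesic is exactly the statement $D_tT=0$ for all $s$. The two identities I would invoke are: (i) the \emph{symmetry lemma} $D_tS=D_sT$, valid because the Levi-Civita connection is torsion-free and $\partial_t,\partial_s$ commute as coordinate fields; and (ii) the \emph{curvature commutation formula} $D_tD_sV-D_sD_tV=R(T,S)V$ for any vector field $V$ along $\G$, which is the definition of the curvature endomorphism transported to the pulled-back bundle $\G^*T\M$.

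With these in hand the derivation is short. Starting from $D_t^2S=D_t(D_tS)$ and applying the symmetry lemma to the inner derivative gives $D_t^2S=D_tD_sT$. Applying the commutation formula with $V=T$ yields $D_tD_sT=D_sD_tT+R(T,S)T$, and the geodesic condition $D_tT=0$ kills the first term, leaving $D_t^2S=R(T,S)T$. Using the antisymmetry $R(T,S)=-R(S,T)$ in the first two arguments and restricting to $s=0$ then gives $D_t^2J=-R(J,\dot{\gm})\dot{\gm}$, i.e. $D_t^2J+R(J,\dot{\gm})\dot{\gm}=0$, which is the Jacobi equation.

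The main obstacle is not this short manipulation but the careful justification of identities (i) and (ii), since $S$ and $T$ are not vector fields on $\M$ but vector fields along the map $\G$, so $D_t$ and $D_s$ must be interpreted as covariant derivatives in the pullback bundle $\G^*T\M$. I would verify the symmetry lemma in a local chart, expanding $D_tS$ and $D_sT$ via Christoffel symbols: equality then follows from $\partial_t\partial_s\G^k=\partial_s\partial_t\G^k$ together with the symmetry $\Gamma^k_{ij}=\Gamma^k_{ji}$. The commutation formula comes from the same coordinate expansion, with the terms that fail to cancel reassembling into the components of $R$. A secondary subtlety is that $J$ a priori lives only along $\gm=\G_0$, so all $s$-derivatives must be computed on the full variation before setting $s=0$; this is precisely why the hypothesis supplies a variation through geodesics rather than the field $J$ in isolation. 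Since both supporting identities are classical (see \citet{lee2018introduction}), I would cite them and keep the coordinate check brief.
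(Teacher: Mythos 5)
Your proof is correct and is essentially the canonical argument that the paper defers to via its citation of \citet[Theorem 10.1]{lee2018introduction}: differentiate the geodesic equation $D_tT=0$ in $s$, then use the symmetry lemma $D_tS=D_sT$ and the curvature commutation formula on the pullback bundle to obtain $D_t^2J+R(J,\dot{\gm})\dot{\gm}=0$ at $s=0$. Your sign bookkeeping via the antisymmetry $R(T,S)=-R(S,T)$ and your care in computing $s$-derivatives on the full variation before restricting to $s=0$ both match the cited treatment, so there is nothing to correct.
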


Given initial values of $J$ and $D_t J$ at one point, the Jacobi Equation can be uniquely determined by the following proposition.

\begin{lemma}\citep[][Prop. 10.2, Existence and Uniqueness of Jacobi Fields]{lee2018introduction}
    Let $(\M,g)$ be a Riemannian manifold and $\gm:I\rightarrow\M$ is a geodesic where $I$ is an interval. Fix $a\in I$, let $p=\gm(a)$, $u,v\in T_p\M$, there is a unique Jacobi field $J$ along $\gm$ which satisfies
    \[
    J(a)=u,\qquad\qquad D_t J(a)=v.
    \]
\end{lemma}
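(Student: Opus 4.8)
The plan is to reduce the Jacobi equation $D_t^2J+R(J,\dot\gm)\dot\gm=0$ to a linear system of ordinary differential equations by working in a parallel frame, where the classical existence-and-uniqueness theory for linear ODEs applies directly. First I would fix an orthonormal basis $\{e_1,\dots,e_n\}$ of $T_p\M$ and parallel-transport it along $\gm$ to obtain a parallel orthonormal frame $\{E_i(t)\}_{i=1}^n$, so that $D_tE_i\equiv 0$ and $\{E_i(t)\}$ is an orthonormal basis of $T_{\gm(t)}\M$ for every $t\in I$.

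Writing a candidate vector field as $J(t)=\sum_{i=1}^n J^i(t)E_i(t)$, the fact that the frame is parallel makes covariant differentiation along $\gm$ coincide with componentwise ordinary differentiation: $D_tJ=\sum_i\dot J^i E_i$ and $D_t^2J=\sum_i\ddot J^i E_i$. Expanding the curvature term in the same frame gives $R(J,\dot\gm)\dot\gm=\sum_i\big(\sum_j a^i_j(t)J^j\big)E_i$, where $a^i_j(t)=\la R(E_j,\dot\gm)\dot\gm,E_i\ra$ are smooth functions of $t$, since the curvature endomorphism, the velocity $\dot\gm$, and the frame are all smooth along $\gm$. Hence the Jacobi equation is equivalent to the linear second-order system $\ddot J^i(t)+\sum_j a^i_j(t)J^j(t)=0$ for $i=1,\dots,n$. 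Here I would stress that linearity is inherited from the fact that $R(\cdot,\dot\gm)\dot\gm$ is $\R$-linear in its first slot.

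Next I would recast this as a first-order linear system on $\R^{2n}$ for the unknown $(J^1,\dots,J^n,\dot J^1,\dots,\dot J^n)$, with coefficient matrix assembled from the smooth functions $a^i_j$. The prescribed data translate into the initial condition $J^i(a)=\la u,e_i\ra$ and $\dot J^i(a)=\la v,e_i\ra$. Invoking the existence and uniqueness theorem for linear ODEs yields a unique solution $(J^i(t))$, and reassembling $J(t)=\sum_i J^i(t)E_i(t)$ produces a smooth vector field along $\gm$ that satisfies both the Jacobi equation and the initial data $J(a)=u$, $D_tJ(a)=v$; uniqueness of the ODE solution then gives uniqueness of $J$.

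The one genuinely substantive point, beyond the frame bookkeeping, is the \emph{global} extension: guaranteeing that $J$ exists on all of $I$ rather than merely on a small subinterval about $a$. This is exactly where linearity is indispensable, since general nonlinear second-order ODEs can exhibit finite-time blow-up, whereas a linear system whose coefficients are defined and smooth throughout $I$ admits solutions on the entire interval. Thus the completeness of solutions of linear systems is what upgrades local solvability to a Jacobi field defined along the whole geodesic, and verifying that the parallel-frame expansion truly yields a linear system with smooth time-dependent coefficients is the only other step requiring care.
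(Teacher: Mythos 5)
Your proof is correct and follows the standard argument---indeed essentially the same one given in the cited source (Lee, Prop.~10.2): expand in a parallel orthonormal frame along $\gm$ to convert the Jacobi equation into a linear second-order ODE system with smooth coefficients, apply existence and uniqueness for linear ODEs, and use linearity to get solutions on all of $I$ rather than just locally. The paper itself states this lemma as a citation without reproducing a proof, so your argument is exactly the intended one, including the correct emphasis on linearity as the reason solutions extend to the whole interval.
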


The Jacobi field seems to be a complicated object to study, but the following lemma shows we can write down the initial conditions of the Jacobi field for certain variations through geodesics.

\begin{lemma}\citep[][Lemma 10.9]{lee2018introduction}\label{lem:jcb}
    Let $(\M,g)$ be a Riemannian manifold, $I$ be an interval containing $0$, and $\gm:I\rightarrow\M$ be a geodesic. Suppose $J:I\rightarrow\M$ be a Jacobi field with $J(0)=0$. If $\M$ is geodesically complete or $I$ is compact, then $J$ is the variation field of the following variation of $\gm$:
    \[
    \G(t,s)=\expmap_{p}(t(u+sv)),
    \]
    where $p=\gm(0)$, $u=\dot{\gm}(0)$, and $v=D_tJ(0)$.
\end{lemma}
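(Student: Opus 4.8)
The plan is to exploit the existence-and-uniqueness statement for Jacobi fields: I will exhibit the variation field of $\G$ as \emph{some} Jacobi field along $\gm$, and then match its initial data with that of $J$ to conclude the two fields coincide.

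First I would check that $\G(t,s)=\expmap_p(t(u+sv))$ is a genuine admissible family of curves and, moreover, a variation through geodesics. For each fixed $s$, the principal curve $t\mapsto\expmap_p(t(u+sv))$ is by definition a radial geodesic emanating from $p$ with initial velocity $u+sv$; the completeness of $\M$ (or the compactness of $I$, which keeps the relevant arc lengths bounded so that $\expmap_p$ is defined on the required domain) guarantees these geodesics exist for all $t\in I$ and all small $s$. At $s=0$ the principal curve is $t\mapsto\expmap_p(tu)=\gm(t)$, so $\G$ is indeed a variation of $\gm$. Since every principal curve is a geodesic, the Jacobi Equation stated above shows that the variation field $J_{\G}(t)\coloneqq\partial_s\G(t,s)|_{s=0}$ is automatically a Jacobi field along $\gm$.

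Next I would compute the initial data of $J_{\G}$. At $t=0$ the transverse curve $s\mapsto\G(0,s)=\expmap_p(0)=p$ is constant, so $J_{\G}(0)=\partial_s\G(0,s)|_{s=0}=\o=J(0)$. For the covariant derivative I would invoke the symmetry lemma (commuting the covariant derivatives of the two-parameter map), giving $D_tJ_{\G}(0)=D_t\partial_s\G|_{(0,0)}=D_s\partial_t\G|_{(0,0)}$. Because $\partial_t\G(0,s)=u+sv$ is precisely the initial velocity of the $s$-th principal geodesic, and because the base curve $s\mapsto\G(0,s)$ is constant at $p$ so that $D_s$ collapses to ordinary differentiation inside the fixed vector space $T_p\M$, this yields $D_tJ_{\G}(0)=\tfrac{d}{ds}(u+sv)|_{s=0}=v=D_tJ(0)$.

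Finally, both $J$ and $J_{\G}$ are Jacobi fields along $\gm$ satisfying $J(0)=J_{\G}(0)=\o$ and $D_tJ(0)=D_tJ_{\G}(0)=v$, so by the uniqueness of Jacobi fields with prescribed initial conditions they are equal; hence $J$ is the variation field of $\G$, as claimed. The one step demanding genuine care is the evaluation of $D_tJ_{\G}(0)$: here the degeneration of the transverse curve to the single point $p$ at $t=0$ is exactly what reduces the covariant derivative to the elementary derivative $\tfrac{d}{ds}(u+sv)=v$, and keeping the symmetry-lemma bookkeeping straight is the crux of the argument.
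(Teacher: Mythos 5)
Your proof is correct and is essentially the canonical argument: the paper does not prove this lemma itself but cites it from \citet{lee2018introduction} (Lemma 10.9), whose proof proceeds exactly as yours does — observe that $\G$ is a well-defined variation through geodesics (using completeness or compactness of $I$), so its variation field is a Jacobi field, compute its initial data $J_{\G}(0)=\o$ and $D_tJ_{\G}(0)=v$ via the symmetry lemma together with the degeneration of the transverse curve at $t=0$, and conclude by the uniqueness of Jacobi fields with prescribed initial conditions. You have correctly identified the one delicate step (the reduction of $D_s$ to ordinary differentiation in $T_p\M$), and there are no gaps.
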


The advantage of introducing the Jacobi field is the norm of $J(t)$ can be bounded by the initial value $\|D_tJ(0)\|$, as shown in the following Lemma \ref{lem:jct}.
\begin{defn}\label{def:comp}
    We define     
    \[  
    \mathbf{s}(\kappa, t)=\left\{\begin{array}{ll}
t, & \text { if } \kappa=0 \\
\frac{1}{\sqrt{\kappa}} \sin (\sqrt{\kappa} t), & \text { if } \kappa>0 \\
\frac{1}{\sqrt{-\kappa}} \sinh (\sqrt{-\kappa} t), & \text { if } \kappa<0 .
\end{array}\right.
    \]
\end{defn}

\begin{lemma}\citep[][Theorem. 11.9 ]{lee2018introduction}\label{lem:jct}
    Suppose $(\M, g)$ is a Riemannian manifold, $\gamma:[0, b] \rightarrow \M$ is a unit-speed geodesic segment, and $J$ is any     Jacobi field along $\gamma$ such that $J(0)=0$. For each $c \in \mathbb{R}$, let $s(\cdot,\cdot)$ be the function defined by Definition \ref{def:comp}.

    \begin{enumerate}[label=\arabic*),leftmargin=*]
        \item If all sectional curvatures of $M$ are bounded above by a constant $\kt$, then
\[
\|J(t)\| \geq s(\max\{0,\kt\}, t)\left\|D_t J(0)\right\|
\]
for all $t \in\left[0, b_1\right]$, where $b_1=b$ if $\kt \leq 0$, and $b_1=\min \lt(b, \frac{\pi}{\sqrt{\kt}}\rt)$ if $\kt>0$.
\item If all sectional curvatures of $M$ are bounded below by a constant $\ko$, then
\[
\|J(t)\| \leq s(\min\{0,\ko\}, t)\left\|D_t J(0)\right\|
\]
for all $t \in\left[0, b_2\right]$, where $b_2$ is chosen so that $\gamma\left(b_2\right)$ is the first conjugate point to $\gamma(0)$ along $\gamma$ if there is one, and otherwise $b_2=b$.
    \end{enumerate}  
\end{lemma}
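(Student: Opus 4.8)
The plan is to reduce the vector statement to a one-dimensional Sturm--Liouville comparison for the scalar function $u(t) := \|J(t)\|$. This handles the lower bound (item~1) cleanly, while the upper bound (item~2) requires the variational/Rauch machinery because the scalar reduction breaks down. First I would dispose of the case $J\equiv 0$ (both inequalities are trivial) and otherwise set $v := D_t J(0)\ne 0$. Since $J(0)=0$, the Taylor expansion $J(t)=tv+O(t^2)$ gives $u(0)=0$, $u'(0^+)=\|v\|$, and shows $u$ is smooth on any subinterval where $J\ne 0$ (in particular on $(0,b_1)$ and $(0,b_2)$, since the first zero of $J$ after $0$ is precisely the first conjugate point).

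Next, for $t$ with $J(t)\ne 0$ I differentiate $u=\langle J,J\rangle^{1/2}$ twice and substitute the Jacobi equation $D_t^2 J=-R(J,\dot\gamma)\dot\gamma$ to obtain
\[
u''=\frac{\|D_t J\|^2\|J\|^2-\langle D_t J,J\rangle^2}{\|J\|^3}-\frac{\langle R(J,\dot\gamma)\dot\gamma,J\rangle}{\|J\|}.
\]
Writing $\langle R(J,\dot\gamma)\dot\gamma,J\rangle=K(J,\dot\gamma)\bigl(\|J\|^2-\langle J,\dot\gamma\rangle^2\bigr)$ (using $\|\dot\gamma\|=1$) and noting that the first term is nonnegative by Cauchy--Schwarz, the curvature bound $K\le\kt$ yields, after treating $\kt\le 0$ and $\kt>0$ separately, the differential inequality $u''+\max\{0,\kt\}\,u\ge 0$ on the set $\{u>0\}$.

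For item~1 I would run a Wronskian comparison. Let $h(t):=\|v\|\,s(\max\{0,\kt\},t)$, the solution of $h''+\max\{0,\kt\}\,h=0$ with $h(0)=0$, $h'(0)=\|v\|$, so that $h>0$ exactly for $t\in(0,b_1)$. Forming $W:=u'h-uh'$ gives $W(0)=0$ and $W'=(u''+\max\{0,\kt\}u)\,h\ge 0$ while $h\ge 0$, hence $W\ge 0$ on $[0,b_1]$; therefore $(u/h)'=W/h^2\ge 0$, and since $u/h\to u'(0^+)/h'(0)=1$ as $t\to 0^+$, we conclude $u/h\ge 1$, i.e.\ $\|J(t)\|\ge s(\max\{0,\kt\},t)\|v\|$ on $[0,b_1]$.

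The upper bound is the main obstacle. Under $K\ge\ko$ the curvature term in the formula for $u''$ now has the favorable sign, but the Cauchy--Schwarz term remains nonnegative and cannot be controlled from above, so the desired $u''+\min\{0,\ko\}\,u\le 0$ simply fails for the pointwise scalar function. The remedy is the index-form (Rauch) comparison: after splitting off the tangential component of $J$, which equals $\langle v,\dot\gamma\rangle t\,\dot\gamma$ and is curvature-free, one uses the minimizing property of Jacobi fields, namely that $J$ minimizes $I_0^t(V,V)=\int_0^t\bigl(\|D_sV\|^2-\langle R(V,\dot\gamma)\dot\gamma,V\rangle\bigr)\,ds$ among vector fields with the same endpoints on any conjugate-point-free interval. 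Comparing $I_0^t(J,J)=\langle D_tJ(t),J(t)\rangle$ against the index form of a parallel-transported copy of the constant-curvature model Jacobi field and invoking $K\ge\ko\ge\min\{0,\ko\}$ produces $\tfrac{d}{dt}\log\|J\|\le \tfrac{d}{dt}\log\bigl(\|v\|\,s(\min\{0,\ko\},t)\bigr)$, which integrates from $t=0$ to give $\|J(t)\|\le s(\min\{0,\ko\},t)\|v\|$ on $[0,b_2]$. I expect the delicate points here to be the nonsmoothness of $\|J\|$ at $t=0$ and the restriction to $[0,b_2]$ dictated by the first conjugate point, both of which must be handled when passing from the variational inequality to the integrated comparison.
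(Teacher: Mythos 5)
You should know at the outset that the paper contains no proof of this lemma: it is imported verbatim by citation from Lee (2018), Theorem 11.9 (with the $\max\{0,\kt\}$ and $\min\{0,\ko\}$ weakenings that make the statement valid for Jacobi fields that are not necessarily normal), so the only meaningful comparison is against the standard literature proof --- and your proposal reconstructs exactly that proof. For item 1, the reduction to $u=\|J\|$, the computation of $u''$ via the Jacobi equation, the observation that the Cauchy--Schwarz remainder $\left(\|D_tJ\|^2\|J\|^2-\la D_tJ,J\ra^2\right)/\|J\|^3\ge 0$ has the favorable sign, the case split giving $u''+\max\{0,\kt\}u\ge 0$ on $\{u>0\}$ (dropping $\kt\la J,\dot\gamma\ra^2\ge 0$ when $\kt>0$), and the Wronskian/Sturm comparison against $h(t)=\|v\|\,s(\max\{0,\kt\},t)$ are all correct. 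Your diagnosis that this same remainder term blocks any pointwise scalar proof of item 2, forcing the index-form (Rauch) machinery, identifies precisely why the two halves are proved by different methods in every textbook treatment, including the cited one.

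Two repairs are needed, both minor. (i) In item 1, your justification that $J\ne 0$ on $(0,b_1)$ --- ``the first zero of $J$ after $0$ is precisely the first conjugate point'' --- is circular: the absence of conjugate points before $\pi/\sqrt{\kt}$ is itself a corollary of the estimate being proved (and, separately, the first zero of a particular $J$ is \emph{a} conjugate point, not necessarily the first one). The fix is routine and already implicit in your setup: run the Wronskian argument on the maximal interval $(0,t_*)$ on which $u>0$, where it yields $u\ge h$; if $t_*<b_1$ then $0=u(t_*)\ge h(t_*)>0$, a contradiction, so $t_*\ge b_1$. For item 2 the parenthetical is non-circular, since $b_2$ is by definition the first conjugate point. (ii) In item 2, the displayed inequality $\frac{d}{dt}\log\|J\|\le\frac{d}{dt}\log\left(\|v\|\,s(\min\{0,\ko\},t)\right)$ should, in your split formulation, be asserted for the normal part $\|J^\perp\|$ only, with the tangential part $\la v,\dot\gamma\ra t\,\dot\gamma$ recombined afterwards via $s(\min\{0,\ko\},t)\ge t$ and the Pythagorean identity; alternatively, note that because $\min\{0,\ko\}\le 0$ one has $\la R(W,\dot\gamma)\dot\gamma,W\ra\ge\min\{0,\ko\}\|W\|^2$ for \emph{arbitrary} fields $W$, so the index comparison of $I_0^t(J,J)=\la D_tJ(t),J(t)\ra$ against $w(s)E(s)$, where $w(s)=s(\min\{0,\ko\},s)/s(\min\{0,\ko\},t)$ and $E$ is the parallel field with $E(t)=J(t)$, applies to $J$ directly and no splitting is needed. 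The endpoint issues you flag (the limit $u(t)/s(\min\{0,\ko\},t)\to\|v\|$ as $t\to 0^+$, validity of the index lemma strictly before $b_2$ with the case $t=b_2$ recovered by continuity) are genuine but standard. With these patches your proposal is a complete and correct proof, in line with the source the paper cites.
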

\subsection{Miscellaneous Technical Lemmas}
\begin{lemma}\cite[Section 2.1]{bacak2014convex}\label{lem:levelset}
    Let $\M$ be a Hadamard manifold, $f:\M\rightarrow(-\infty, \infty)$ be a gsc-convex lower semicontinuous function. Then any $\beta$-sublevel set of $f$:
    \[
    \{\x\in\M:f(\x)\leq\beta\}
    \]
    is a closed gsc-convex set.
\end{lemma}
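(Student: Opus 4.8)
The plan is to establish the two assertions—geodesic convexity and closedness—separately, since each reduces to a short self-contained argument, and no curvature-specific machinery beyond the Hadamard hypothesis is actually needed.

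First I would handle geodesic convexity. Take any two points $\x,\y$ in the sublevel set $S_\beta\coloneqq\{\z\in\M:f(\z)\leq\beta\}$, so that $f(\x)\leq\beta$ and $f(\y)\leq\beta$. Because $\M$ is Hadamard, there is a \emph{unique} minimizing geodesic $\gamma:[0,1]\to\M$ with $\gamma(0)=\x$ and $\gamma(1)=\y$, and the goal is to show $\gamma(t)\in S_\beta$ for every $t\in[0,1]$. Here I would invoke the characterization of gsc-convexity as convexity along geodesics—the same equivalent formulation used in the proof of Theorem \ref{thm:mink}—which yields
\[
f(\gamma(t))\leq(1-t)f(\gamma(0))+t\,f(\gamma(1))=(1-t)f(\x)+t\,f(\y)\leq(1-t)\beta+t\beta=\beta.
\]
Hence $\gamma(t)\in S_\beta$ for all $t$, so the segment connecting $\x$ and $\y$ stays in $S_\beta$, which is exactly the definition of a gsc-convex set.

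Second I would address closedness, which follows directly from lower semicontinuity. A function $f$ is lower semicontinuous precisely when all of its sublevel sets are closed; concretely, for any sequence $\x_n\to\x$ with $f(\x_n)\leq\beta$, lower semicontinuity gives $f(\x)\leq\liminf_{n}f(\x_n)\leq\beta$, so $\x\in S_\beta$ and $S_\beta$ is closed.

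The only genuine subtlety—more a matter of bookkeeping than a real obstacle—is that the paper's working definition of gsc-convexity in Equation \eqref{eq:gsc-convex} is stated for differentiable $f$ through the subgradient inequality, whereas the lemma assumes merely lower semicontinuity. I would therefore be explicit about using the geodesic-restriction form of convexity, $f(\gamma(t))\leq(1-t)f(\gamma(0))+t\,f(\gamma(1))$, which is the natural definition for possibly non-smooth functions and which specializes to the gradient inequality when $f$ is smooth. Once that identification is made, both claims are immediate, and the Hadamard assumption enters only through the existence and uniqueness of the minimizing geodesic along which convexity is applied.
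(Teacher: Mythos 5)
Your proof is correct, and it is the standard textbook argument: geodesic convexity of the sublevel set follows from convexity of $f$ along the (unique, by the Hadamard assumption) geodesic joining two points of the set, and closedness is just the sequential characterization of lower semicontinuity in a metric space. Note that the paper itself gives no proof of this lemma---it is cited directly from \citet{bacak2014convex}---so there is nothing internal to compare against; your handling of the definitional subtlety (using the along-geodesic form of gsc-convexity rather than the paper's smooth gradient-inequality form in Equation \eqref{eq:gsc-convex}, since the lemma only assumes lower semicontinuity) is exactly the right resolution and matches the convention used in that reference.
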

\begin{lemma}\cite[Section 2.2]{bacak2014convex}\label{lem:maxgsc}
    Let $\M$ be a Hadamard manifold, $f_i:\M\rightarrow(-\infty, \infty)$ for $i=1,\dots,m$ be a series of gsc-convex lower semicontinuous functions. Then $\sup_{i\in[m]}f_i(\x)$ is gsc-convex.
\end{lemma}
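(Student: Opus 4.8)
The plan is to reduce gsc-convexity of the pointwise supremum to ordinary one-dimensional convexity along geodesics, mirroring the familiar Euclidean argument. The key tool is the zeroth-order characterization of geodesic convexity used elsewhere in the paper (e.g.\ in the proof of Theorem \ref{thm:mink}): a function $g:\M\to(-\infty,\infty]$ is gsc-convex if and only if, for every geodesic $\gm:[0,1]\to\M$ and every $t\in[0,1]$, one has $g(\gm(t))\leq (1-t)g(\gm(0))+t\,g(\gm(1))$. I would adopt this formulation rather than the first-order inequality \eqref{eq:gsc-convex}, since the supremum $g\coloneqq\sup_{i\in[m]}f_i$ need not be differentiable, so the gradient form of gsc-convexity cannot be invoked directly.

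First I would fix an arbitrary geodesic $\gm:[0,1]\to\M$ and a parameter $t\in[0,1]$. For each index $i$, gsc-convexity of $f_i$ gives the one-dimensional convexity inequality $f_i(\gm(t))\leq (1-t)f_i(\gm(0))+t\,f_i(\gm(1))$. Since $f_i\leq g$ pointwise, I can bound the right-hand side by $(1-t)g(\gm(0))+t\,g(\gm(1))$, obtaining $f_i(\gm(t))\leq (1-t)g(\gm(0))+t\,g(\gm(1))$ for every $i$. Taking the supremum over the finitely many indices $i\in[m]$ on the left then yields $g(\gm(t))\leq (1-t)g(\gm(0))+t\,g(\gm(1))$, which is exactly the geodesic-restriction convexity condition; hence $g$ is gsc-convex. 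Lower semicontinuity of $g$ follows because a finite maximum of lower semicontinuous functions is lower semicontinuous, so $g$ is an admissible gsc-convex lsc function.

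An equally valid alternative, which I would flag as a cross-check, is to argue through sublevel sets: for each $\beta$ the set $\{\x:g(\x)\leq\beta\}=\bigcap_{i\in[m]}\{\x:f_i(\x)\leq\beta\}$ is an intersection of closed gsc-convex sets by Lemma \ref{lem:levelset}, and an arbitrary intersection of gsc-convex (resp.\ closed) sets is again gsc-convex (resp.\ closed); since a lower semicontinuous function all of whose sublevel sets are gsc-convex is itself gsc-convex, the claim follows. I do not expect any genuine obstacle here: the only point demanding care is the choice of characterization of gsc-convexity — using the geodesic-restriction (zeroth-order) form precisely to accommodate the possible non-differentiability of the supremum — after which the proof is the standard convex-analysis argument transported verbatim to the geodesic setting, with the finiteness of $[m]$ guaranteeing both that the supremum stays finite and that lower semicontinuity is preserved.
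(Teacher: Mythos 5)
Your main argument is correct and is essentially the canonical proof: the paper itself offers no proof of Lemma \ref{lem:maxgsc} (it defers to \citet{bacak2014convex}), and the argument there is exactly your reduction to one-dimensional convexity along geodesics — fix a geodesic $\gm$, apply the convexity inequality of each $f_i$ along $\gm$, dominate the right-hand side by $g=\sup_{i\in[m]}f_i$, and take the supremum over the finitely many indices on the left. Your choice of the zeroth-order (geodesic-restriction) characterization is also the right one, since $g$ is generally nondifferentiable and the paper's first-order definition \eqref{eq:gsc-convex} could not be invoked; the paper itself endorses this equivalent formulation in the proof of Theorem \ref{thm:mink}.

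However, your ``equally valid alternative'' via sublevel sets is not valid, and you should delete it rather than flag it as a cross-check. Convexity of every sublevel set $\{\x : g(\x)\leq\beta\}$ only establishes that $g$ is geodesically \emph{quasiconvex}, which is strictly weaker than geodesic convexity; the implication ``lower semicontinuous with all sublevel sets convex $\Rightarrow$ convex'' is false already in Euclidean space (e.g.\ $x\mapsto x^3$ on $\R$ is continuous with convex sublevel sets but is not convex). Lemma \ref{lem:levelset} goes in one direction only — from convexity of the function to convexity of its sublevel sets — and cannot be reversed. So the intersection argument proves a strictly weaker statement; your first argument alone is the proof.
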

\begin{lemma}\label{lem:so}
    For a set $\K\coloneqq\{\x|\max_{1\leq i\leq m}{h_i(\x)}\leq 0\}\subseteq\M$ where each $h_i(\x)$ is gsc-convex, $\M$ is Hadamard and any $\y\notin\K$,  there exists $\g\in T_{\y}\M$ such that 
    \[
\la-\invexp_{\y}\x,\g\ra>0,\quad \forall\x\in\K.
\]
\end{lemma}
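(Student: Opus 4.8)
The plan is to transfer the classical Euclidean subgradient-separation argument to the Hadamard setting: at a point $\y\notin\K$ at least one constraint must be strictly violated, and the Riemannian gradient of that single violated constraint supplies the separating direction. The key observation is that each $h_i$ is \emph{individually} gsc-convex, so I never have to differentiate the non-smooth maximum $\max_i h_i$; I only isolate one violated constraint and apply the first-order characterization \eqref{eq:gsc-convex} to it.

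Concretely, I would proceed in three steps. First, since $\y\notin\K=\{\x:\max_{1\leq i\leq m}h_i(\x)\leq 0\}$, there must be an index $i^*$ with $h_{i^*}(\y)>0$; fix such an $i^*$ and set $\g:=\nabla h_{i^*}(\y)\in T_{\y}\M$, the Riemannian (sub)gradient furnished by gsc-convexity on a Hadamard manifold. Second, I would invoke inequality \eqref{eq:gsc-convex} with $\mu=0$ for $h_{i^*}$, taking $\y$ as the base point and $\x$ as the evaluation point, to obtain
\[
h_{i^*}(\x)\geq h_{i^*}(\y)+\la\g,\invexp_{\y}\x\ra\qquad\forall\x\in\M,
\]
which rearranges to $\la-\invexp_{\y}\x,\g\ra\geq h_{i^*}(\y)-h_{i^*}(\x)$. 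Third, for any $\x\in\K$ we have $h_{i^*}(\x)\leq\max_{1\leq j\leq m}h_j(\x)\leq 0<h_{i^*}(\y)$, so the right-hand side is bounded below by $h_{i^*}(\y)>0$, giving
\[
\la-\invexp_{\y}\x,\g\ra\geq h_{i^*}(\y)>0\qquad\forall\x\in\K,
\]
which is exactly the separation condition \eqref{eq:so}.

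I do not expect a genuine obstacle here—the argument is a direct manifold analogue of Euclidean subgradient separation, with the variable metric playing no essential role because only the first-order inequality is used. The two points deserving care are bookkeeping the sign convention in the definition \eqref{eq:so} of the separation oracle (the inverse-exponential map introduces the extra minus sign relative to $\y-\x$), and recording that we differentiate the single violated constraint $h_{i^*}$ rather than the maximum, which keeps the step within the scope of \eqref{eq:gsc-convex}. As a byproduct, whenever $\K$ is nonempty the strict inequality forces $\g\neq\mathbf{0}$, so the produced (generally non-convex) separating hyperplane is well-defined; this, together with Lemmas \ref{lem:levelset} and \ref{lem:maxgsc} confirming that $\K$ itself is a closed gsc-convex set, completes the picture.
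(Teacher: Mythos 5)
Your proposal is correct and follows essentially the same argument as the paper's proof: isolate an index $i^*$ with $h_{i^*}(\y)>0$ (which must exist since $\y\notin\K$), apply the first-order gsc-convexity inequality \eqref{eq:gsc-convex} to that single constraint at the base point $\y$, and conclude strict separation from $h_{i^*}(\x)\leq 0$ for all $\x\in\K$. The only cosmetic difference is that the paper additionally cites Lemmas \ref{lem:maxgsc} and \ref{lem:levelset} up front to record that $\K$ is gsc-convex, a fact you note as a byproduct rather than a premise; neither version actually needs it for the separation inequality itself.
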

\begin{proof}
    By Lemma \ref{lem:maxgsc}, $\max_{1\leq i\leq m}{h_i(\x)}$ is gsc-convex. Further, by Lemma \ref{lem:levelset}, $\K$ is a gsc-convex subset of $\M$. For some $\y\notin\K$,  there exists $j\in[m]$ such that $h_j(\y)>0$. If not, we have $h_i(\y)\leq 0$ holds for any $i=1,\dots,m$, which implies $\y\in\K$ and leads to a contradictory. Applying the gsc-convexity for any $\x\in\K$, we have
    \[
    -\la\invexp_{\y}\x,\nabla h_j(\y)\ra\geq h_j(\y)-h_j(\x)>0,
    \]
    where the second inequality is because $h_j(\y)>0$ and $h_j(\x)\leq 0$. This is a valid separation oracle between $\y$ and $\K$.
\end{proof}
\begin{remark}\label{remark:so}
    Given $\K=\{\x|\max_{1\leq i\leq m}{h_i(\x)}\leq 0\}$ where each $h_i(\x)$ is gsc-convex and $\y\notin\K$, we can check the sign of $h_i(\y)$ for $i=1,\dots,m$ until we find $i^*$ such that $h_{i^*}(\y)>0$, then using the proof of Lemma \ref{lem:so}, we obtain
    \[
    -\la\invexp_{\y}\x,\nabla h_{i^*}(\y)\ra >0
    \]
    for every $\x\in\K$. This gives us a separation oracle between $\y$ and $\K$. The computation of this separation oracle requires no more than $m$ function value evaluations and a single gradient evaluation. This is efficiently implementable when $m$ is of moderate size.
    
\end{remark}
\begin{remark}\label{remark:binary}
Using Gauss's Lemma \citep[Theorem 6.8]{lee2006riemannian}, we can efficiently compute the projection onto a geodesic ball. Consider $\x\notin\Bpr$ for which we aim to compute its projection onto $\Bpr$. First, we determine the geodesic segment, denoted as $\gm(t)$, connecting $\x$ and $\p$. This segment intersects the geodesic ball at point $\z$. Gauss's Lemma then tells us that the tangent vector of $\gm(t)$ at $\z$ is orthogonal to the geodesic sphere $\S_p(r)$. This indicates that $\z$ is the projection of $\x$ onto $\Bpr$. By performing $O(\log\lt(\frac{1}{\eps}\rt))$ rounds of binary search on $\gm(t)$, we can approximate $\z$ to within an $\eps$ precision. In each round, it suffices to verify if the condition $d(\x_t,\p)\geq r$ is met.
\end{remark}

\begin{lemma}\citep[][Lemma 45]{wang2023online}\label{lem:ball}
    Suppose the sectional curvature of $\M$ is in $[\ko,\kt]$ where $\ko\leq 0$ and $\kt\geq 0$. Assume a gsc-convex set $\K$ satisfies $\B_{\p}(r)\subseteq\K\subseteq \B_{\p}(R)$ where $R\leq\frac{\pi}{2\sqrt{\kt}}$ when $\kt>0$. Then for any $\y\in(1-\tau)\K$, the geodesic ball $\B_{\y}\lt(\frac{s(\kt,R+r)\cdot\tau r}{s(\ko,R+r)}\rt)\in\K$, where $s(\cdot,\cdot)$ is defined in Definition \ref{def:comp}.
\end{lemma}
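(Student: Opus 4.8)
The plan is to reduce the claim to a single contraction estimate for geodesics emanating from the ``lift'' of $\y$ into $\K$, and to prove that estimate with the Jacobi field comparison of Lemma \ref{lem:jct}. Since $\y\in(1-\tau)\K$, Definition \ref{def:squeeze} gives $\w\in\K$ with $\invexp_{\p}\y=(1-\tau)\invexp_{\p}\w$; equivalently $\y$ sits at parameter $\tau$ on the geodesic from $\w$ to $\p$, i.e.\ $\y=\expmap_{\w}(\tau\invexp_{\w}\p)$. Fix any $\z$ with $d(\y,\z)\le\rho$, where $\rho=\mathbf{s}(\kt,R+r)\tau r/\mathbf{s}(\ko,R+r)$, and define $\q=\expmap_{\w}(\tau^{-1}\invexp_{\w}\z)$, so that $\z=\expmap_{\w}(\tau\invexp_{\w}\q)$ lies at parameter $\tau$ on the geodesic from $\w$ to $\q$. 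If I can show $d(\p,\q)\le r$, then $\q\in\B_{\p}(r)\subseteq\K$; since also $\w\in\K$ and $\K$ is gsc-convex, the whole geodesic from $\w$ to $\q$ — and in particular $\z$ — lies in $\K$. Thus the lemma reduces to the implication $d(\y,\z)\le\rho\Rightarrow d(\p,\q)\le r$.

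To prove this I would interpolate. Let $\eta\colon[0,1]\to\M$ be the constant-speed minimizing geodesic with $\eta(0)=\y$, $\eta(1)=\z$, set $\q(u)=\expmap_{\w}(\tau^{-1}\invexp_{\w}\eta(u))$ (so $\q(0)=\p$, $\q(1)=\q$), and consider the variation through geodesics $\G(t,u)=\expmap_{\w}(t\,\invexp_{\w}\q(u))$. Then $\G(0,u)=\w$, $\G(\tau,u)=\eta(u)$, and $\G(1,u)=\q(u)$, so for each $u$ the field $J_u(t)=\partial_u\G(t,u)$ is a Jacobi field along a geodesic issuing from $\w$ with $J_u(0)=0$ (Lemma \ref{lem:jcb}). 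Reparametrizing that geodesic to unit speed over $[0,L_u]$ with $L_u=d(\w,\q(u))$ and applying both bounds of Lemma \ref{lem:jct} — the upper bound at the far end, the lower bound at the near end — gives
\begin{equation*}
\|J_u(1)\|\le \mathbf{s}(\ko,L_u)\,\|D_t\tilde J_u(0)\|,\qquad \|J_u(\tau)\|\ge \mathbf{s}(\kt,\tau L_u)\,\|D_t\tilde J_u(0)\|,
\end{equation*}
hence $\|J_u(1)\|\le \frac{\mathbf{s}(\ko,L_u)}{\mathbf{s}(\kt,\tau L_u)}\|J_u(\tau)\|$. Because $\|J_u(\tau)\|=\|\dot\eta(u)\|=d(\y,\z)$ and $\|\partial_u\q(u)\|=\|J_u(1)\|$, integrating in $u$ will bound $d(\p,\q(u))$ by the length of $u\mapsto\q(u)$.

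The analytic heart is to control the coefficient $\mathbf{s}(\ko,L_u)/\mathbf{s}(\kt,\tau L_u)$. I would use two facts: the concavity inequality $\mathbf{s}(\kt,\tau L)\ge\tau\,\mathbf{s}(\kt,L)$ for $\tau\in[0,1]$ (since $\sin$ is concave on $[0,\pi]$ and vanishes at $0$), and the monotonicity of $g(L):=\mathbf{s}(\ko,L)/\mathbf{s}(\kt,L)$, which is increasing on $(0,\pi/\sqrt{\kt})$ because there $\sqrt{-\ko}\coth(\sqrt{-\ko}L)>1/L>\sqrt{\kt}\cot(\sqrt{\kt}L)$. Together these give, whenever $L_u\le R+r$,
\begin{equation*}
\frac{\mathbf{s}(\ko,L_u)}{\mathbf{s}(\kt,\tau L_u)}\le\frac{g(L_u)}{\tau}\le\frac{g(R+r)}{\tau}=\frac{\mathbf{s}(\ko,R+r)}{\tau\,\mathbf{s}(\kt,R+r)},
\end{equation*}
so that $d(\p,\q(u))\le\int_0^u\|J_v(1)\|\,dv\le\frac{\mathbf{s}(\ko,R+r)}{\tau\,\mathbf{s}(\kt,R+r)}\,d(\y,\eta(u))=\frac{\mathbf{s}(\ko,R+r)}{\tau\,\mathbf{s}(\kt,R+r)}\,u\,d(\y,\z)\le u\,r$.

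The main obstacle is that this estimate presupposes $L_u=d(\w,\q(u))\le R+r$, which is the very kind of bound we are establishing — a circularity. I would resolve it by a continuity/bootstrap argument: $\q(0)=\p$ gives $L_0=d(\w,\p)\le R$, and if $u_0:=\inf\{u:d(\p,\q(u))>r\}$ were $<1$, then $d(\p,\q(u))\le r$ for $u\le u_0$ forces $L_u\le R+r$ there, so the displayed bound applies on $[0,u_0]$ and yields $d(\p,\q(u_0))\le u_0 r<r$, contradicting $d(\p,\q(u_0))=r$; hence $d(\p,\q)\le r$. Finally I would record the range bookkeeping that legitimizes Lemma \ref{lem:jct} and the $\sin$-concavity: since $r\le R\le\pi/(2\sqrt{\kt})$ we have $\tau L_u\le L_u\le R+r\le\pi/\sqrt{\kt}$, keeping every argument of $\mathbf{s}(\kt,\cdot)$ below the first conjugate radius, and the degenerate cases $\kt=0$ and/or $\ko=0$ follow by the usual limits, recovering the Euclidean relation $\rho=\tau r$.
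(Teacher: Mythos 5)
Your proof is correct. Note that the paper never proves Lemma \ref{lem:ball} itself --- it is imported verbatim from \citet[Lemma 45]{wang2023online} --- so the fair comparison is with the paper's proofs of the sibling statements, Lemmas \ref{lem:ballso} and \ref{lem:ball2}, and your argument uses exactly that machinery: a variation through geodesics emanating from a fixed point, the variation field identified as a Jacobi field with $J(0)=0$ via Lemma \ref{lem:jcb}, a unit-speed reparameterization, the comparison bounds of Lemma \ref{lem:jct}, and a final membership conclusion from gsc-convexity of $\K$ together with $\B_{\p}(r)\subseteq\K$. Your reduction (lift $\y$ to $\w\in\K$, lift each $\z\in\B_{\y}(\rho)$ to $\q$, and show $d(\p,\q)\leq r$) is precisely the structure hinted at in the remark opening the proof of Lemma \ref{lem:ball2}. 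Two places where you genuinely deviate, both to your credit: (i) because $\kt$ may be positive you need both directions of Lemma \ref{lem:jct} at two different parameter values, and you control the resulting ratio via monotonicity of $g(L)=\mathbf{s}(\ko,L)/\mathbf{s}(\kt,L)$ (from $\sqrt{-\ko}L\coth(\sqrt{-\ko}L)\geq 1\geq\sqrt{\kt}L\cot(\sqrt{\kt}L)$) together with the concavity bound $\mathbf{s}(\kt,\tau L)\geq\tau\,\mathbf{s}(\kt,L)$; this is the correct two-sided generalization of the paper's Lemma \ref{lem:sinh}, and it degenerates in the Hadamard case $\kt=0$ to $\rho=\frac{\sqrt{-\k}(R+r)}{\sinh(\sqrt{-\k}(R+r))}\tau r$, matching the $\d'$ used in Algorithm \ref{alg:bcoso}; (ii) you spotted the circularity in assuming $L_u\leq R+r$ and broke it with a continuity bootstrap ($d(\p,\q(u_0))\leq u_0 r<r$ contradicts the definition of $u_0$), whereas the analogous proof of Lemma \ref{lem:ball2} sidesteps the issue by deriving an a priori bound on $R(s)$ directly from the triangle inequality and a Euclidean comparison triangle (Lemma \ref{lem:comp}); either route is sound. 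Your range bookkeeping $\tau L_u\leq L_u\leq R+r\leq\pi/\sqrt{\kt}$ is exactly what legitimizes both applications of Lemma \ref{lem:jct} (no conjugate point intervenes, and the $\sin$-concavity is applied on $[0,\pi]$); the only hypotheses used implicitly --- uniqueness of geodesics and well-definedness of $\invexp_{\w}$ in the relevant region --- are presupposed by the statement itself through Definition \ref{def:squeeze} and the restriction $R\leq\frac{\pi}{2\sqrt{\kt}}$.
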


\begin{lemma}\citep[][Chapter \RomanNumeralCaps{2}.1]{bridson2013metric}\label{lem:comp} Suppose $\Delta(\p,\q,\r)$ is a triangle on a Hadamard manifold $\M$ and $\Delta(\bar{\p},\bar{\q},\bar{\r})$ is a comparison triangle in Euclidean space such corresponding side lengths are the same for both triangles. Then for any $\x\in[\p,\q]$, $\y\in[\p,\r]$, $\bx\in[\bar{\p},\bar{\q}]$, $\by\in[\bar{\p},\bar{\r}]$ satisfying $d(\p,\x)=d_{\mathbb{E}}(\bp,\bx)$ and $d(\p,\y)=d_{\mathbb{E}}(\bp,\by)$, we have $d(\x,\y)\leq d_{\mathbb{E}}(\bx,\by)$.
\end{lemma}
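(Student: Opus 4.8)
The statement is the defining CAT$(0)$ comparison inequality, and the result I would establish is the Cartan--Hadamard fact that a Hadamard manifold ($\M$ complete, simply connected, sectional curvature $\le 0$) is a CAT$(0)$ space. My plan is to prove it in two stages: a convexity estimate that encodes the curvature bound, and a Euclidean monotonicity argument that turns it into the distance comparison. First I would reduce the claim to an angle inequality. Writing $a=d(\p,\x)$ and $b=d(\p,\y)$ and letting $\alpha=\angle_{\p}(\x,\y)$ be the Riemannian angle at $\p$ (well defined since $\expmap_{\p}$ is a diffeomorphism), I let $\bar\alpha$ denote the angle at $\bp$ in the comparison triangle $\Delta(\bp,\bq,\bar{\r})$ and introduce the comparison angle $\bar{\angle}_{\p}(\x,\y)$ of the sub-triangle $\Delta(\p,\x,\y)$, defined by $d(\x,\y)^2=a^2+b^2-2ab\cos\bar{\angle}_{\p}(\x,\y)$. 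Since $d_{\mathbb{E}}(\bx,\by)^2=a^2+b^2-2ab\cos\bar\alpha$, the target $d(\x,\y)\le d_{\mathbb{E}}(\bx,\by)$ is exactly the inequality $\bar{\angle}_{\p}(\x,\y)\le\bar\alpha$.

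The curvature would enter through a convexity estimate, which is the genuine source of the upper bound: for any two geodesics $\sigma,\tau\colon[0,1]\to\M$, the function $t\mapsto d(\sigma(t),\tau(t))$ is convex. I would derive this from the Jacobi equation $D_t^2J+R(J,\dot\gamma)\dot\gamma=\o$ of Appendix \ref{app:jcb}: the separation of a geodesic variation is a Jacobi field $J$, and where $J\neq\o$ one computes
\begin{equation*}
\frac{d^2}{dt^2}\|J\|=\frac{\|D_tJ\|^2\|J\|^2-\la D_tJ,J\ra^2}{\|J\|^3}-\frac{\la R(J,\dot\gamma)\dot\gamma,J\ra}{\|J\|}\ge 0,
\end{equation*}
where the first term is non-negative by Cauchy--Schwarz and the second is non-negative because $\la R(J,\dot\gamma)\dot\gamma,J\ra\le 0$ when the sectional curvature is non-positive. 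Integrating the resulting convexity of $\|J\|$ along the geodesics joining $\sigma(t)$ to $\tau(t)$ gives the convexity of $t\mapsto d(\sigma(t),\tau(t))$; in the language of Lemma \ref{lem:jct}, this is the upper curvature bound $\kt=0$ forcing geodesic variations to behave at least as in the Euclidean model.

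Next I would convert convexity into the angle inequality. Applying the estimate to the radial geodesics $\sigma(t)=\expmap_{\p}(t\,\invexp_{\p}\x)$ and $\tau(t)=\expmap_{\p}(t\,\invexp_{\p}\y)$, the function $L(t)\coloneqq d(\sigma(t),\tau(t))$ is convex with $L(0)=0$, so the secant slope $L(t)/t$ is non-decreasing. Since $d(\p,\sigma(t))=ta$ and $d(\p,\tau(t))=tb$, the law of cosines gives
\begin{equation*}
\cos\bar{\angle}_{\p}(\sigma(t),\tau(t))=\frac{a^2+b^2}{2ab}-\frac{1}{2ab}\lt(\frac{L(t)}{t}\rt)^2,
\end{equation*}
so $\bar{\angle}_{\p}(\sigma(t),\tau(t))$ is non-decreasing in $t$. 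Feeding the same convexity estimate into Alexandrov's monotonicity lemma \citep{bridson2013metric} upgrades this to monotonicity of the comparison angle as $\x$ and $\y$ recede from $\p$ along $[\p,\q]$ and $[\p,\r]$ separately; enlarging $a$ to $d(\p,\q)$ and then $b$ to $d(\p,\r)$ would yield $\bar{\angle}_{\p}(\x,\y)\le\bar{\angle}_{\p}(\q,\r)=\bar\alpha$, which is the desired bound.

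The hard part will be the thin-triangle (upper-bound) direction itself. The curvature bound, as it appears in Lemma \ref{lem:jct} and in the Hessian comparison for the distance functions, naturally produces only \emph{lower} bounds on how fast geodesics diverge---equivalently, the statement that $\expmap_{\p}$ is distance non-decreasing---and these pointwise estimates give at best the triangle inequality, never the sharper $\cos\bar\alpha$ term. The upper bound must instead be obtained globally, from the convexity of $t\mapsto d(\sigma(t),\tau(t))$ and the ensuing monotonicity of comparison angles; the delicate step is establishing that monotonicity in each side variable separately, so that the sub-triangle can be enlarged to the full triangle and the comparison angle driven up to $\bar\alpha$.
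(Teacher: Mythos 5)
The paper does not prove this lemma at all --- it is imported verbatim from \citet[Chapter II.1]{bridson2013metric} as a known consequence of the Cartan--Hadamard theorem (a Hadamard manifold is CAT$(0)$), so there is no internal proof to match you against. Your blind reconstruction is the standard textbook argument, and its skeleton is sound: the reduction of $d(\x,\y)\leq d_{\mathbb{E}}(\bx,\by)$ to the comparison-angle inequality $\bar{\angle}_{\p}(\x,\y)\leq\bar{\alpha}$ is exact, your second-derivative computation for $\|J\|$ correctly isolates the two non-negative terms (Cauchy--Schwarz and $\la R(J,\dot\gamma)\dot\gamma,J\ra\leq 0$), and the law-of-cosines manipulation showing that convexity of $L(t)=d(\sigma(t),\tau(t))$ with $L(0)=0$ forces $\bar{\angle}_{\p}(\sigma(t),\tau(t))$ to be non-decreasing under simultaneous radial scaling is correct.

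Two steps are compressed to the point of being gestures rather than proofs. First, ``integrating the convexity of $\|J\|$'' does not by itself yield convexity of $t\mapsto d(\sigma(t),\tau(t))$: convexity of $s\mapsto\|J_t(s)\|$ along each connecting geodesic bounds the variation field, not the derivative of the length; the rigorous bridge is the second variation of arc length (non-negativity of the index form $\int(\|D_sJ^\perp\|^2-\la R(J,\dot c)\dot c,J\ra)\,ds$ when both endpoint curves are geodesics), which uses the same two inputs you named but is a different computation. Second, the separate-side monotonicity you delegate to ``Alexandrov's monotonicity lemma'' is the genuinely delicate step, and what it actually consumes is more than plain convexity of the distance: fixing $\y$ and varying $\x$ along $[\p,\q]$, monotonicity of $\bar{\angle}_{\p}(\x,\y)$ reduces (after differentiating the law of cosines) to $\frac{d^2}{dt^2}\,\tfrac{1}{2}d(\sigma(t),\y)^2\geq 1$ for unit-speed $\sigma$, i.e., the $1$-strong convexity of $\tfrac{1}{2}d(\cdot,\y)^2$ --- precisely the integrated form of the paper's Lemma \ref{lem:cos2}; alternatively one runs Alexandrov's Euclidean gluing lemma on subdivided triangles, but neither mechanism is executed in your sketch. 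Finally, your closing claim that estimates of the Lemma \ref{lem:jct} type (``$\expmap_{\p}$ is distance non-decreasing'') can ``never'' reach the sharp $\cos\bar\alpha$ term is inaccurate: Bridson--Haefliger's own proof of the cited result derives CAT$(0)$ from exactly that property, since distance non-decrease of $\expmap_{\p}$ gives the vertex-angle comparison $\alpha\leq\bar\alpha$ at each vertex, and the vertex-angle condition at all three vertices is equivalent to the distance comparison via Alexandrov's lemma. This mischaracterization does not damage your own route, but it misidentifies where the difficulty lives: the missing work in both routes is the same Euclidean gluing argument you left uninstantiated.
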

\begin{lemma}\label{lem:sinh}
    The function $\frac{\sinh x}{x}$ is increasing for any $x\in[0,\infty]$.
\end{lemma}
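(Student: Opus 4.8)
The plan is to show that the derivative of $f(x) = \frac{\sinh x}{x}$ is non-negative on $(0,\infty)$ (the value at $x=0$ being understood as the limit $\lim_{x\to 0}\frac{\sinh x}{x}=1$). Differentiating via the quotient rule gives
\[
f'(x) = \frac{x\cosh x - \sinh x}{x^2},
\]
so the sign of $f'(x)$ is governed entirely by the numerator $g(x) := x\cosh x - \sinh x$. The whole task therefore reduces to establishing $g(x)\ge 0$ for all $x\ge 0$.

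To prove $g(x)\ge 0$, first I would note that $g(0) = 0\cdot\cosh 0 - \sinh 0 = 0$. Then I would differentiate: $g'(x) = \cosh x + x\sinh x - \cosh x = x\sinh x$. Since $\sinh x \ge 0$ for $x\ge 0$, we have $g'(x) = x\sinh x \ge 0$ on $[0,\infty)$, so $g$ is non-decreasing there. Combined with $g(0)=0$, this yields $g(x)\ge 0$ for every $x\ge 0$, and hence $f'(x)\ge 0$, so $f$ is increasing on $[0,\infty)$.

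An alternative, even cleaner route avoids differentiation altogether by using the power series $\sinh x = \sum_{n=0}^{\infty}\frac{x^{2n+1}}{(2n+1)!}$, which gives $\frac{\sinh x}{x} = \sum_{n=0}^{\infty}\frac{x^{2n}}{(2n+1)!}$. Each summand is a non-negative power of $x$ with a positive coefficient and is therefore non-decreasing in $x$ on $[0,\infty)$, so the whole series is non-decreasing (and in fact strictly increasing, as the $n=1$ term $\frac{x^2}{6}$ is strictly increasing). I would likely present the derivative argument as the main proof since it is the most standard and self-contained.

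There is essentially no genuine obstacle here; the only point requiring a moment's care is recognizing that the sign of $f'$ is controlled by the auxiliary function $g(x)=x\cosh x - \sinh x$ and that the telescoping cancellation $\cosh x - \cosh x$ in $g'(x)$ reduces everything to the manifestly non-negative quantity $x\sinh x$. Both the boundary value $g(0)=0$ and the monotonicity of $g$ are immediate once this is observed.
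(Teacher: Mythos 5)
Your main argument is correct and is essentially identical to the paper's proof: both differentiate via the quotient rule, reduce to showing $x\cosh x-\sinh x\geq 0$, and conclude from its value $0$ at $x=0$ together with its derivative $x\sinh x\geq 0$. Your alternative power-series argument is a valid (and arguably cleaner) bonus, but the proof you designate as primary matches the paper's.
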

\begin{proof}
    We denote $g(x)=\frac{\sinh x}{x}$, then
    \[
    g'(x)=\frac{x\cosh x-\sinh x}{x^2}.
    \]
    It suffices to show $h(x)\coloneqq x\cosh x-\sinh x\geq 0$ holds for any $x\in[0,\infty)$. This is obvious because $h(0)=0$ and $h'(x)=x\sinh x\geq 0$ for any $x\in[0,\infty)$. It remains to compute $g'(0)$. By L\'{H}\^{o}pital's rule,
    \[
    g'(0)=\lim_{x\rightarrow 0}\frac{x\cosh x-\sinh x}{x^2}=\lim_{x\rightarrow 0}\frac{x\sinh x}{2x}=0.
    \]
\end{proof}
\begin{lemma}\label{lem:lips}
    Let $f:\M\rightarrow\R$ be gsc-convex and $G$-Lipschitz on a gsc-convex set $\K\subseteq\M$. Then
    \[
    \hf(\x)\coloneqq\frac{1}{V_\d}\int_{\B_{\x}(\d)}f(\u)\omega
    \]
    satisfies
    \[
    |\hf(\x)-f(\x)|\leq G\d.
    \]
\end{lemma}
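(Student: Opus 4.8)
The plan is to exploit the normalization built into $\hf$ together with the $G$-Lipschitz hypothesis directly, so that the argument collapses to a one-line pointwise estimate on the integrand. First I would record the trivial but crucial identity $f(\x)=\frac{1}{V_\d}\int_{\B_\x(\d)}f(\x)\,\omega$, which holds because $f(\x)$ is constant with respect to the integration variable $\u$ and because $\int_{\B_\x(\d)}\omega=V_\d$ is precisely the volume of the geodesic ball over which we average. Subtracting this from the definition of $\hf$ yields
\[
\hf(\x)-f(\x)=\frac{1}{V_\d}\int_{\B_\x(\d)}\left(f(\u)-f(\x)\right)\omega.
\]

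Next I would estimate the integrand pointwise. For any $\u\in\B_\x(\d)$ the definition of the geodesic ball gives $d(\x,\u)\leq\d$, and the $G$-Lipschitzness of $f$ on $\K$ yields $|f(\u)-f(\x)|\leq G\,d(\x,\u)\leq G\d$. Moving the absolute value inside the integral via the triangle inequality and again using $\int_{\B_\x(\d)}\omega=V_\d$ then gives
\[
|\hf(\x)-f(\x)|\leq\frac{1}{V_\d}\int_{\B_\x(\d)}|f(\u)-f(\x)|\,\omega\leq\frac{1}{V_\d}\int_{\B_\x(\d)}G\d\,\omega=G\d,
\]
which is exactly the claim.

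There is essentially no serious obstacle here; the only point warranting care is the normalization $V_\d=\int_{\B_\x(\d)}\omega$. On a homogeneous Hadamard manifold the volume of a ball of fixed radius is independent of its center, so $V_\d$ is a well-defined constant and the average is a genuine probability average; in general one needs only that $V_\d$ equals the volume of the specific ball $\B_\x(\d)$ appearing in the definition of $\hf(\x)$, which is exactly how it enters the cancellation above. A secondary standing assumption is that $\B_\x(\d)\subseteq\K$, so that the $G$-Lipschitz bound is valid throughout the ball; this containment is ensured at every invocation of the lemma by the ball-containment results (e.g.\ Lemma \ref{lem:ball}), and I would flag it as a hypothesis rather than re-derive it here.
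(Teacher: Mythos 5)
Your proof is correct and follows essentially the same route as the paper's: both reduce the claim to averaging the pointwise Lipschitz bound $|f(\u)-f(\x)|\leq G\,d(\x,\u)\leq G\d$ over the neighborhood of $\x$. If anything, your version is slightly more faithful to the definition of $\hf$ as a ball average (you keep the integral over $\B_{\x}(\d)$ and move the absolute value inside), whereas the paper rewrites the difference as an expectation over unit tangent vectors $\v$ at radius exactly $\d$ before invoking Jensen's inequality and Lipschitzness — but the underlying argument is identical.
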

\begin{proof}
    We have
    \begin{equation}
    \begin{split} 
        |\hf(\x)-f(\x)|=&\lt|\E_{\|\v\|=1}\lt[f(\expmap_{\x}(\d\v)-f(\x)\rt]\rt|\\
        \leq& \E_{\|\v\|=1}\lt[f(\expmap_{\x}(\d\v))-f(\x) \rt]\\
        \leq&\E_{\|\v\|=1}\lt[G \|\d\v\|\rt]\leq G\d.
    \end{split}
    \end{equation}
    where $\v\in T_{\x}\M$. The first inequality  is due to Jensen's inequality, and the second one is by the gradient Lipschitzness of $f$.
\end{proof}

\begin{defn}\label{def:volume}
    Given a homogeneous Riemannian manifold $\M$, we use $S_\d$ and $V_\d$ to denote the surface area and the volume of a geodesic ball with radius $\d$.
\end{defn}
\begin{remark}
We note that on homogeneous Riemannian manifolds, the volume and the area of a geodesic ball only depend on its radius {\citep{kowalski1982ball}}, so $S_\d$ and $V_\d$ are well-defined.
\end{remark}

\begin{lemma}\citep[][Lemmas 11 and 13]{wang2023online}\label{lem:wang}
For a homogeneous Hadamard manifold $(\M,g)$ with sectional curvature lower bounded by $\k\leq 0$, $f$ be a $C^1$ and gsc-convex function defined on $\M$ such that $|f|\leq C$ for any $x\in\M$. We define 
\[
\hf(\x)=\frac{1}{V_\d}\int_{\B_{\x}(\d)}f(\u)\omega
\]
where $\omega$ is the volume element, and
\[
\g(\u)=f(\u)\frac{\invexp_{\x}(\u)}{\|\invexp_{\x}(\u)\|}.
\]
is a corresponding gradient estimator. Then we have
\begin{enumerate}[label=\arabic*),leftmargin=*]
\item $\frac{S_\d}{V_\d}\g(\u)$ is an unbiased estimator of $\nabla \hf(\x)$. More specifically, for any $\x\in\M$,
\[
\E_{\u\in\S_{\x}(\d)}\lt[\frac{S_\d}{V_\d}\g(\u)\Big|\x\rt]=\nabla\hf(\x).
\]
\item The expected norm of $\frac{S_\d}{V_\d}\g(\u)$ can be bounded by
\[
\E_{\u\in\S_{\x}(\d)}\lt[\lt\|\frac{S_\d}{V_\d}\g(\u)\Big|\x\rt\|\rt]\leq\frac{S_\d}{V_\d}C\leq\lt(\frac{n}{\d}+n|\k|\d\rt)C.
\]
\item Suppose $\K\in\M$ is a gsc-convex set and $\|\nabla f(\x)\|\leq G$, then $\hf$ satisfies
\[
\hf(\y)-\hf(\x)-\la\nabla \hf(\x),\invexp_{\x}(\y)\ra\geq -2\rho\d G,
\]
where $\rho$ solely depends on the set $\K$.
\end{enumerate}
\end{lemma}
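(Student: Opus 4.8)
The plan is to handle the three items in turn, using homogeneity of $\M$ throughout so that $S_\d$ and $V_\d$ are genuine constants (independent of center) and so that geodesic balls about different points are related by isometries. For the unbiasedness in item~1, I would differentiate $\hf(\x)=\frac{1}{V_\d}\int_{\B_\x(\d)}f\,\omega$ with respect to the center $\x$. This can be justified either by a transport theorem for the moving domain $\B_\x(\d)$, or equivalently by pulling the integral back to the tangent space via $\expmap_\x$, where homogeneity makes the volume density a function of $\|\v\|$ alone and hence independent of the center. Either route reduces the directional derivative $\la\nabla\hf(\x),w\ra$ to a flux integral of $f$ over the geodesic sphere $\S_\x(\d)$. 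By Gauss's Lemma the outward unit normal at $\u\in\S_\x(\d)$ is exactly $\invexp_\x(\u)/\|\invexp_\x(\u)\|$, which is the directional factor of $\g(\u)$. Rewriting the surface integral (total mass $S_\d$) as an expectation against the uniform law on $\S_\x(\d)$ produces the prefactor $S_\d/V_\d$ and yields $\nabla\hf(\x)=\frac{S_\d}{V_\d}\E_{\u}[\g(\u)]$.

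Item~2's first inequality is immediate: the directional factor of $\g(\u)$ is a unit vector, so $\|\g(\u)\|=|f(\u)|\le C$, whence $\|\frac{S_\d}{V_\d}\g(\u)\|\le\frac{S_\d}{V_\d}C$ pointwise and therefore in expectation. For the second inequality I would invoke volume comparison. Because the sectional curvature is bounded below by $\k\le0$, Bishop--Gromov monotonicity gives that $t\mapsto S_t/S_t^{\k}$ and $t\mapsto V_t/V_t^{\k}$ are non-increasing (where the superscript $\k$ denotes the constant-curvature-$\k$ model), and since $V_t$ is the radial integral of $S_t$ one deduces $S_\d/S_\d^{\k}\le V_\d/V_\d^{\k}$, i.e. $S_\d/V_\d\le S_\d^{\k}/V_\d^{\k}$. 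Evaluating the model with radial density $(\sinh(\sqrt{-\k}t)/\sqrt{-\k})^{n-1}$ and applying elementary bounds on $\sinh$ and $\coth$ then converts the model ratio into $\frac{n}{\d}+n|\k|\d$.

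For item~3 I would pair the two averaging domains by an isometry $\phi$ sending $\x$ to $\y$, available by homogeneity, and set $\u'=\phi(\u)$ for $\u\in\B_\x(\d)$. Applying gsc-convexity of $f$ at each matched pair, $f(\u')\ge f(\u)+\la\nabla f(\u),\invexp_\u\u'\ra$, and integrating over $\B_\x(\d)$ reconstructs $\hf(\y)-\hf(\x)$ on the left, while the first-order term—through item~1—should reproduce $\la\nabla\hf(\x),\invexp_\x\y\ra$ on the right, up to the discrepancy between $\invexp_\u(\u')$ and the parallel transport of $\invexp_\x\y$ to $\u$. This discrepancy is the sole source of error; it is a curvature-driven second-order effect, controlled uniformly over $\B_\x(\d)$ by Jacobi-field comparison and of size $O(\d)$. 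Multiplying by the gradient bound $G$ and absorbing the geometry of $\K$ (its diameter and the curvature bound) into a single constant $\rho$ gives total error at most $2\rho\d G$.

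The main obstacle I anticipate is item~3: making the pairing $\u\mapsto\u'$ precise and quantifying the gap between $\invexp_\u(\u')$ and the transported displacement, since this is exactly where curvature prevents $\hf$ from being genuinely gsc-convex and where $\rho$ must be extracted. Item~1's differentiation step also requires care in justifying the passage to a boundary flux and identifying the normal, but homogeneity together with Gauss's Lemma make this routine once the setup is fixed, and item~2 is then largely a comparison-geometry computation.
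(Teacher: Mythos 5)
A point of order first: the paper contains no proof of this lemma --- it is imported verbatim from \citet{wang2023online} (their Lemmas 11 and 13), so your proposal can only be judged against that cited argument and the paper's surrounding remarks. On that basis, your items 1) and 2) are correct and completable as sketched. For 1), your primary route (a transport theorem for the moving ball, reducing the directional derivative to a boundary flux, with the outward normal identified as $\invexp_\x(\u)/\|\invexp_\x(\u)\|$ by Gauss's Lemma) is sound: by the first variation of distance, the normal velocity of $\S_{\gamma(t)}(\d)$ at $\u$ is $\la\dot\gamma(0),\invexp_\x\u\ra/\d$, and homogeneity is needed only to make $V_\d$ and $S_\d$ center-free. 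One caution: your alternative pullback justification overstates homogeneity --- the volume density $\theta_\x(\v)$ being a function of $\|\v\|$ alone requires isotropy (two-point homogeneity), not mere homogeneity --- so stick with the flux route. For 2), the Bishop--Gromov reduction works (sectional curvature $\geq\k$ gives Ricci $\geq(n-1)\k$), and the model computation closes: with $c=\sqrt{-\k}$, $\int_0^\d\sinh^{n-1}(ct)\,dt\geq\frac{\sinh^n(c\d)}{nc\cosh(c\d)}$ gives $S_\d/V_\d\leq nc\coth(c\d)\leq n/\d+n|\k|\d$.

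Item 3) is where your sketch is thinnest, and --- as you yourself flag --- it is the entire content of the lemma. Your isometry pairing is indeed the right skeleton: the paper's own remark after the lemma discloses that $\rho=\sup_{\x,\y,\u\in\K}\bigl\|\frac{1}{\sqrt{G}}\partial_i\bigl(\sqrt{G}\,(\invexp_\u\phi(\u))^i\bigr)\bigr\|$ subject to $\phi(\x)=\y$, i.e., a uniform bound on the \emph{divergence} of the displacement field $X(\u)=\invexp_\u\phi(\u)$. But this reveals that the cited proof closes the gap by integration by parts (the divergence theorem applied to $f\,X$ over $\B_\x(\d)$, whose boundary term matches the sphere formula from item 1), not by your route of comparing $X(\u)$ pointwise with the parallel transport of $\invexp_\x\y$. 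Your route has an unaddressed hole: even after replacing $X(\u)$ by $\G_\x^\u\invexp_\x\y$, the ball average $\frac{1}{V_\d}\int_{\B_\x(\d)}\la\nabla f(\u),\G_\x^\u\invexp_\x\y\ra\,\omega$ is \emph{not} equal to $\la\nabla\hf(\x),\invexp_\x\y\ra$ on a curved space (gradient-of-average differs from average-of-transported-gradients), and item 1 only supplies a sphere formula; bridging the two is again a divergence-type correction, which is precisely what $\rho$ quantifies. Finally, calling the error a ``curvature-driven second-order effect of size $O(\d)$'' is misleading about the constant: the same remark notes $\rho$ can be $O(e^D)$ even on a two-dimensional Poincar\'{e} disk, so the bound is $O(\d)$ in $\d$ but with a set-dependent factor that can be exponential in the diameter --- any Jacobi-field argument you supply must produce exactly this kind of constant, not a universal one.
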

\begin{remark}
    From the proof of Lemma \ref{lem:wang}, we can figure out the formal definition of $\rho$ is
    \[
    \rho=\sup _{\x, \y, \u \in \mathcal{K}}\left\|\frac{1}{\sqrt{G}} \frac{\partial}{\partial x_i}\left(\sqrt{G} \invexp _{\u} \phi(\u)\right)^i\right\| \quad \text { s.t. } \quad \phi(\x)=\y.
    \]
    By \citet{wang2023online}, this quantity can be $O\lt(e^D\rt)$ even on a $2$-dimensional Poincar\'{e} disk.
\end{remark}

\begin{algorithm2e}[H]
\caption{Riemannian Frank-Wolfe with line-search}\label{alg:rfw}
\KwData{feasible gsc-convex set $\K$, initial point $\x_0\in\K$, gsc-convex objective $f(\cdot)$.}
\For{$i=0,\dots$}{
    $\v_i=\argmin_{\x\in\K}\{\la\nabla f(\x_i),\invexp_{\x_i}\x\ra\}$\\
    $\sigma_i=\argmin_{\sigma\in[0,1]}\{f(\expmap_{\x_i}(\sigma\invexp_{\x_i}\v_i))\}$\\
    $\x_{i+1}=\expmap_{\x_i}(\sigma_i \invexp_{\x_i}\v_i)$\\
}
\end{algorithm2e}
\begin{lemma}\citep[][Theorem 1]{weber2022riemannian}\label{lem:prim}
    Suppose the diameter of the gsc-convex set $\K\subseteq\M$ is upper bounded by $D$ and $f$ is gsc-convex as well as $L$ gsc-smooth on $\K$. With a \loo in hand, the primal convergence rate of Algorithm \ref{alg:rfw} can be described by
    \[
    f(\x_k)-f(\x^*)\leq \frac{2LD^2}{k+2}.
    \]
\end{lemma}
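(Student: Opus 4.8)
The plan is to mirror the classical Euclidean Frank--Wolfe convergence argument, exploiting the fact that gsc-smoothness and gsc-convexity supply exactly the Riemannian analogues of the quadratic upper bound and the linear lower bound used in the Euclidean proof. First I would set $\y_i(\sigma)=\expmap_{\x_i}(\sigma\invexp_{\x_i}\v_i)$ and apply $L$-gsc-smoothness at the pair $(\x_i,\y_i(\sigma))$ to obtain
\[
f(\y_i(\sigma))\leq f(\x_i)+\la\nabla f(\x_i),\invexp_{\x_i}\y_i(\sigma)\ra+\frac{L}{2}d(\x_i,\y_i(\sigma))^2.
\]
The two Riemannian simplifications needed here are $\invexp_{\x_i}\y_i(\sigma)=\sigma\invexp_{\x_i}\v_i$ and $d(\x_i,\y_i(\sigma))=\sigma\,d(\x_i,\v_i)\leq\sigma D$, both of which follow from the constant-speed geodesic parametrization built into the exponential map. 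This yields the per-step inequality
\[
f(\y_i(\sigma))\leq f(\x_i)+\sigma\la\nabla f(\x_i),\invexp_{\x_i}\v_i\ra+\frac{L\sigma^2 D^2}{2}.
\]

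Next I would invoke the two defining properties of the iterate. The linear-optimization step forces $\la\nabla f(\x_i),\invexp_{\x_i}\v_i\ra\leq\la\nabla f(\x_i),\invexp_{\x_i}\x^*\ra$, and gsc-convexity of $f$ gives $\la\nabla f(\x_i),\invexp_{\x_i}\x^*\ra\leq f(\x^*)-f(\x_i)$. Combining these with the inequality above, and using that line search makes $f(\x_{i+1})\leq f(\y_i(\sigma))$ for every admissible $\sigma\in[0,1]$, I obtain, for any $\sigma\in[0,1]$,
\[
f(\x_{i+1})-f(\x^*)\leq(1-\sigma)\big(f(\x_i)-f(\x^*)\big)+\frac{L\sigma^2 D^2}{2}.
\]
Writing $h_i\coloneqq f(\x_i)-f(\x^*)\geq0$, this is the standard recursion $h_{i+1}\leq(1-\sigma)h_i+\tfrac{1}{2}L\sigma^2 D^2$. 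I would close the argument by induction on $k$ with the canonical choice $\sigma=\tfrac{2}{i+2}$: the base case $k=1$ follows from $\sigma_0=1$, which gives $h_1\leq\tfrac12 LD^2\leq\tfrac{2LD^2}{3}$, and the inductive step reduces to the elementary inequality $(i+1)(i+3)\leq(i+2)^2$, delivering $h_k\leq\frac{2LD^2}{k+2}$.

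I do not anticipate a serious obstacle, and this is precisely why RFW is attractive on manifolds: the definition of gsc-smoothness is tailored so that no Jacobi-field comparison or curvature-distortion factor $\zeta$ enters the descent estimate, in contrast to the projection-based arguments elsewhere in the paper. The only points requiring genuine care are the identities $\invexp_{\x_i}\y_i(\sigma)=\sigma\invexp_{\x_i}\v_i$ and $d(\x_i,\y_i(\sigma))=\sigma\,d(\x_i,\v_i)$, which must be derived from the geodesic structure rather than taken for granted, and the observation that the line search can only improve on any fixed step size, so that substituting $\sigma=2/(i+2)$ into the line-searched iterate remains a valid upper bound.
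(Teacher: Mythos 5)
Your proof is correct, but note that the paper itself does not prove this lemma at all: it is imported verbatim as Theorem 1 of \citet{weber2022riemannian}, so there is no in-paper argument to compare against. What you have written is a faithful, self-contained reconstruction of the standard Riemannian Frank--Wolfe analysis from that reference. In particular, your per-step inequality
\[
f(\y_i(\sigma))\leq f(\x_i)+\sigma\la\nabla f(\x_i),\invexp_{\x_i}\v_i\ra+\frac{L\sigma^2 D^2}{2}
\]
is exactly Lemma \ref{lem:auxfw} (which the paper also cites from the same source, and reuses inside the proof of Lemma \ref{lem:dual}), and the remainder --- optimality of $\v_i$ over $\K$, gsc-convexity at $\x^*$, monotonicity of line search, and the induction with $\sigma=2/(i+2)$ via $(i+1)(i+3)\leq(i+2)^2$ --- is the standard curve of the argument. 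Two small points of hygiene: the identities $\invexp_{\x_i}\y_i(\sigma)=\sigma\invexp_{\x_i}\v_i$ and $d(\x_i,\y_i(\sigma))=\sigma d(\x_i,\v_i)$, which you rightly flag as needing justification, rely on geodesic uniqueness and the exponential map being a global diffeomorphism, i.e.\ on the Hadamard structure of Assumption \ref{ass:hada}; they would require care on more general manifolds. Also, your phrase ``the base case follows from $\sigma_0=1$'' should be read as plugging $\sigma=1$ into the recursion (valid because line search dominates every fixed step size), not as a claim about which $\sigma_0$ the algorithm actually selects. Your closing observation is also accurate and worth keeping: no curvature-distortion factor $\zeta$ enters this particular proof; curvature only re-enters in the paper's applications, where the objective $\frac{1}{2}d(\cdot,\y)^2$ is merely $\zeta$-gsc-smooth (Lemma \ref{lem:smooth}), so that $L=\zeta$ when the lemma is invoked.
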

\begin{lemma}\citep[Lemma 2]{weber2022riemannian}\label{lem:auxfw}
    Under the same assumptions as in Lemma \ref{lem:prim}, for a step $\x_{k+1}=\expmap_{\x_k}(\eta\invexp_{\x_k}\v)$ with $\eta\in[0,1]$,
    \[
    f(\x_{k+1})\leq f(\x_k)+\eta\la\nabla f(\x_k),\invexp_{\x_k}\v\ra+\frac{\eta^2LD^2}{2}.
    \]
\end{lemma}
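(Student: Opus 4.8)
The plan is to obtain the inequality directly from the $L$ gsc-smoothness of $f$. Under the assumptions inherited from Lemma \ref{lem:prim}, $L$ gsc-smoothness supplies the quadratic upper bound
\[
f(\y)\leq f(\x)+\la\nabla f(\x),\invexp_{\x}\y\ra+\frac{L}{2}d(\x,\y)^2
\]
valid for all $\x,\y\in\K$. I would instantiate this at $\x=\x_k$ and $\y=\x_{k+1}=\expmap_{\x_k}(\eta\invexp_{\x_k}\v)$, so that the entire argument reduces to simplifying the two resulting terms $\invexp_{\x_k}\x_{k+1}$ and $d(\x_k,\x_{k+1})^2$.

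The key observation is the geodesic scaling identity $\invexp_{\x_k}\x_{k+1}=\eta\invexp_{\x_k}\v$. Indeed, writing $\gamma(s)=\expmap_{\x_k}(s\invexp_{\x_k}\v)$, we have $\gamma(0)=\x_k$, $\gamma(1)=\v$, and $\x_{k+1}=\gamma(\eta)$, so $\x_{k+1}$ lies on the geodesic from $\x_k$ toward $\v$ at parameter $\eta$. Because $\M$ is Hadamard, the minimizing geodesic between any two points is unique, and hence the initial velocity of the geodesic from $\x_k$ to $\x_{k+1}$ is exactly $\eta\invexp_{\x_k}\v$. Substituting this into the linear term gives $\la\nabla f(\x_k),\invexp_{\x_k}\x_{k+1}\ra=\eta\la\nabla f(\x_k),\invexp_{\x_k}\v\ra$, which is precisely the middle term of the claim.

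For the quadratic term I would use $d(\x_k,\x_{k+1})=\|\invexp_{\x_k}\x_{k+1}\|=\eta\|\invexp_{\x_k}\v\|=\eta\,d(\x_k,\v)$. Since $\x_k$ is the current iterate and $\v$ is the output of the linear optimization oracle, both lie in $\K$; hence $d(\x_k,\v)\leq D$ by the diameter bound, giving $d(\x_k,\x_{k+1})^2\leq\eta^2D^2$. Plugging both simplifications into the smoothness inequality yields exactly
\[
f(\x_{k+1})\leq f(\x_k)+\eta\la\nabla f(\x_k),\invexp_{\x_k}\v\ra+\frac{\eta^2LD^2}{2}.
\]

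I do not anticipate a serious obstacle: this is the manifold analogue of the standard Frank--Wolfe descent estimate, and it is a one-line consequence of gsc-smoothness once the scaling identity is in hand. The only points requiring mild care are confirming that the quadratic form of gsc-smoothness is available on all of $\K$ (rather than merely the Lipschitz-gradient form), and noting that geodesic convexity of $\K$ together with $\eta\in[0,1]$ keeps $\x_{k+1}$ on the geodesic segment inside $\K$, so that the diameter bound $d(\x_k,\v)\leq D$ legitimately applies.
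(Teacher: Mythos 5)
Your proof is correct. The paper does not prove this lemma itself---it imports it by citation as Lemma 2 of \citet{weber2022riemannian}---and your argument (apply the quadratic $L$ gsc-smoothness bound at $\y=\x_{k+1}$, then use $\invexp_{\x_k}\x_{k+1}=\eta\invexp_{\x_k}\v$ on the Hadamard manifold and $d(\x_k,\x_{k+1})=\eta\, d(\x_k,\v)\leq\eta D$, with $\x_{k+1}\in\K$ by gsc-convexity) is exactly the standard derivation used in that source.
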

\begin{lemma}\label{lem:dual}
    Under the same assumptions as in Lemma \ref{lem:prim}, denote
    \[
    g(\x)\coloneqq\max_{\v\in\K}\la-\invexp_{\x}\v,\nabla f(\x)\ra.
    \]
    If Algorithm \ref{alg:rfw} is run for $K\geq 2$ rounds, then there exists $k_*$ such that $1\leq k_*\leq K$ and
    \[
    g(\x_{k_*})\leq \frac{27LD^2}{4(K+2)}.
    \]
\end{lemma}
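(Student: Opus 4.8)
The plan is to reduce everything to the two facts already in hand—the per-step descent inequality of Lemma \ref{lem:auxfw} and the $O(1/k)$ primal rate of Lemma \ref{lem:prim}—so that the Riemannian geometry never has to reappear explicitly. All the curvature bookkeeping has already been absorbed into those two lemmas (through the parameters $L$ and $D$), and what remains is the classical Frank--Wolfe duality-gap extraction argument.

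First I would record the identity that the line-search direction realizes the gap. Since $\v_i=\argmin_{\x\in\K}\la\nabla f(\x_i),\invexp_{\x_i}\x\ra$, we have $\la\nabla f(\x_i),\invexp_{\x_i}\v_i\ra=-g(\x_i)$. Writing $h_k:=f(\x_k)-f(\x^*)$ and feeding $\v=\v_i$ into Lemma \ref{lem:auxfw}, the line-search optimality of $\sigma_i$ (which can only do better than any fixed step) gives, for every $\eta\in[0,1]$,
\begin{equation*}
h_{i+1}\le h_i-\eta\,g(\x_i)+\frac{\eta^2 LD^2}{2}.
\end{equation*}
This single recurrence, valid for a \emph{free} $\eta$, is the engine of the proof: it lets me trade the gap against the primal decrease.

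Next I would run a windowed averaging argument (equivalently, a contradiction argument assuming every gap exceeds the target). Rearranging to $\eta\,g(\x_t)\le h_t-h_{t+1}+\tfrac12\eta^2 LD^2$ and summing over a tail window $t\in\{k_0,\dots,K\}$, the $h_t$ terms telescope, the nonnegative $h_{K+1}$ is discarded, and $h_{k_0}$ is controlled by $h_{k_0}\le\frac{2LD^2}{k_0+2}$ from Lemma \ref{lem:prim}. Bounding the left side below by $(K-k_0+1)\min_{k_0\le t\le K}g(\x_t)$ yields
\begin{equation*}
\min_{k_0\le t\le K}g(\x_t)\le\frac{h_{k_0}}{\eta(K-k_0+1)}+\frac{\eta LD^2}{2}.
\end{equation*}
Taking $k_0$ proportional to $K$ and choosing the step parameter $\eta$ appropriately makes the right-hand side of order $LD^2/(K+2)$, and the index attaining the minimum is the desired $k_*\in\{1,\dots,K\}$. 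The explicit constant $\tfrac{27}{4}$ is the classical Frank--Wolfe duality-gap value (reflecting the extremal identity $\max_{\gamma\in[0,1]}\gamma(1-\gamma)^2=\tfrac{4}{27}$), which drops out of the particular admissible window/step choice dictated by the $\tfrac{2}{k+2}$ normalization of Lemma \ref{lem:prim}.

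The work here is essentially bookkeeping, and I expect no conceptual obstacle of a Riemannian nature: $\v_i$ still solves a genuine linear-optimization problem in the tangent space, and Lemma \ref{lem:auxfw} already encodes the manifold's gsc-smoothness distortion, so the argument is formally identical to its Euclidean analogue. The only point requiring care is the constant-tracking—landing on exactly $\tfrac{27}{4}$ rather than the slightly sharper value a fully optimized window would give—so I would be deliberate in using the same (non-optimal but clean) window endpoint and step-size throughout. The hypothesis $K\ge 2$ enters only to guarantee that the summation window $\{k_0,\dots,K\}$ is nonempty.
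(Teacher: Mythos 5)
Your proposal is correct, but it follows a recognizably different route from the paper's. The paper adapts Jaggi's contradiction argument: it assumes the gap exceeds $\frac{27LD^2}{4(K+2)}$ on the entire last third of the iterations, feeds the decreasing comparison step $\eta_k=\frac{2}{k+2}$ into Lemma \ref{lem:auxfw}, and tunes constants ($\alpha=\frac{2}{3}$, $b=\frac{27}{8}$, chosen so that $(1-\alpha)\left(2\alpha b-\frac{1}{\alpha}\right)=1$) until it concludes $h_{K+1}<0$, a contradiction; the bracketing $\alpha d\le k+2\le d$ is needed to compare the varying steps across the window. You instead sum the descent recurrence with a \emph{fixed} comparison step $\eta$ over a tail window and read off the minimum gap directly. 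This is the contrapositive in spirit but mechanically cleaner: the telescoping is exact, no step-size bracketing is needed, and optimizing $\eta$ at the end replaces the paper's delicate constant tuning. Carrying out the computation you defer: taking $k_0$ so that $k_0+2\ge\frac{K+2}{2}$ and the window length satisfies $K-k_0+1\ge\frac{K+2}{2}$, your displayed bound becomes
\begin{equation*}
\min_{k_0\le t\le K} g(\x_t)\;\le\;\frac{8LD^2}{\eta (K+2)^2}+\frac{\eta LD^2}{2},
\end{equation*}
and the choice $\eta=\frac{4}{K+2}$ gives $\frac{4LD^2}{K+2}$, which is sharper than the claimed $\frac{27LD^2}{4(K+2)}$, so the lemma follows a fortiori — your route actually buys a better constant, while the paper's buys a literal transcription of the classical Euclidean proof. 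Two small points of care: (i) ``$k_0$ proportional to $K$'' must be clipped to $k_0\ge 1$ (for $K=2$ the half-way choice would put $k_0=0$, while the lemma requires $k_*\ge 1$); with $k_0=\max\left\{1,\left\lceil\frac{K+2}{2}\right\rceil-2\right\}$ both window conditions above still hold; (ii) contrary to your closing remark, $K\ge 2$ is needed not only for a nonempty window but also to ensure the optimized step $\eta=\frac{4}{K+2}$ lies in $[0,1]$, so that Lemma \ref{lem:auxfw} and the line-search comparison apply.
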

\begin{proof}
    The proof is an adaptation of \citet[Theorem 2]{jaggi2013revisiting}. We first assume the duality gap is large in the last one third of $K$ iterations, then get a contradictory to prove conclusion. For simplicity, we define $h_k\coloneqq f(\x_k)-f(\x^*)$ and $g_k\coloneqq g(\x_k)$.

    To ease the elaboration, we denote $C\coloneqq 2LD^2$, $\af\coloneqq\frac{2}{3}$, $b\coloneqq\frac{27}{8}$ and $d\coloneqq K+2$. We assume $g_k>\frac{bC}{K+2}=\frac{bC}{d}$ holds for the last one third of the $K$ iterations. More specifically, 
    \[
    g_k>\frac{bC}{d}\quad \text{for}\quad k\in\{\lc \af d\rc-2,\dots,K\}.
    \]
    By Lemma \ref{lem:auxfw} with $\eta=\frac{1}{k+2}$, we have
    \begin{equation}
        \begin{split}
            h_{k+1}\leq& h_k-\frac{2}{k+2}g_k+\frac{2LD^2}{(k+2)^2}\\
            =& h_k-\frac{2}{k+2}g_k+\frac{C}{(k+2)^2}
        \end{split}
    \end{equation}
    Plugging in the assumption that the duality gap is large, we arrive at
    \begin{equation}\label{eq:auxfw1}
        \begin{split}
            h_{k+1}&<h_k-\frac{2}{k+2}\frac{bC}{d}+\frac{C}{(k+2)^2}\\
            &\leq h_k-\frac{2bC}{d^2}+\frac{C}{\af^2d^2}\\
            &=h_k-\frac{2bC-C/\af^2}{d^2}
        \end{split}
    \end{equation}
    for all $k=\lc \af d\rc-2,\dots,K$, when the second inequality follows from $\lc \af d\rc-2\leq k\leq K$ implies $\af d\leq k+2\leq d$.

    Denote $k_{min}\coloneqq\lc\af d\rc-2$ and $h_{min}=h_{k_{min}}$, then there are $K-k_{min}+1=K-(\lc\af d\rc-2)+1\geq (1-\af)d$ steps in the last one third iterations. Thus summing Equation \eqref{eq:auxfw1} from $k=\lc\af d\rc-2$ to $K$, we have
    \begin{equation}\label{eq:auxfw2}
        \begin{split}
            h_{K+1}&<h_{min}-(1-\af)d\frac{2bC-C/\af^2}{d^2}\\
            &\leq \frac{C}{\af d}-(1-\af)d\frac{2\af b-1/\af}{d}\cdot \frac{C}{\af d}\\
            &=\frac{C}{\af d}\lt(1-(1-\af)\cdot({2\af b-1/\af})\rt)\\
            &\leq 0,
        \end{split}
    \end{equation}
    where $h_{min}\leq\frac{C}{\alpha d}$ is due to Lemma \ref{lem:prim}, while for the last step, we plug in the values $\af=\frac{2}{3}$ and $b=\frac{27}{8}$. However, $h_{K+1}<0$ is impossible because $h_{K+1}=f(\x_{K+1})-f(\x^*)$. Thus, there always exists $k_*$ in the last one third iterations of $K$ such that 
    \[
    g(\x_{k_*})\leq \frac{bC}{d}=\frac{\frac{27}{8}\cdot 2LD^2}{K+2}=\frac{27LD^2}{4(K+2)}.
    \]
\end{proof}
\begin{lemma}\cite[Prop. H.1]{ahn2020nesterov}\label{lem:smooth}
Let $\M$ be a Riemannian manifold with sectional curvatures lower bounded by $\kappa\leq 0$ and the distance function $d(\x)=\frac{1}{2}d(\x,\p)^2$ where $\p\in\M$. For $D\geq 0$, $d(\cdot)$ is $\zeta$ gsc-smooth within the domain $\{\u\in\M:d(\u, \p)\leq D\}$ where $\zeta=\sqrt{-\kappa} D \operatorname{coth}(\sqrt{-\kappa} D)$ is the geometric constant defined in Definition \ref{def:zeta}.
\end{lemma}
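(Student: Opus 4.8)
The plan is to establish the equivalent differential characterization that the Riemannian Hessian of $f(\x)=\tfrac12 d(\x,\p)^2$ satisfies $\rh f\preceq\zeta\,g$ everywhere on the geodesic ball $\B_\p(D)=\{\u:d(\u,\p)\le D\}$, and then integrate this bound along the minimizing geodesic joining any two points $\x,\y$ of the ball to recover the quadratic upper bound $f(\y)\le f(\x)+\la\nabla f(\x),\invexp_\x\y\ra+\tfrac\zeta2 d(\x,\y)^2$ that defines $\zeta$-gsc-smoothness. On a Hadamard manifold the ball $\B_\p(D)$ is gsc-convex and the distance function is globally smooth, so the connecting geodesic remains inside the domain and the integrated bound is legitimate. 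I would start from the standard gradient expression $\nabla f(\x)=-\invexp_\x\p=r(\x)\,\partial_r$, where $r(\x)=d(\x,\p)$ and $\partial_r$ is the outward unit radial field emanating from $\p$.

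The core computation is the Hessian of $f$. Writing $f=\tfrac12 r^2$ and differentiating $\nabla f=r\,\nabla r$ gives, for any $W\in T_\x\M$,
\[
\rh f(W,W)=(Wr)^2+r\,\rh r(W,W).
\]
Next I decompose $W=a\,\partial_r+V$ with $V\perp\partial_r$. Since $r$ is affine along radial geodesics, $\partial_r$ lies in the kernel of $\rh r$, so that $\rh r(W,W)=\rh r(V,V)$ and $(Wr)^2=\la\partial_r,W\ra^2=a^2$. This yields the clean expression
\[
\rh f(W,W)=a^2+r\,\rh r(V,V),
\]
reducing the entire statement to an upper bound on the orthogonal part of the Hessian of the distance function.

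The main obstacle — and the only place the curvature lower bound is used — is bounding $\rh r(V,V)$. Here I would invoke the Hessian/Jacobi-field comparison: letting $J$ be the Jacobi field along the unit-speed radial geodesic from $\p$ to $\x$ with $J(0)=0$ and $J(r)=V$, one has $\rh r(V,V)=\la D_tJ(r),V\ra$, and the hypothesis that sectional curvatures are bounded below by $\k\le 0$ forces this quantity to be dominated by its value in the constant-curvature-$\k$ model space, giving $\rh r(V,V)\le\sqrt{-\k}\coth(\sqrt{-\k}\,r)\,\|V\|^2$. This is exactly the comparison underlying the normal Jacobi-field estimates of Lemma \ref{lem:jct}, and carrying it out carefully (via the index form / Rauch comparison for the normal Jacobi field) is the technical heart of the argument. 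Substituting back gives
\[
\rh f(W,W)\le a^2+\sqrt{-\k}\,r\coth(\sqrt{-\k}\,r)\,\|V\|^2.
\]

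To finish, I would verify the two elementary facts needed to fold both terms into $\zeta$. First, $t\mapsto t\coth t$ is nondecreasing on $[0,\infty)$, since the numerator of its derivative is $\tfrac12\sinh(2t)-t\ge 0$; combined with $r\le D$ this gives $\sqrt{-\k}\,r\coth(\sqrt{-\k}\,r)\le\sqrt{-\k}\,D\coth(\sqrt{-\k}\,D)=\zeta$. Second, $t\coth t\ge 1$, so $\zeta\ge 1$ and the radial term obeys $a^2\le\zeta a^2$. Together these bound
\[
\rh f(W,W)\le\zeta a^2+\zeta\|V\|^2=\zeta\|W\|^2,
\]
i.e. $\rh f\preceq\zeta\,g$ on $\B_\p(D)$, which integrates along geodesics to the claimed $\zeta$-gsc-smoothness of $f$.
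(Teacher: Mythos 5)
Your proof is correct, but note that the paper does not actually prove Lemma \ref{lem:smooth}: it is imported verbatim, with a citation, from \cite[Prop.~H.1]{ahn2020nesterov}. Measured against the standard proof of that cited result, your route is essentially the classical one: write $f=\frac{1}{2}r^2$ with $r=d(\cdot,\p)$, split $\rh f(W,W)=(Wr)^2+r\,\rh r(V,V)$ into radial and spherical parts via the Gauss lemma, bound the spherical part by the Hessian comparison theorem $\rh r(V,V)\leq\sqrt{-\kappa}\coth(\sqrt{-\kappa}\,r)\|V\|^2$ under the lower curvature bound, use monotonicity of $t\coth t$, and integrate along the connecting geodesic; all steps check out. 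One precision: the Hessian comparison is the Riccati/index-form statement, which is pointwise stronger than the norm bound of Lemma \ref{lem:jct}, so citing Lemma \ref{lem:jct} as a black box would not suffice --- you are right to identify carrying out that comparison as the technical heart. Worth recording is that there is also a one-line alternative using material already in the paper: apply Lemma \ref{lem:cos1} to the geodesic triangle $(\p,\x,\y)$ with $b=d(\x,\y)$, $c=d(\x,\p)$, $a=d(\y,\p)$ and $A$ the angle at $\x$; since $bc\cos A=\la\invexp_{\x}\y,\invexp_{\x}\p\ra=-\la\nabla f(\x),\invexp_{\x}\y\ra$, the inequality $a^2\leq\zeta(\kappa,c)b^2+c^2-2bc\cos A$ divided by two reads
\begin{equation*}
f(\y)\leq f(\x)+\la\nabla f(\x),\invexp_{\x}\y\ra+\frac{\zeta(\kappa,c)}{2}d(\x,\y)^2,
\end{equation*}
and $\zeta(\kappa,c)\leq\zeta(\kappa,D)=\zeta$ because $c\leq D$ and $t\coth t$ is nondecreasing (your own monotonicity fact); this bypasses the Hessian computation entirely. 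Finally, the caveat you flag is real and deserves emphasis: with only a lower curvature bound, $d(\cdot,\p)$ is not smooth past the cut locus, so both your argument and the statement itself implicitly require the Hadamard setting (Assumption \ref{ass:hada}) in which the paper invokes the lemma; restricting to that setting, as you do, is the correct reading.
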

\begin{lemma}\label{lem:stronggsc}
    Suppose $f_i(\x)$ is $\alpha$-strongly gsc-convex for any $i=1,\dots,N$, then $\sum_{i=1}^Nf_i(\x)$ is $\alpha N$-strongly gsc-convex.
\end{lemma}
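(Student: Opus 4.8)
The plan is to prove this directly from the defining inequality \eqref{eq:gsc-convex} of strong geodesic convexity by summing it over the index $i$. Since each $f_i$ is $\af$-strongly gsc-convex, for any fixed pair $\x,\y\in\M$ we have
\[
f_i(\y)\geq f_i(\x)+\la\nabla f_i(\x),\invexp_{\x}\y\ra+\frac{\af}{2}d(\x,\y)^2,\qquad i=1,\dots,N.
\]
First I would add these $N$ inequalities, which preserves the direction of the inequality since all terms are summed with the same sign.

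The next step is to simplify the right-hand side using linearity. The Riemannian gradient is a linear operation on the space of smooth functions: because $\nabla g$ is determined by the differential $dg$ through the relation $\la\nabla g(\x),\v\ra=dg_{\x}(\v)$, and both $d(\cdot)$ and the metric $\la\cdot,\cdot\ra_{\x}$ are linear, we have $\nabla\big(\sum_{i=1}^N f_i\big)(\x)=\sum_{i=1}^N\nabla f_i(\x)$. Combined with the linearity of the inner product in its first argument, this gives
\[
\sum_{i=1}^N\la\nabla f_i(\x),\invexp_{\x}\y\ra=\Big\langle\sum_{i=1}^N\nabla f_i(\x),\invexp_{\x}\y\Big\rangle=\la\nabla F(\x),\invexp_{\x}\y\ra,
\]
where $F\coloneqq\sum_{i=1}^N f_i$. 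Meanwhile the curvature-type terms collect trivially as $\sum_{i=1}^N\frac{\af}{2}d(\x,\y)^2=\frac{\af N}{2}d(\x,\y)^2$, since $d(\x,\y)^2$ does not depend on $i$.

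Putting these together yields
\[
F(\y)\geq F(\x)+\la\nabla F(\x),\invexp_{\x}\y\ra+\frac{\af N}{2}d(\x,\y)^2,
\]
which is precisely the statement that $F$ is $\af N$-strongly gsc-convex, as $\x,\y$ were arbitrary. There is no genuine obstacle in this argument; the only point requiring a word of justification is the linearity of the Riemannian gradient, which I would note follows immediately from its metric definition and the linearity of the differential. Everything else is a direct term-by-term summation of the defining inequality.
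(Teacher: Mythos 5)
Your proof is correct and follows essentially the same route as the paper's: sum the defining inequality \eqref{eq:gsc-convex} over $i$, pull the sum inside the inner product, and collect the quadratic terms. The only difference is that you explicitly justify the linearity of the Riemannian gradient (i.e., $\nabla\sum_i f_i=\sum_i\nabla f_i$), a point the paper leaves implicit, which is a minor but welcome addition.
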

\begin{proof}
    By the $\af$-strongly gsc-convexity of each $f_i(\x)$,
    \[
    f_i(\y)\geq f_i(\x)+\la\nabla f_i(\x),\invexp_{\x}(\y)\ra+\frac{\af}{2}d(\x,\y)^2.
    \]
    Summing from $i=1$ to $N$, we have
    \begin{equation}
        \begin{split}          \sum_{i=1}^Nf_i(\y)\geq&\sum_{i=1}^Nf_i(\x)+\sum_{i=1}^N\la \nabla f_i(\x),\invexp_{\x}\y\ra+\sum_{i=1}^N\frac{\alpha}{2}d(\x,\y)^2\\
        =&\sum_{i=1}^Nf_i(\x)+\la \sum_{i=1}^N\nabla f_i(\x),\invexp_{\x}\y\ra+\frac{1}{2}\lt(\af N\rt)d(\x,\y)^2.
        \end{split}
    \end{equation}
    This indeed implies $\sum_{i=1}^Nf_i(\y)$ is $\af N$-strongly gsc-convex.
\end{proof}
\begin{lemma}
\label{lem:cos1}
\citep[Lemma 5]{zhang2016first}. Let $\mathcal{M}$ be a Riemannian manifold with sectional curvature lower bounded by $\kappa \leq 0$. Consider a geodesic triangle fully lies within $\M$ with side lengths $a, b, c$, we have
\[
a^{2} \leq {\zeta}(\kappa, c) b^{2}+c^{2}-2 b c \cos A
\]
where ${\zeta}(\kappa, c):=\sqrt{-\kappa} c \operatorname{coth}(\sqrt{-\kappa} c)$.
\end{lemma}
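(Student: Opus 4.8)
The plan is to recognize Lemma \ref{lem:cos1} as the integrated, triangle-level consequence of the gsc-smoothness of the squared-distance function, which is already available to us as Lemma \ref{lem:smooth}. First I would fix notation for the triangle so that the algebra lines up with the law-of-cosines form: let $\x$ be the vertex carrying the angle $A$, let $\p$ and $\y$ be the other two vertices, and set $c=d(\x,\p)$, $b=d(\x,\y)$, $a=d(\p,\y)$. With this labelling the two sides meeting at $\x$ have lengths $c$ and $b$, the side opposite $A$ has length $a$, and --- crucially --- the Riemannian angle at $\x$ between the initial tangents $\invexp_{\x}\p$ and $\invexp_{\x}\y$ is exactly $A$, so that $\la \invexp_{\x}\p,\invexp_{\x}\y\ra = bc\cos A$ directly from the definition of the inner product and the fact that $\|\invexp_{\x}\p\|=c$, $\|\invexp_{\x}\y\|=b$.

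Next I would introduce $\phi(\cdot)\coloneqq\frac12 d(\cdot,\p)^2$, whose Riemannian gradient is $\nabla\phi(\x)=-\invexp_{\x}\p$. By Lemma \ref{lem:smooth}, $\phi$ is $\zeta$-gsc-smooth on the geodesic ball of radius $D$ about $\p$, with $\zeta=\sqrt{-\k}\,D\coth(\sqrt{-\k}\,D)$, whenever $D$ is large enough to contain the triangle. Applying the gsc-smoothness upper bound from $\x$ to $\y$ gives
\[
\phi(\y)\le \phi(\x)+\la\nabla\phi(\x),\invexp_{\x}\y\ra+\frac{\zeta}{2}\,d(\x,\y)^2 .
\]
Substituting $\phi(\y)=\frac12 a^2$, $\phi(\x)=\frac12 c^2$, $\la\nabla\phi(\x),\invexp_{\x}\y\ra=-\la\invexp_{\x}\p,\invexp_{\x}\y\ra=-bc\cos A$, and $d(\x,\y)=b$, and then multiplying through by $2$, yields exactly $a^{2}\le \zeta b^{2}+c^{2}-2bc\cos A$, which is the claim.

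The only genuine subtlety --- and the step I would be most careful about --- is matching the constant $\zeta(\k,c)$ in the statement against the constant $\zeta(\k,D)$ delivered by Lemma \ref{lem:smooth} for a ball of radius $D$. Since $\phi$ is evaluated at $\x$ and $\y$, lying at distances $c$ and $a$ from the centre $\p$, the honest radius is $D=\max\{a,c\}$; one then invokes monotonicity of $x\mapsto\sqrt{-\k}\,x\coth(\sqrt{-\k}\,x)$ (nondecreasing, as one checks in the same spirit as Lemma \ref{lem:sinh}) to state the bound with whatever distance serves as the diameter control in each application. If instead one wishes to avoid citing Lemma \ref{lem:smooth} and argue from scratch, the real work is the Hessian comparison $\rh\,\phi\preceq \zeta\, I$ under the lower curvature bound $\sec\ge\k$: this is obtained from the Jacobi-field estimate of Lemma \ref{lem:jct} together with Definition \ref{def:comp}, by bounding tangential Jacobi fields along the radial geodesics emanating from $\p$, after which a second-order Taylor expansion of $t\mapsto\phi(\gamma(t))$ along the side opposite to $A$ reproduces the three terms above (the first-order data giving precisely $c^{2}-2bc\cos A$, the second-order remainder absorbed into $\zeta b^{2}$). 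I expect this Jacobi/Hessian comparison to be the main obstacle in a self-contained treatment, whereas through Lemma \ref{lem:smooth} the result is essentially immediate.
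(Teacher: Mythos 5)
Your algebra is correct as far as it goes (the identification $\la\invexp_{\x}\p,\invexp_{\x}\y\ra=bc\cos A$, $\nabla\phi(\x)=-\invexp_{\x}\p$ for $\phi=\frac12 d(\cdot,\p)^2$, and the substitution into the smoothness inequality), but the constant you obtain is not the one in the lemma, and the discrepancy goes in the wrong direction. Lemma \ref{lem:smooth} gives $\zeta$-gsc-smoothness only on a ball $\{\u:d(\u,\p)\le D\}$ that must contain the whole geodesic from $\x$ to $\y$, so the constant your argument actually delivers is $\zeta(\k,D)$ with $D\ge\max\{a,c\}$; the same is true of the from-scratch Hessian route, since $\rh\,\phi$ at $\gamma(t)$ is only bounded by $\zeta(\k,d(\gamma(t),\p))$ and $d(\gamma(t),\p)$ ranges up to $\max\{a,c\}$. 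The lemma, however, asserts the constant $\zeta(\k,c)$, depending \emph{only} on the side $c$ adjacent to the angle $A$, with no hypothesis on $a$ at all. Monotonicity of $x\mapsto\sqrt{-\k}\,x\coth(\sqrt{-\k}\,x)$ gives $\zeta(\k,c)\le\zeta(\k,\max\{a,c\})$, i.e.\ your inequality is strictly weaker whenever $a>c$, and no monotonicity argument can upgrade it. This matters for how the paper uses the lemma: in Lemma \ref{lem:pull_ip} and in Equation \eqref{eq:lem1}, the side $c$ is a distance to a comparator, bounded by $2R$, while $a$ is the \emph{post-step} distance — exactly the quantity being controlled, with no a priori bound — so those proofs genuinely need the distortion to depend on $c$ alone. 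In short, Lemma \ref{lem:smooth} is the easy corollary of Lemma \ref{lem:cos1} obtained by relaxing $\zeta(\k,c)$ to a ball-radius constant; you cannot recover the sharper statement from its own relaxation, and attempting to do so also risks circularity, since the smoothness of the squared distance is standardly deduced from this trigonometric inequality rather than the other way around.

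For comparison, the paper does not prove the lemma; it imports it from \citet[Lemma 5]{zhang2016first}, whose argument is of a different nature: by Toponogov's comparison theorem one may assume the triangle lies in the model space of constant curvature $\k$, and the bound is then extracted from the hyperbolic law of cosines $\cosh(\sqrt{-\k}\,a)=\cosh(\sqrt{-\k}\,b)\cosh(\sqrt{-\k}\,c)-\sinh(\sqrt{-\k}\,b)\sinh(\sqrt{-\k}\,c)\cos A$ by a scalar estimate; that is precisely where the asymmetric constant $\zeta(\k,c)$ multiplying $b^2$ comes from. That comparison-geometry proof also works in the generality in which the lemma is stated (curvature bounded below, no Hadamard assumption), whereas your route additionally presupposes differentiability of $d(\cdot,\p)^2$ along the $\x$-to-$\y$ geodesic (false past the cut locus) and convexity of $d(\cdot,\p)$ along it (used to bound $D$ by $\max\{a,c\}$), neither of which is available outside the Hadamard setting.
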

\begin{lemma}
\label{lem:cos2}
\citep[Prop. 4.5]{sakai1996riemannian}
Let $\mathcal{M}$ be a Riemannian manifold with sectional curvature upper bounded by $\kappa\leq 0$. Consider $\N$, a gsc-convex subset of $\M$ with diameter $D$.
For a geodesic triangle fully lies within $\N$ with side lengths $a, b, c$, we have

\[
a^{2} \geq b^{2}+c^{2}-2 b c \cos A.
\]
\end{lemma}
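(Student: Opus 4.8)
The plan is to reduce the statement to an \emph{angle comparison} and then invoke the ordinary Euclidean law of cosines. Label the geodesic triangle $\Delta(\p,\q,\r)\subseteq\N$ so that $A$ is the (Riemannian) angle at $\p$, with adjacent sides $b=d(\p,\q)$, $c=d(\p,\r)$ and opposite side $a=d(\q,\r)$. First I would build the Euclidean comparison triangle $\Delta(\bp,\bq,\bar{\r})$ having the same three side lengths (it exists since $a,b,c$ satisfy the triangle inequality), and let $\bar A$ be its angle at $\bp$. The flat law of cosines gives, \emph{exactly},
\[
a^2=b^2+c^2-2bc\cos\bar A .
\]
Since $\cos$ is decreasing on $[0,\pi]$, the target inequality $a^2\ge b^2+c^2-2bc\cos A$ is equivalent to the single claim $\cos A\ge\cos\bar A$, i.e.\ $A\le\bar A$. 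So the whole proof collapses to showing that, under an \emph{upper} curvature bound, angles are no larger than their Euclidean comparison angles.

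To establish $A\le\bar A$ I would use the $\mathrm{CAT}(0)$ distance comparison of Lemma~\ref{lem:comp}. Pick $\x\in[\p,\q]$ and $\y\in[\p,\r]$ with $d(\p,\x)=s$, $d(\p,\y)=t$, and let $\bx\in[\bp,\bq]$, $\by\in[\bp,\bar{\r}]$ be the corresponding points at the same distances from $\bp$. Lemma~\ref{lem:comp} yields $d(\x,\y)\le d_{\E}(\bx,\by)$. In the flat triangle $\bx,\by$ lie on the two rays from $\bp$ enclosing angle $\bar A$, so $d_{\E}(\bx,\by)^2=s^2+t^2-2st\cos\bar A$ holds identically in $s,t$. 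Recalling that the Riemannian angle is the inner-product angle of the initial velocities and carries the first-variation expression $\cos A=\lim_{s,t\to0^+}\frac{s^2+t^2-d(\x,\y)^2}{2st}$, the bound $d(\x,\y)^2\le d_{\E}(\bx,\by)^2$ gives
\[
\cos A=\lim_{s,t\to0^+}\frac{s^2+t^2-d(\x,\y)^2}{2st}\ \ge\ \frac{s^2+t^2-d_{\E}(\bx,\by)^2}{2st}=\cos\bar A ,
\]
which is exactly $A\le\bar A$. Combining this with the flat law of cosines closes the argument.

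The one point needing care — and the main obstacle — is the applicability of Lemma~\ref{lem:comp}, stated for a \emph{global} Hadamard manifold, to the present hypothesis of merely an upper bound $\sec\le\kappa\le0$ on a geodesically convex set $\N$. I would resolve this by noting that the only feature of Lemma~\ref{lem:comp} actually invoked is the $\mathrm{CAT}(0)$ distance inequality, and that this holds on any such convex $\N$: triangles in $\N$ are uniquely geodesic, and the inequality is precisely the integrated form of the Jacobi-field lower bound of item~1 of Lemma~\ref{lem:jct}. Indeed, with $\kappa\le0$ one has $\max\{0,\kappa\}=0$ and $\mathbf{s}(0,t)=t$, so $\|J(t)\|\ge t\,\|D_tJ(0)\|$; geodesics emanating from a point of $\N$ therefore spread at least at the Euclidean rate, which is exactly what forces corresponding chords to be no shorter than their Euclidean counterparts (this is the content one would otherwise cite from Rauch's comparison theorem). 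With this justification the reduction above applies verbatim, completing the proof.
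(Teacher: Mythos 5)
First, a point of reference: the paper never proves this lemma --- it is imported wholesale from \citet[Prop.~4.5]{sakai1996riemannian} --- so your proposal has to be judged against the standard textbook argument, which is a direct application of Rauch comparison to the hinge at the vertex. Your first three steps are correct and standard: the exact Euclidean law of cosines for the same-side-lengths comparison triangle, the reduction of the lemma to the angle comparison $A\le\bar A$, and the derivation of $A\le\bar A$ from the distance comparison of Lemma~\ref{lem:comp} via the first-variation limit. The genuine gap sits in your last paragraph, exactly the step you flag as the main obstacle. What item~1 of Lemma~\ref{lem:jct} integrates to is that $\expmap_{\p}$ is length-nondecreasing, hence for unit vectors $\u,\v\in T_{\p}\M$ making Riemannian angle $A$,
\[
d\lt(\expmap_{\p}(s\u),\expmap_{\p}(t\v)\rt)\;\geq\;\|s\u-t\v\|\;=\;\sqrt{s^2+t^2-2st\cos A},
\]
i.e.\ chords are no shorter than the chords of the Euclidean hinge with the \emph{same angle} and the same two sides. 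That is literally the ``no shorter than their Euclidean counterparts'' fact you cite, but it is the opposite inequality, in the opposite comparison configuration, from Lemma~\ref{lem:comp}: that lemma compares against the Euclidean triangle with the same \emph{three side lengths} and asserts $d(\x,\y)\leq d_{\E}(\bx,\by)$; under an \emph{upper} curvature bound the ``$\geq$'' version of that same-side-lengths comparison is false in general (triangles are thin, not fat --- fatness is what a lower curvature bound buys). The two statements are reconciled only through the angle comparison $A\le\bar A$ itself, so passing from the Jacobi bound to the CAT(0) inequality of Lemma~\ref{lem:comp} is genuinely nontrivial: the usual derivation goes through the hinge inequality plus Alexandrov's lemma --- circular here, since the hinge inequality \emph{is} the statement being proved --- and the non-circular alternative (Hessian comparison showing $\x\mapsto\frac{1}{2}d(\p,\x)^2$ is $1$-strongly gsc-convex when the curvature is $\leq 0$, which gives the midpoint/CN inequality and hence CAT(0)) appears nowhere in your sketch. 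As written, your justification for invoking Lemma~\ref{lem:comp} on $\N$ either asserts a false comparison or assumes the conclusion.

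The irony is that the correct Rauch fact you reach for at the end already finishes the proof in one line, with no comparison triangle at all: writing $\q=\expmap_{\p}(b\u)$ and $\r=\expmap_{\p}(c\v)$, the displayed inequality with $s=b$, $t=c$ reads $a=d(\q,\r)\geq\sqrt{b^2+c^2-2bc\cos A}$, which is the lemma. So the repair is also the simplification: prove length-nondecrease of $\expmap_{\p}$ on $\N$ from Lemma~\ref{lem:jct} (here you do need that the curvature bound $\leq 0$ excludes conjugate points and that $\N$ is uniquely geodesic, so the minimizing geodesic from $\q$ to $\r$ stays in $\N$ and lifts under $\expmap_{\p}$), and conclude directly --- this hinge argument is essentially Sakai's own proof, and it drops the detour through Lemma~\ref{lem:comp} and the angle comparison entirely.
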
   

\subsection{Extension to CAT\texorpdfstring{$(\kappa)$}{a} Spaces}
\label{app:ext}
It is not too difficult to generalize the result in this work to CAT$(\kappa)$ spaces, which are simply connected manifolds with sectional curvature upper bounded by $\kappa$. To achieve this extension, the following adjustments would be necessary in our manuscript:
\begin{itemize}[leftmargin=*]
    \item  We may assume that the sectional curvature lies in the range $[\kappa,K]$. In this context, we can replace Lemma \ref{lem:cos2} in our draft with \citet[Corollary 2.1]{alimisis2020continuous}. When $K>0$, we must further assume that the diameter of the decision set is upper-bounded by $\frac{\pi}{\sqrt{K}}$.
    \item The separation theorem, specifically for Hadamard manifolds as outlined in \citet{silva2022fenchel}, would require generalization to CAT$(\kappa)$ spaces. This change ensures that the separation oracle remains well-defined.
    \item Lemma \ref{lem:ballso} requires modification to accommodate the recomputation of the Jacobi field, considering the manifold's sectional curvature. Lemmas \ref{lem:jct} and \ref{lem:ball} will be instrumental in achieving this.    
    
\end{itemize}
\end{appendix}

\end{document}